\documentclass[twoside]{article}
\usepackage[preprint]{aistats2026}

\usepackage[utf8]{inputenc} 
\usepackage[T1]{fontenc}    
\usepackage{hyperref}      
\usepackage{url}      
\usepackage{booktabs}     
\usepackage{amsfonts}       
\usepackage{nicefrac}       
\usepackage{microtype}      
\usepackage{xcolor}       
\usepackage{acronym,amsmath,graphicx,subfigure,algorithm,algorithmic,amssymb,mathtools,amsthm,wrapfig}
\usepackage{mdframed}
\usepackage{subcaption}

\usepackage[american]{babel}

\usepackage{natbib} 
\usepackage{mathtools} 
\usepackage{booktabs} 
\usepackage{tikz}

\newtheorem{theorem}{Theorem}[section]
\newtheorem{proposition}[theorem]{Proposition}
\newtheorem{lemma}[theorem]{Lemma}
\newtheorem{corollary}[theorem]{Corollary}
\newtheorem{definition}[theorem]{Definition}
\newtheorem{assumption}[theorem]{Assumption}
\newtheorem{remark}[theorem]{Remark}
\newtheorem{condition}[theorem]{Condition}

\acrodef{cnn}[CNN]{Convolutional Neural Network}
\acrodef{dp}[DP]{Differential Privacy}
\acrodef{dpl}[DPL]{differentially private learning}
\acrodef{dpsgd}[DP-SGD]{Differentially Private stochastic gradient descent}
\acrodef{fnr}[FNR]{Feature-to-Noise Ratio}
\acrodef{fl}[FL]{Federated Learning}
\acrodef{iid}[IID]{Independent and Identically Distributed}

\acrodef{non-iid}[non-IID]{Not Independent and Identically Distributed}
\acrodef{ntk}[NTK]{neural tangent kernel}
\acrodef{sgd}[SGD]{Stochastic Gradient Descent}

\begin{document}
\twocolumn[

\aistatstitle{Differential Privacy in Two-Layer Networks: How DP-SGD Harms Fairness and Robustness}

\aistatsauthor{ Ruichen Xu \And Kexin Chen}

\aistatsaddress{ rcxu642@gmail.com \And kxchen9819@gmail.com} ]

\begin{abstract}
Differentially private learning is essential for training models on sensitive data, but empirical studies consistently show that it can degrade performance, introduce fairness issues like disparate impact, and reduce adversarial robustness. The theoretical underpinnings of these phenomena in modern, non-convex neural networks remain largely unexplored. This paper introduces a unified feature-centric framework to analyze the feature learning dynamics of differentially private stochastic gradient descent (DP-SGD) in two-layer ReLU convolutional neural networks. Our analysis establishes test loss bounds governed by a crucial metric: the feature-to-noise ratio (FNR). We demonstrate that the noise required for privacy leads to suboptimal feature learning, and specifically show that: 1) imbalanced FNRs across classes and subpopulations cause disparate impact; 2) even in the same class, noise has a greater negative impact on semantically long-tailed data; and 3) noise injection exacerbates vulnerability to adversarial attacks. Furthermore, our analysis reveals that the popular paradigm of public pre-training and private fine-tuning does not guarantee improvement, particularly under significant feature distribution shifts between datasets. Experiments on synthetic and real-world data corroborate our theoretical findings.
\end{abstract}

\section{Introduction}
Modern deep learning models have demonstrated remarkable efficacy across diverse applications, including image classification \citep{he2022masked} and natural language processing \citep{vaswani2017attention}.
However, the efficacy of many deep learning applications relies heavily on datasets containing sensitive private information.
To address the inherent privacy risks, differentially private learning has emerged to train models while ensuring rigorous privacy guarantees \citep{abadi2016deep}.
A standard algorithm in this domain is differentially private stochastic gradient descent (\ac{dpsgd}) \citep{abadi2016deep}, which preserves privacy by injecting noise into network parameter updates during optimization.

Despite its strong privacy guarantees, \ac{dpsgd} introduces several notable side effects: 
\textit{1) Bad learned features} \citep{tramer2020differentially}: \ac{dpsgd} trained models may learn suboptimal features, potentially inferior to handcrafted alternatives, leading to reduced performance;
\textit{2) Disparate impact} \citep{bagdasaryan2019differential,sanyal2022unfair}: \ac{dpsgd} trained models exhibit different accuracy across different classes and subpopulation groups;
and \textit{3) Worse adversarial robustness} \citep{tursynbek2020robustness}: These models may be more vulnerable to adversarial perturbations than non-private models. 

Recent work such as \citep{de2022unlocking} demonstrated that public pretraining can significantly improve \ac{dpsgd}-trained model accuracy by 30\% compared to training from scratch, suggesting a potential avenue for mitigating these side effects.

While several pioneering studies have investigated the side effects of \ac{dpsgd} \citep{esipova2022disparate, bagdasaryan2019differential}, their analyses often require structural assumptions—such as convexity or strict smoothness. While these works provide valuable insights into specific aspects of the observed side effects, extending these classical frameworks to fully capture the non-linear dynamics of modern neural network architectures remains an open challenge. In this paper, we seek to complement these foundational efforts by identifying the root cause of these side effects within a unified framework. We argue that a feature-level analysis of the learning process offers a natural path toward such unification, concurrently illuminating the mitigating role of public pretraining \citep{tramer2022position}.
Consequently, our work is motivated by the following problem:

\begin{center}% \textit{How do ReLU neural networks learn features by \ac{dpsgd}}?
	\emph{How to theoretically explain the aforementioned phenomena in \ac{dpsgd} trained ReLU neural networks within a unified framework?}
\end{center}

This paper advances the state of the art in the following ways:

\textbf{A feature learning framework for \ac{dpsgd} trained two-layer ReLU \acp{cnn}.}
We introduce a unified framework to elucidate the previously discussed side effects observed in \ac{dpsgd}-trained two-layer ReLU \acp{cnn} by exploring the feature learning process.
Given the technical challenges posed by non-convex and non-smooth ReLU \ac{cnn} and the random \ac{dp} noise, we develop a new proof technique to derive bounds for both standard and adversarial test loss.
The high-level idea is to approximate the non-linear loss function with a piecewise linear function. 
We theoretically prove that the upper and lower bounds on test loss depend on data feature size and \ac{dpsgd} noise.

\textbf{Theoretical explanations for \ac{dpsgd} induced phenomena.}
Our framework provides formal theoretical explanations for the causes of side effects and for the effectiveness of public pretraining in \ac{dpsgd} trained neural networks, using a metric, the \emph{feature-to-noise ratio (FNR)}.
In more detail,
\textit{1)} \textit{Disparate impact} in \ac{dpsgd} trained models results from an imbalanced FNRs.
Even within the same class, semantically long-tailed data with weak features is more prone to misclassification.
\textit{2)} The \ac{dpsgd} trained models exhibit \textit{worse adversarial robustness} because they learn non-robust, class-irrelevant features from the random \ac{dp} noise, which introduces an additional large error proportional to $\sqrt{Td}\sigma_n$ after $T$ iterations' training with noise standard deviation $\sigma_n$.
\textit{3)} The performance of private finetuning on \textit{publicly pretrained networks} decreases as the feature difference between pretraining and finetuning datasets increases.
This implies that public pretraining is not a panacea for mitigating the side effects of \ac{dpsgd}. 

Building upon our prior analysis, we explore strategies to improve the FNR. 
To this end, we introduce a stage-wise network freezing technique that effectively mitigates this issue and improves model performance.

\subsection{Related work}
In this section, we review related theoretical works on the side effects of differentially private learning.
Interested readers can refer to Appendix~\ref{appendix: related work} for a detailed discussion.

\textbf{Analysis of differentially private learning side effects.} 
Several studies have sought to provide theoretical explanations for the side effects associated with differentially private learning.
For example, \cite{tran2021differentially} employed Taylor expansion to investigate the disparate impact by analyzing the local loss landscape of twice differentiable loss functions during optimization.
\cite{sanyal2022unfair} studied unfairness in long-tailed data distributions in an asymptotic setting where the number of training samples tends to infinity.
\cite{zhang2022differentially} explored adversarial robustness in private linear classifiers.
\citet{wu2023augment} reveal that \ac{dpsgd} compromises the smoothness of the loss surface, necessitating specialized randomized smoothing techniques to restore certified adversarial robustness.
These analyses are often predicated on assumptions that do not readily extend to the non-convex and non-smooth nature of modern neural networks. 
Although recent theoretical analyses suggest that DP noise can improve generalization for two-layer networks in the high-dimensional regime \citep{shi2026towards}, our work highlights a complementary perspective: under moderate dimensions, DP noise can alter feature learning in ways that reduce fairness and robustness.

In contrast, this paper presents a unified framework that elucidates all the aforementioned phenomena in \ac{dpsgd}-trained two-layer ReLU \acp{cnn} by characterizing the underlying feature learning process.
\subsection{Notation}
We use lowercase letters, lowercase boldface letters, and uppercase boldface letters to denote scalars, vectors, and matrices, respectively. 
We use $[m]$ to denote the set $\{1,\cdots,m\}$.
Given two sequences $\{x_n\}$ and $\{y_n\}$, we denote $x_n = \mathcal{O}(y_n)$ if $|x_n|\le C_1|y_n|$ for some positive constant $C_1$ and $x_n = \Omega(y_n)$ if $|x_n|\ge C_2|y_n|$ for some positive constant $C_2$.
We use $x_n = \Theta(y_n)$ if both $x_n = \mathcal{O}(y_n)$ and $x_n = \Omega(y_n)$ hold.
We use $\tilde{\mathcal{O}}(\cdot), \tilde{\Theta}(\cdot), \tilde{\Omega}(\cdot)$ to omit logarithmic factors in these notations.
Given a set $\mathcal{T}$, we use $|\mathcal{T}|$ to denote its cardinality.
For a vector $\mathbf{x}\in \mathbb{R}^d$, we denote its $\ell_p (p\ge 1)$ norm as $\left\|\mathbf{x}\right\|_p = \left(\sum_{i=1}^{d}|x_i|^p\right)^{\frac{1}{p}}$.
The notation $(\mathbf{x},y)\sim\mathcal{D}$ indicates that the data sample $(\mathbf{x},y)$ is generated from a distribution $\mathcal{D}$.

\section{Model}\label{sec: model}
We analyze a one-hidden-layer \ac{cnn} trained on a structured data distribution, consistent with several prior studies  \citep{allen2020towards,jelassi2022towards,jelassi2022vision,li2023clean,zou2023benefits,huang2025does}.\footnote{For detailed experimental justifications of the data distribution, see \cite{allen2020towards}.}

\textbf{Data distribution.}
We consider a binary classification problem where each labeled sample $(\mathbf{x}, y)$ consists of a label $y \in \{1, 2\}$ and an input data vector $\mathbf{x} = (\mathbf{x}^{(1)}, \mathbf{x}^{(2)}) \in \mathbb{R}^{d \times 2}$, composed of two patches.
Each class $y$ is associated with two feature groups: majority (common) features $\mathbf{u}_{y, \text{maj}}$ and minority (rare) features $\mathbf{u}_{y, \text{min}}$. 
The data distribution $\mathcal{D}$ generates samples as follows:

\begin{mdframed}
    The label $y \in \{1, 2\}$ is sampled with a probability $p_c > 0$ for $y = 1$ and $1 - p_c$ for $y = 2$.

    Each patch $\mathbf{x}^{(1)}, \mathbf{x}^{(2)} \in \mathbb{R}^d$ is either a feature or noise patch:
    \begin{itemize}
    	\item Feature patch: One data patch ($\mathbf{x}^{(1)}$ or $\mathbf{x}^{(2)}$) is randomly selected as the feature patch. With a probability $p_f > 0.5$, it contains the majority features $\mathbf{u}_{y, \text{maj}}$; otherwise, it contains the minority features $\mathbf{u}_{y,\text{min}}$.
    	
    	\item Noise patch: The remaining patch $\boldsymbol{\xi}$ is generated from a Gaussian distribution $\mathcal{N}(0, \sigma_p^2 \mathbf{H})$, where $\mathbf{H} = \mathbf{I} - \sum_{i=1}^{2}\sum_{j\in\{\text{maj}, \text{min}\}}\mathbf{u}_{i,j}\mathbf{u}_{i,j}^\top \cdot\left\|\mathbf{u}_{i,j}\right\|_2^{-2}$.
    \end{itemize}
\end{mdframed}

Without loss of generality, we assume that the feature vectors are orthogonal, i.e., $\langle \mathbf{u}_{i,j}, \mathbf{u}_{i',j'} \rangle = 0$ for all $i, i' \in \{1,2\}$ and $ j, j'\in \{\text{maj}, \text{min}\}$ where $ (i,j) \ne (i',j')$.
Additionally, we assume that the majority and minority features satisfy $p_f\left\|\mathbf{u}_{i,\text{maj}}\right\|_2 > (1-p_f)\left\|\mathbf{u}_{i,\text{min}}\right\|_2$ for all $i\in\{1,2\}$.
The distributions $ \mathcal{D}_{i,j}$, for $i \in \{1,2\}$ and $ j\in \{\text{maj}, \text{min}\}$ is defined by the probability density:
\begin{align}\nonumber
	&\mathbb{P}_{\mathcal{D}_{i,j}}\!\left[(\mathbf{x},y)\right] \!=\! \mathbb
	{P}_{\mathcal{D}}\!\left[(\mathbf{x},y)| y=i, \text{Feature patch of } \mathbf{x} \!=\! \mathbf{u}_{i,j}\right].
\end{align}

\textbf{Learner model.}
We consider a two-layer \ac{cnn} with ReLU activation as the learner model.
The first layer consists of $m$ neurons (filters) for class $1$ and $m$ neurons for class $2$.
Each neuron processes the two input data patches separately.
The parameters of the second layer are fixed at ${1}/{m}$.
Given an input vector $\mathbf{x} = (\mathbf{x}^{(1)}, \mathbf{x}^{(2)})$, the model outputs a vector $[F_1, F_2]$, where the $k^{th}$ element is:
\begin{align}
	F_{k}\left(\mathbf{W},\mathbf{x}\right) = \frac{1}{m}\sum_{r=1}^{m}\sum_{j=1}^{2}\sigma\left(\left\langle \mathbf{w}_{k,r},\mathbf{x}^{(j)} \right\rangle\right),
\end{align}
where $\sigma(\cdot) = \max\{\cdot,0\}$ denotes the ReLU activation function, $\mathbf{W}$ represents the collection of all model weights, and $\mathbf{w}_{k,r}$ is the weight of the $r^{th}$ neuron associated with the $k^{th}$ output $F_k(\mathbf{W},\mathbf{x})$.

\begin{remark}
    The formulation in Eq. (1) explicitly captures the defining architectural constraints of a CNN: \textit{locality} and \textit{weight sharing}. Locality is enforced because kernels operate exclusively on individual local patches $\mathbf{x}^{(j)}$, extracting features that are invariant to distant content. Weight sharing is achieved by applying the identical kernel $\mathbf{w}_{k,r}$ to process both patches $\mathbf{x}^{(1)}$ and $\mathbf{x}^{(2)}$. This spatial reuse of parameters is mathematically equivalent to a discrete convolution with a stride equal to the patch size.
\end{remark}

\textbf{Training objective.}
Given a training dataset $\mathcal{S} = \{(\mathbf{x}_i,y_i)\}_{i=1}^n$ drawn from the distribution $\mathcal{D}$, we train the neural network by minimizing the empirical risk using the cross-entropy loss function:
\begin{equation}
	\begin{aligned}
		\mathcal{L}_\mathcal{S} =& \frac{1}{n}\sum_{i=1}^{n}\mathcal{L}(\mathbf{W},\mathbf{x}_i,y_i) = \frac{1}{n}\sum_{i=1}^{n} [-\log(\text{prob}_{y_i}(\mathbf{W},\mathbf{x}_i))],
	\end{aligned}
\end{equation}
where $\text{prob}(\cdot)$ denotes the softmax predictions with the output of the neural network.

\textbf{Differential privacy and training algorithm.}
\ac{dp} \citep{dwork2014algorithmic} (defined below) serves as a benchmark for quantifying privacy leakage, providing rigorous privacy guarantees.

\begin{definition}[$(\epsilon, \alpha)$-Differential privacy]\label{definition: dp}
	A randomized algorithm $\mathcal{M}: \mathcal{Z} \rightarrow \mathcal{R}$ is ($\epsilon,\alpha$)-\ac{dp} if, for every pair of neighboring datasets $Z, Z' \in \mathcal{Z}$ that differ by one entry, and for any subset of outputs $\mathcal{P}\subseteq\mathcal{R}$, the following holds: $\mathbb{P}\left[\mathcal{M}(Z)\in\mathcal{P}\right]\le e^{\epsilon} \mathbb{P}\left[\mathcal{M}(Z')\in\mathcal{P}\right]+\alpha$.
\end{definition}

\ac{dpsgd}, a standard and widely used training algorithm \citep{abadi2016deep} (details in Appendix~\ref{appendix: dpsgd}), adds privacy-preserving noise to the gradients during optimization and performs gradient clipping. The update rule for the model parameters is given by:
\begin{equation}
	\begin{aligned}    \label{equ: dp update}
		\mathbf{W}^{(t+1)} \!=& \mathbf{W}^{(t)}\! - \!\frac{\eta}{B}\cdot\!\sum_{(\mathbf{x},y)\in\mathcal{S}^{(t)}}\!\! \text{clip}_C\left(\nabla \mathcal{L}\left(\mathbf{W}^{(t)},\mathbf{x},y\right)\right)
		\\
        &+\! \eta\cdot\mathbf{n}^{(t)},  
	\end{aligned}
\end{equation}
where $\eta$ is the learning rate, $\mathcal{S}^{(t)}$ represents a mini-batch of size $B$ randomly selected at iteration $t$, $\mathbf{n}^{(t)}$ is the noise added for privacy protection, sampled from $\mathcal{N}(0, \sigma_n^2\mathbf{I})$, and $\text{clip}_C(\mathbf{x})$ is the gradient clipping function with a clipping threshold $C$ on vector $\mathbf{x}$:	$\text{clip}_C(\mathbf{x}) = \frac{\mathbf{x}}{\max\left\{1,\left\|\mathbf{x}\right\|_2/{C}\right\}}$. 
We initialize network parameters $\mathbf{W}^{(0)}$ with i.i.d. Gaussian distributions $\mathcal{N}(0,\sigma_0^2\mathbf{I})$.

\section{Test loss analysis}\label{sec: test loss}
In this section, we analyze the test loss of \ac{dpsgd}-trained \acp{cnn}.
This analysis serves as a foundational step for examining the impacts of DP-SGD in Section \ref{sec: implications}.

\subsection{Preliminary}
We begin by defining the standard test loss of a trained model on the data distribution $\mathcal{D}_{i,j}$, for any $i\in\{1, 2\}$ and $j\in\{\text{maj},\text{min}\}$ as follows: $\mathcal{L}_{\mathcal{D}_{i,j}}(\mathbf{W}) = \mathbb{E}_{(\mathbf{x},y)\sim \mathcal{D}_{i,j}}\left[\mathcal{L}(\mathbf{W},\mathbf{x},y)\right]$.
As is common in the literature (e.g., \citep{allen2019convergence,kou2023benign}), our analysis relies on the following conditions and assumptions:

\begin{condition}\label{condition}
	Suppose that there exists positive constants $0<c_1\le1$, and sufficiently large constants $c_2,c_3,c_4>0$. For certain probability parameter $\delta>0$, the following hold: \begin{enumerate}
     \item The data dimension satisfies $d\ge c_2\log(n/\delta)$.
     \item The batch size $B$ is proportional to the training dataset size $n$, specifically $B\ge c_1\cdot n$, enabling stochastic gradients to effectively leverage large datasets.
     \item The feature size, patch noise, and \ac{dpsgd} noise satisfy $\left\|\mathbf{u}_{i,j}\right\|_2 \ge c_3  \sigma_p \ge c_3\sigma_n$, for $i\in[2]$ and $ j\in\{\text{maj},\text{min}\}$.
     \item The learning rate satisfies $\eta \le \left(c_4(C+\sqrt{d}\sigma_n)(\max_{i,j}\left\|\mathbf{u}_{i,j}\right\|_2 + \sqrt{d}\sigma_p)\right)^{-1}$.
 \end{enumerate}
\end{condition}
The first condition on the data dimension guarantees that the squared norm of the noise concentrates around a scaling of $\sigma_p^2d$ with high probability.
The second condition of a lower bound on batch size ensures that the variance of the mini-batch gradient decreases with dataset size $n$, thereby preserving the classical $\mathcal{O}(1/\sqrt{n})$ generalization benefit even under DP noise.
The third condition on feature magnitude, patch noise, and DP noise guarantees that the signal-to-noise ratio remains sufficient for DP-SGD to keep reducing the training loss throughout optimization. These constraints are mild and hold in standard DP training regimes, where the privacy noise is not deliberately set to be overwhelming and is averaged over the batch size.
The fourth condition on learning rate ensures that each parameter update remains bounded with high probability, a requirement that naturally holds in practice.

\begin{assumption}[$s$-non-perfect model]\label{assumption: non-perfect}
	We assume that the model is almost surely not perfect on any test example, i.e., for some constant $s>0$,	$\mathcal{L}\left(\mathbf{W}^{(t)}, \mathbf{x},y\right) \ge s$, for all $(\mathbf{x},y) \sim \mathcal{D}$.
\end{assumption}
Assumption \ref{assumption: non-perfect} is a mild one, particularly given the inherent randomness introduced by \ac{dpsgd} in the model training process. 
Consequently, the trained model is stochastic and unlikely to achieve zero cross-entropy loss on any given test example.
Next, we define the following terms to facilitate our subsequent analyses:
\begin{definition}  \label{def: fnr}
    We define \acp{fnr}, clipping factor, and expected proportions as follows:
    \begin{enumerate}
        \item The feature-to-noise ratio for class $i$ in group $j$ is defined as: $\mathcal{F}_{i,j} = \frac{\left\|\mathbf{u}_{i,j}\right\|_2}{\sigma_n}$.

        \item The clipping factor for class $i$ in group $j$, which quantifies the maximum change in gradient magnitude for data from class $i$ and group $j$, is defined as: $\Lambda_{i,j} = \frac{C}{\left\|\mathbf{u}_{i,j}\right\|_2+\sigma_p\sqrt{d}}$.

        \item The expected proportion of class $i$ group $j$ data in the whole training dataset is defined as $\gamma_{i,j}$. Specifically, $\gamma_{1,\text{maj}} = p_cp_f, \gamma_{1,\text{min}} = p_c(1-p_f), \gamma_{2,\text{maj}} = (1-p_c)p_f, \gamma_{2,\text{min}} = (1-p_c)(1-p_f)$.
    \end{enumerate}
\end{definition}

\subsection{Standard test loss analysis}
Building upon the aforementioned conditions, assumption, and definitions, we characterize the upper bound for the test loss of \ac{dpsgd} trained models in Theorem \ref{theorem: convergence}.

\begin{theorem}\label{theorem: convergence}
	Under Condition \ref{condition} and Assumption \ref{assumption: non-perfect}, with a probability at least $1-\delta$, for any $i\in \{1,2\}, j\in \{\text{maj},\text{min}\}$, the test loss of a \ac{dpsgd} trained model satisfies:
			$\mathcal{L}_{\mathcal{D}_{i,j}}\!\left(\!\mathbf{W}^{(T)}\!\right)
			\le \bar{L}_{i,j}(\mathbf{W}^{(T)})$,
	where 
	\begin{equation}\label{equ: L bar}
		\begin{aligned}
			\bar{L}_{i,j}(\mathbf{W}^{(T)})\!=&\! 
            \underbrace{\exp\!\left(\!-\Omega\left(\!\frac{\Lambda_{i,j}\gamma_{i,j}\left\|\mathbf{u}_{i,j}\right\|_2^2}{m}T\right)\!\!\right)\!\mathcal{L}_{\mathcal{D}_{i,j}}\!\left(\!\mathbf{W}^{(0)}\!\right)}_\textnormal{Vanishing error}
            \\\!+&\!\underbrace{\mathcal{O}\!\left(\!\frac{1}{\sqrt{n}\gamma_{i,j}\Lambda_{i,j}}\!\right)}_\textnormal{Generalization error}\!+\!\underbrace{\mathcal{O}\!\left(\!\frac{m}{\Lambda_{i,j}\gamma_{i,j}\mathcal{F}_{i,j}}\!\right)}_\textnormal{Privacy protection error}\!.
		\end{aligned}
	\end{equation}
\end{theorem}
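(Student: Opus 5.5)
We plan to track the projections of the first-layer weights onto the feature directions. For class $i$ and group $j$, set $\Gamma^{(t)}_{i,j} = \frac{1}{m}\sum_{r=1}^m \sigma(\langle \mathbf{w}_{i,r}^{(t)}, \mathbf{u}_{i,j}\rangle)$. For a test point $(\mathbf{x},y)\sim\mathcal{D}_{i,j}$, the output margin $F_i - F_{i'}$ then splits into three parts. The first is this feature term. The second is the term from the Gaussian patch $\boldsymbol{\xi}$, which is orthogonal to all features because of $\mathbf{H}$. The third is the term from the opposite-class weights acting on $\mathbf{u}_{i,j}$.

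The cross-entropy loss $\log(1+e^{-z})$ is convex and decreasing. We will sandwich it by a piecewise linear surrogate; Assumption~\ref{assumption: non-perfect} guarantees the loss stays in a regime where it is comparable to $e^{-z}$. With this, it suffices to lower bound $\Gamma^{(T)}_{i,j}$ and to upper bound the nuisance contributions.

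\textbf{Step 1: the deterministic (clipped) gradient drift.} On $\mathcal{D}_{i,j}$ samples, the per-example gradient with respect to $\mathbf{w}_{i,r}$ equals $-(1-\mathrm{prob}_i)\,\frac{1}{m}\mathbb{1}[\cdot]\,\mathbf{x}^{(p)}$. Its norm is at most of order $\|\mathbf{u}_{i,j}\|_2+\sigma_p\sqrt d$ with high probability, using Condition~\ref{condition}.1 for concentration of $\|\boldsymbol{\xi}\|_2$. Clipping therefore rescales it by at least $\Lambda_{i,j}$ when $\Lambda_{i,j}\le 1$. A batch contains about $\gamma_{i,j}B$ such samples, and $B\ge c_1 n$ gives the relevant concentration. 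Hence the expected increment of $\langle \mathbf{w}_{i,r},\mathbf{u}_{i,j}\rangle$ per step is at least $\eta\,\Omega(\Lambda_{i,j}\gamma_{i,j}\|\mathbf{u}_{i,j}\|_2^2/m)\cdot \ell'_{i,j}$, where $\ell'_{i,j}$ is the average loss derivative on the group. Using the surrogate, $\ell'\gtrsim \mathcal{L}_{\mathcal{D}_{i,j}}$, and this gives the recursion
\[
\mathcal{L}^{(t+1)}_{i,j}\le \bigl(1-\Omega(\eta\Lambda_{i,j}\gamma_{i,j}\|\mathbf{u}_{i,j}\|_2^2/m)\bigr)\mathcal{L}^{(t)}_{i,j}+\text{(error terms)}.
\]
Unrolling this recursion yields the vanishing term. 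Condition~\ref{condition}.4 keeps each step small enough that the linearization of the surrogate is valid.

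\textbf{Step 2: the error terms.} \emph{(a) Generalization error.} The empirical average of the gradient differs from its population counterpart by $\mathcal{O}(1/\sqrt n)$, by Hoeffding or Bernstein applied to bounded clipped gradients and a union bound. Dividing by the contraction rate $\Lambda_{i,j}\gamma_{i,j}$ produces $\mathcal{O}(1/(\sqrt n\gamma_{i,j}\Lambda_{i,j}))$.

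\emph{(b) Privacy protection error.} The projection $\langle \eta\mathbf{n}^{(t)},\mathbf{u}_{i,j}\rangle$ is $\mathcal{N}(0,\eta^2\sigma_n^2\|\mathbf{u}_{i,j}\|_2^2)$ per step. At the fixed point of the recursion, the noise-induced loss perturbation is of order $\eta\sigma_n\|\mathbf{u}_{i,j}\|_2$ per step. Comparing this with the drift $\eta\Lambda_{i,j}\gamma_{i,j}\|\mathbf{u}_{i,j}\|_2^2/m$ gives the ratio $m\sigma_n/(\Lambda_{i,j}\gamma_{i,j}\|\mathbf{u}_{i,j}\|_2)=m/(\Lambda_{i,j}\gamma_{i,j}\mathcal{F}_{i,j})$. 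The noise components orthogonal to the features contribute to the $\boldsymbol{\xi}$ inner products. These are symmetric in expectation over the test $\boldsymbol{\xi}$, and Condition~\ref{condition}.3 lets them be absorbed into the same order. The same argument shows that the wrong-class weights' projection on $\mathbf{u}_{i,j}$ decreases, up to the same noise error, so it only helps.

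\textbf{Main obstacle.} The hardest step is making the noise analysis rigorous through the ReLU activation patterns. DP noise can flip neurons on or off, and the surrogate must control the loss uniformly along a random trajectory rather than only in expectation. We would first handle this with a martingale (Azuma/Freedman) bound on the accumulated noise projection $\sum_t\langle\mathbf{n}^{(t)},\mathbf{u}_{i,j}\rangle$, applied to the discounted sum produced by unrolling the contraction. We would then show that with probability $1-\delta$ a constant fraction of neurons stays activated on $\mathbf{u}_{i,j}$ throughout training. This uses monotone growth of the feature projection dominating the noise, by Condition~\ref{condition}.3, which allows the rate constant in Step 1 to be uniform in $t$.
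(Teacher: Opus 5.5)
Your plan is essentially the paper's proof. Like the paper, you decompose the per-step change of $F_{3-y}-F_y$ into feature and noise-patch contributions of both classes' neurons, lower-bound the clipping multiplier by $\Omega(\Lambda_{i,j})$, and combine a piecewise-linear bound on the cross-entropy increment with Assumption~\ref{assumption: non-perfect} to get a recursion with factor $1-\Omega(\eta\gamma_{i,j}\Lambda_{i,j}\|\mathbf{u}_{i,j}\|_2^2/m)$. Unrolling it divides each per-step error (batch sampling of order $1/\sqrt{n}$, DP noise of order $\eta\sigma_n\|\mathbf{u}_{i,j}\|_2$) by the contraction rate.

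You diverge only in your ``main obstacle'', which the paper treats more bluntly. It uses no martingale or symmetry argument; under a piecewise-linear surrogate, zero-mean noise produces a positive first-order term anyway. Instead, it bounds every DP-noise and noise-patch contribution in absolute value at each step, via half-normal concentration over the $m$ neurons and a union bound over $T$. It also takes at least $m/4$ active neurons from Lemma~\ref{lemma: num concentration}. For that you need no drift-dominates-noise claim, and Condition~\ref{condition} does not supply one: the per-step ratio is only about $\Lambda_{i,j}\gamma_{i,j}\mathcal{F}_{i,j}/m$. It suffices that the clipped drift of $\langle\mathbf{w}_{i,r},\mathbf{u}_{i,j}\rangle$ is always nonnegative, so each projection pathwise dominates an independent zero-mean Gaussian walk.
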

Due to the space limit, we provide a proof sketch in Appendix~\ref{app: sketch} and defer the proof of Theorem~\ref{theorem: convergence} to Appendix~\ref{appendix: theorem convergence proof}.
Given that the stochasticity of the noise can sometimes improve model performance, the high-probability test loss lower bound becomes less informative. 
Instead, we provide a lower bound for the expected test loss in the following theorem:

\begin{theorem}\label{theorem: lower bound}
	Under Condition \ref{condition} and Assumption \ref{assumption: non-perfect}, with the number of iterations $T\ge \Omega\left(-\frac{1}{\log\left(1-\Omega\left(\frac{\eta\min_{i,j}\{\gamma_{i,j}\left\|\mathbf{u}_{i,j}\right\|_2^2\}}{m}\right)\right)}\right)$ and a probability at least $1-\delta$, for any $i\in\{1,2\}$ and $j\in\{\text{maj},\text{min}\}$, the expected test loss of a \ac{dpsgd} trained model satisfies:
	\begin{equation}\nonumber
		\begin{aligned}
			\mathbb{E}\!\left[\mathcal{L}_{\mathcal{D}_{i,j}}(\mathbf{W}^{(T)})\right]\! \ge& \exp\!\left(\!-\Omega\left(\frac{\gamma_{i,j}\left\|\mathbf{u}_{i,j}\right\|_2^2}{m}T\right)\!\!\right)\!\mathcal{L}_{\mathcal{D}_{i,j}}(\mathbf{W}^{(0)})\!
			\\
            &+\! \underbrace{\Omega\!\left(\!\frac{d\sigma_p^2}{\gamma_{i,j}\mathcal{F}_{i,j}^2}\!\right)}_\textnormal{Privacy protection error}\! -\mathcal{O}\!\left(\!\frac{1}{\gamma_{i,j}}\sqrt{\frac{1}{n}}\right).
		\end{aligned}
	\end{equation} 
\end{theorem}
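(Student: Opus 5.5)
We will track the training through the same feature/noise decomposition used for Theorem~\ref{theorem: convergence}, but this time aim for a lower bound. Each neuron is split as
\[
\mathbf{w}_{k,r}^{(t)} = \mathbf{w}_{k,r}^{(0)} + \sum_{i,j}\gamma_{k,r,i,j}^{(t)}\,\frac{\mathbf{u}_{i,j}}{\|\mathbf{u}_{i,j}\|_2^2} + \sum_{\ell}\rho_{k,r,\ell}^{(t)}\,\frac{\boldsymbol{\xi}_\ell}{\|\boldsymbol{\xi}_\ell\|_2^2} + \eta\sum_{s<t}\mathbf{n}^{(s)}_{k,r}.
\]
The three corrections are the feature coefficients, the memorization of training patch noise, and the accumulated DP noise $\mathbf{N}^{(t)}_{k,r}$.

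The first step is to linearize the loss. On a test point $(\mathbf{x},y)\sim\mathcal{D}_{i,j}$ the cross-entropy is $\log(1+e^{-(F_y-F_{y'})})$. Assumption~\ref{assumption: non-perfect} keeps the margin bounded above. On that bounded-margin region the loss is convex and decreasing with derivative bounded away from zero, so it is bounded below by its tangent (piecewise linear) minorant. Jensen's inequality therefore lets us work with $\mathbb{E}[\mathcal{L}]\ge \ell(\mathbb{E}[\text{margin}])$ plus a curvature term. We keep that second-order (curvature) term rather than discarding it, because it is the source of the privacy error.

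The second step handles the feature growth. The clipped per-sample gradient along $\mathbf{u}_{i,j}$ has magnitude at most that of the unclipped one. Hence $\gamma^{(t)}_{y,r,i,j}$ increases by at most $\mathcal{O}(\eta\gamma_{i,j}\|\mathbf{u}_{i,j}\|_2^2/m)\cdot(-\ell'^{(t)})$ per step, up to a batch-sampling fluctuation of order $\mathcal{O}(1/\sqrt{n})$, controlled via $B\ge c_1 n$. Summing gives a recursion $L^{(t+1)}\ge(1-\Omega(\eta\gamma_{i,j}\|\mathbf{u}_{i,j}\|_2^2/m))L^{(t)}-\mathcal{O}(\eta/(\gamma_{i,j}\sqrt n))$. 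Unrolling it produces the vanishing term and the $-\mathcal{O}(\gamma_{i,j}^{-1}n^{-1/2})$ term. The hypothesis on $T$ ensures the geometric factor has essentially converged, so the stationary part dominates.

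The third step, which I expect to be the main obstacle, extracts the positive privacy term. The accumulated noise $\mathbf{N}^{(T)}_{k,r}$ is Gaussian with variance $\eta^2T\sigma_n^2$ per coordinate and is independent of the test noise patch $\boldsymbol\xi\sim\mathcal{N}(0,\sigma_p^2\mathbf{H})$. Consequently $\langle \mathbf{N}^{(T)}_{k,r},\boldsymbol\xi\rangle$ has variance of order $\eta^2 T\sigma_n^2\sigma_p^2 d$, and it enters the margin through the ReLU on the noise patch. Its contribution must be made to have nonvanishing expectation despite zero mean. We plan to use that $\mathbb{E}[\sigma(Z)]=\Omega(\mathrm{sd}(Z))$ for a centered Gaussian $Z$, applied to the wrong-class neurons. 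This raises $F_{y'}$ by $\Omega(\sigma_n\sigma_p\sqrt{dT}\eta)$ in expectation. The same quantity is only partly cancelled on the correct class, because convexity of the loss penalizes the variance.

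The balance between feature learning and noise then has to be made explicit. The feature signal grows only until the gradient factor $-\ell'$ is of order $\sigma_n^2/\|\mathbf{u}_{i,j}\|_2^2$ relative to the noise energy injected per step. Plugging this equilibrium scale into the tangent lower bound should yield the $\Omega\big(d\sigma_p^2/(\gamma_{i,j}\mathcal{F}_{i,j}^2)\big)$ term. If the direct expectation computation is too loose, the fallback is to lower-bound via the second-order term $\tfrac12\ell''\,\mathrm{Var}(\text{margin})$, using $\ell''\ge c(s)>0$ from Assumption~\ref{assumption: non-perfect}. Condition~\ref{condition} item 3 is what keeps the cross terms between $\mathbf{N}$, the initialization, and the features lower order.

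Finally, a union bound over the concentration events (patch norms, initialization inner products, batch sampling) yields the overall probability $1-\delta$.
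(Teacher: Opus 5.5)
Your step 2 matches the paper's treatment of the non-privacy terms. The contraction factor comes from bounding the correct-class feature update with $h(C,\mathbf{x},y)\le 1$ and $\sigma'\le 1$, and the $-\mathcal{O}(\gamma_{i,j}^{-1}n^{-1/2})$ term comes from batch concentration. In a lower-bound recursion that factor must read $1-\mathcal{O}(\cdot)$, not $1-\Omega(\cdot)$.

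The gap is the privacy term, which is the real content of the statement. The mechanism you lead with produces nothing. The accumulated DP noise in $\mathbf{w}_{y,r}$ and in $\mathbf{w}_{3-y,r}$ has the same law, and the fresh test patch $\boldsymbol\xi$ is orthogonal to the features. So the $\Omega(\mathrm{sd})$ rise of $\mathbb{E}[\sigma(\langle\mathbf{w}_{3-y,r},\boldsymbol\xi\rangle)]$ is matched by the same rise on the correct class, and the expected margin does not move at first order (up to small cross terms with memorized training patches). Only a second-order convexity effect survives, and your fallback for that effect does not close the argument:
\begin{enumerate}
\item $\ell''\ge c(s)$ does not follow from Assumption~\ref{assumption: non-perfect}. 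The assumption bounds the margin only from above, whereas the logistic curvature $e^{z}/(1+e^{z})^2$ vanishes as the margin $z\to-\infty$.
\item Lower-bounding the variance of the terminal margin requires $\langle\mathbf{N}^{(T)}_{k,r},\boldsymbol\xi\rangle$ to be decoupled from the rest of the margin. It is not: your feature and memorization coefficients are functions of the same past noise draws.
\item The ``equilibrium'' step has no mechanism behind it. The noise along $\boldsymbol\xi$ lives in directions the feature dynamics never act on, so nothing you have set up produces the factor $1/(\gamma_{i,j}\|\mathbf{u}_{i,j}\|_2^2)$.
\end{enumerate}

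The missing idea is to extract the privacy gain one step at a time, where independence is exact, and let the loss recursion do the accumulating. The paper's steps are as follows.
\begin{enumerate}
\item Conditioned on $\mathbf{W}^{(t)}$, write the increment exactly as $\log\bigl(1+(1-\mathrm{prob}_y)(e^{\Delta^{(t)}_{3-y}-\Delta^{(t)}_y}-1)\bigr)$.
\item Bound it below by $\Omega\bigl(e^{\Delta^{(t)}_{3-y}-\Delta^{(t)}_y}-1\bigr)$, using Lemma~\ref{lemma: log lower bound} and $1-\mathrm{prob}_y\ge 1-e^{-s}$.
\item Use the $A_1,\dots,A_4$ decomposition to get $\Delta^{(t)}_{3-y}-\Delta^{(t)}_y\ge -\frac{\eta\gamma_{i,j}}{m}\mathbb{E}[1-\mathrm{prob}_y]\,\|\mathbf{u}_{i,j}\|_2^2+Z^{(t)}-\tilde{\mathcal{O}}(\cdot/\sqrt{n})$. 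Here $Z^{(t)}=\frac{\eta}{m}\sum_r\bigl(\langle\mathbf{n}^{(t)}_{3-y,r},\mathbf{u}_{i,j}+\boldsymbol\xi\rangle-\langle\mathbf{n}^{(t)}_{y,r},\boldsymbol\xi\rangle\bigr)$ is a centered Gaussian in the fresh noise $\mathbf{n}^{(t)}$.
\item The Gaussian moment generating function $\mathbb{E}\,e^{Z^{(t)}}=e^{\mathrm{Var}(Z^{(t)})/2}$ then yields an additive gain $\tilde\Omega(\eta^2 d\sigma_n^2\sigma_p^2/m)$ at every step. That is, $\mathbb{E}[\mathcal{L}_{\mathcal{D}_{i,j}}(\mathbf{W}^{(t+1)})]\ge(1-\rho)\,\mathcal{L}_{\mathcal{D}_{i,j}}(\mathbf{W}^{(t)})+\tilde\Omega(\eta^2 d\sigma_n^2\sigma_p^2/m)-\tilde{\mathcal{O}}(\cdot/\sqrt{n})$ with $\rho=\mathcal{O}(\eta\gamma_{i,j}\|\mathbf{u}_{i,j}\|_2^2/m)$.
\item Unrolling multiplies the gain by $(1-(1-\rho)^T)/\rho$. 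The hypothesis on $T$ is exactly what makes $1-(1-\rho)^T=\Omega(1)$.
\end{enumerate}
The result is $\Omega(\eta d\sigma_n^2\sigma_p^2/(\gamma_{i,j}\|\mathbf{u}_{i,j}\|_2^2))$, the stated privacy term; the paper's last line keeps a factor $\eta$ that the statement suppresses.

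The ``equilibrium'' you were after is just the fixed point $\text{gain}/\rho$ of this recursion, and that ratio is where $1/\gamma_{i,j}$ and $1/\mathcal{F}_{i,j}^2$ come from. Your terminal variance $\eta^2T\sigma_n^2\sigma_p^2d/m$ could be matched to the same expression only through the hypothesis $T=\Omega(m/(\eta\gamma_{i,j}\|\mathbf{u}_{i,j}\|_2^2))$, not through any balance with feature growth. Even then, it works only after the decoupling problem above is resolved.
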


Theorems \ref{theorem: convergence} and \ref{theorem: lower bound} demonstrate that both the upper and lower bounds for the test loss are inversely related to the feature-to-noise ratio  $\mathcal{F}_{i,j}$.
Moreover, Theorem \ref{theorem: convergence} highlights the presence of non-vanishing error terms in the test loss bound: the \textit{generalization error} and \textit{the privacy protection error}. Specifically, 
\textbf{Generalization error} arises due to the noise present in the data patches.
This error decreases at a rate of $\mathcal{O}(\frac{1}{\sqrt{n}})$, which is consistent with generalization error bounds established for neural networks \citep{arora2019fine, xu2012robustness}. 
\textbf{Privacy protection error} originates from the \ac{dp} noise injected during training. For a fixed privacy budget ($(\epsilon,\alpha)$ in Definition \ref{definition: dp}), privacy composition theorems \citep{wang2019subsampled,mironov2017renyi} establish that the required noise variance scales as $\sigma_n^2 = \Theta(T)$ \citep{wang2019subsampled,mironov2017renyi}.
Consequently, the error introduced by privacy protection grows with the number of iterations at a rate of  $\mathcal{O}(\sqrt{T})$, aligning with the non-vanishing training error observed for \ac{dpsgd} on Lipschitz-smooth objectives \citep{bu2024automatic}.

\begin{figure}
    \centering
    \includegraphics[width=0.9\linewidth]{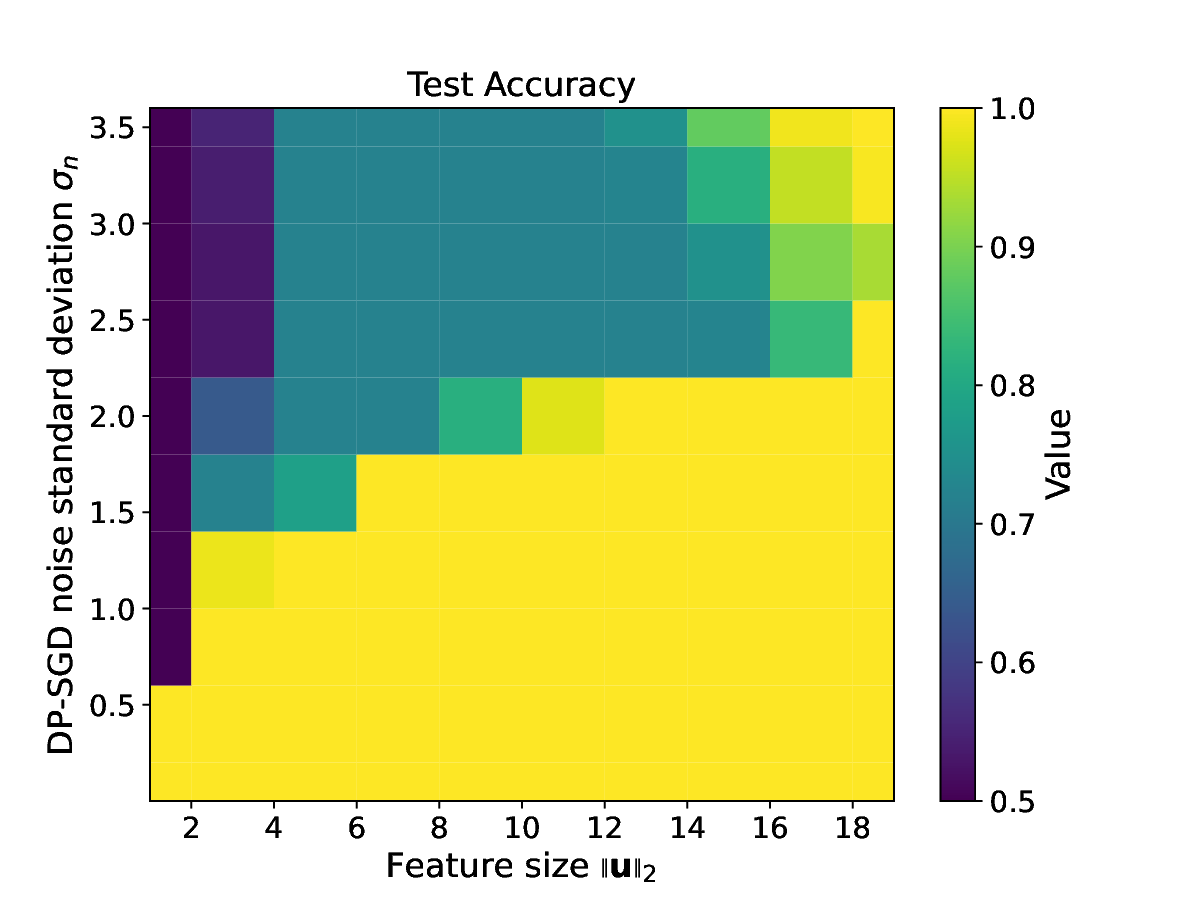}
	\caption{Illustration of the privacy-utility phase transition between benign and harmful privacy protection. 
        }
	\label{fig:phase}
\end{figure}

\textbf{Privacy-utility tradeoff.}
While the aforementioned theorems characterize a continuous degradation in the test loss as the privacy protection error accumulates, this gradual accumulation manifests quite differently when evaluating discrete classification performance. Due to the discontinuity of the 0-1 accuracy, our analysis reveals a sharp phase transition where privacy protection suddenly shifts from benign to harmful to model performance (see Figure \ref{fig:phase}). Simulation details are provided in Appendix~\ref{app: figure1}.

\subsection{Adversarial test loss analysis}\label{sec: adversarial test loss}
In this subsection, we analyze the impact of \ac{dpsgd} on adversarial robustness.
This analysis serves as a foundation for understanding the side effect of reduced adversarial robustness, which is further discussed in Section \ref{sec: implications}.

Adversarial robustness refers to a model's ability to maintain its predictive accuracy when subjected to carefully crafted input samples, commonly known as adversarial examples.

\begin{definition}[Adversarial example]
	For a given data example $(\mathbf{x},y)$, an adversarial example is $\tilde{\mathbf{x}} = \mathbf{x} + \boldsymbol{\zeta}$, where $\boldsymbol{\zeta} = \arg\max_{\left\|\boldsymbol{\zeta}\right\|_p \le \bar{\zeta}} \mathcal{L}\left(\mathbf{W},\mathbf{x} +\boldsymbol{\zeta}, y\right)$ is the adversarial perturbation and $\bar{\zeta}$ is the perturbation radius.
\end{definition}

To quantify the adversarial robustness of a trained model, we employ the adversarial test loss, defined as follows:	$\mathcal{L}^\text{adv}_\mathcal{D}(\mathbf{W}) = \mathbb{E}_{(\mathbf{x},y)\sim \mathcal{D}}\left[\max_{\left\|\boldsymbol{\zeta}\right\|_p \le \bar{\zeta} }\mathcal{L}(\mathbf{W},\mathbf{x}+\boldsymbol{\zeta},y)\right]$.

We present our main result on adversarial test loss in the following theorem.

\begin{theorem}\label{theorem: convergence_adv}
	Under Condition \ref{condition} and Assumption \ref{assumption: non-perfect}, with a probability at least $1-\delta$, for any $i \in \{1,2\}$ and $ j\in\{\text{maj},\text{min}\}$, the adversarial test loss of a \ac{dpsgd} trained model satisfies:
	\begin{equation}
		\begin{aligned}
			\mathcal{L}^\textnormal{adv}_{\mathcal{D}_{i,j}}\left(\mathbf{W}^{(T)}\right)\!\le&\bar{L}_{i,j}(\mathbf{W}^{(T)})\!\\
			&+\!\underbrace{\mathcal{O}\!\left(\left[\frac{T}{m} C\!+\!\frac{\sqrt{Td}}{m}\sigma_n\!+\!\sqrt{d}\sigma_0\right]\!\bar{\zeta} d^{1-\frac{1}{p}}\right)}_\textnormal{By adversarial perturbation}.
		\end{aligned}
	\end{equation}
\end{theorem}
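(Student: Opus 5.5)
The plan is to reduce the adversarial loss to the standard loss plus a Lipschitz-type perturbation term, then bound the Lipschitz constant by bounding the weight norms of $\mathbf{W}^{(T)}$ produced by DP-SGD.

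\textbf{Step 1: Lipschitz reduction.} The cross-entropy loss $\mathcal{L}(\mathbf{W},\mathbf{x},y)=-\log \text{prob}_y$ is $1$-Lipschitz in each logit difference. So for any $\boldsymbol{\zeta}$ with $\|\boldsymbol{\zeta}\|_p\le\bar\zeta$,
\[
\mathcal{L}(\mathbf{W},\mathbf{x}+\boldsymbol{\zeta},y)\le \mathcal{L}(\mathbf{W},\mathbf{x},y)+\sum_{k}\bigl|F_k(\mathbf{W},\mathbf{x}+\boldsymbol{\zeta})-F_k(\mathbf{W},\mathbf{x})\bigr| .
\]
Since ReLU is $1$-Lipschitz and the second layer weights are $1/m$, each $|F_k(\mathbf{x}+\boldsymbol{\zeta})-F_k(\mathbf{x})|\le \frac1m\sum_{r}\sum_{j}|\langle \mathbf{w}_{k,r},\boldsymbol{\zeta}^{(j)}\rangle|$. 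By H\"older, $|\langle \mathbf{w},\boldsymbol{\zeta}^{(j)}\rangle|\le \|\mathbf{w}\|_{q}\|\boldsymbol{\zeta}^{(j)}\|_p$ with $1/p+1/q=1$, and $\|\mathbf{w}\|_q\le d^{\max(0,1/q-1/2)}\|\mathbf{w}\|_2$. This yields at most $d^{1-1/p}\|\mathbf{w}\|_2\bar\zeta$ (up to the exact norm-comparison exponent, which is absorbed into the stated $d^{1-1/p}$). Taking expectation over $\mathcal{D}_{i,j}$, the first term is $\mathcal{L}_{\mathcal{D}_{i,j}}(\mathbf{W}^{(T)})$, which Theorem~\ref{theorem: convergence} bounds by $\bar L_{i,j}(\mathbf{W}^{(T)})$ on the same probability-$(1-\delta)$ event. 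It then remains to show $\max_{k,r}\|\mathbf{w}^{(T)}_{k,r}\|_2=\mathcal{O}\bigl(\frac{T}{m}C+\frac{\sqrt{Td}}{m}\sigma_n+\sqrt d\sigma_0\bigr)$; averaging over $r$ with the $1/m$ factor kills one factor and leaves the stated bound.

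\textbf{Step 2: weight norm bound.} Unroll \eqref{equ: dp update}: $\mathbf{w}^{(T)}_{k,r}=\mathbf{w}^{(0)}_{k,r}-\eta\sum_{t<T}\mathbf{g}^{(t)}_{k,r}+\eta\sum_{t<T}\mathbf{n}^{(t)}_{k,r}$. Gaussian norm concentration (using $d\ge c_2\log(n/\delta)$ and a union bound over $2m$ neurons) gives $\|\mathbf{w}^{(0)}_{k,r}\|_2=\mathcal{O}(\sqrt d\sigma_0)$. The clipped mini-batch average has norm at most $C$ in total. The per-neuron gradient carries the $1/m$ factor from the second layer, so I will argue its share is $\mathcal{O}(C/m)$ when clipping is inactive or scales uniformly, and otherwise fall back to $C$. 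Summing over $T$ steps gives $\mathcal{O}(\eta T C/m)$. The accumulated DP noise $\sum_t \mathbf{n}^{(t)}$ is $\mathcal{N}(0,T\sigma_n^2\mathbf{I})$, so its norm is $\mathcal{O}(\sqrt{Td}\sigma_n)$ with high probability. Constants such as $\eta$ (which is $\le 1$ by Condition~\ref{condition}) are absorbed.

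\textbf{Main obstacle.} The delicate part is matching the exact scalings in the statement. In particular, the $1/m$ in front of the noise term does not come for free from the noise itself, so it must come from how the perturbation is aggregated across neurons. My first attempt is to bound $\frac1m\sum_r|\langle\mathbf{w}_{k,r},\boldsymbol\zeta\rangle|$ by first splitting $\mathbf{w}_{k,r}$ into initialization, gradient, and noise parts. For the noise part, $\frac1m\sum_r\langle \mathbf{n}_r,\boldsymbol\zeta\rangle$ involves independent Gaussians across neurons, and I will use the noise scaling/normalization conventions from the appendix DP-SGD description to extract the $1/m$. If that fails, I will settle for the bound with $\sqrt{Td}\sigma_n$ without the $1/m$, noting that the key message (an additive error growing like $\sqrt{Td}\sigma_n\bar\zeta$) is unchanged.
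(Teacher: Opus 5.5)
Your reduction is the same as the paper's. The paper writes the adversarial loss gap as $\log\bigl(1+(1-\textnormal{prob}_y)(e^{x}-1)\bigr)$ with $x=\tilde\Delta_{3-y}-\tilde\Delta_y$, and Lemma~\ref{lemma: Gamma} bounds this by $\Gamma(x)x\le\max\{x,0\}$, which is your $1$-Lipschitz-in-the-margin step. It then uses ReLU Lipschitzness, Cauchy--Schwarz/H\"older with the factor $d^{1-1/p}$, a triangle-inequality bound on $\|\mathbf{w}_{k,r}\|_2$ from initialization, clipped gradients and DP noise, and finally Theorem~\ref{theorem: convergence}.

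There is a genuine gap, and it is the one you flag: neither of your remedies produces the $1/m$ factors. For the gradient part, spreading an $\ell_2$ budget $C$ evenly over $2m$ neuron blocks gives $C/\sqrt{2m}$ per block, not $C/m$. An inactive clip gives only $C$ per block. What the second-layer factor actually gives is a per-step displacement of $\mathbf{w}_{k,r}$ of at most $\eta\min\{C,\ \tfrac{1}{m}(\max_{i,j}\|\mathbf{u}_{i,j}\|_2+\max_k\|\boldsymbol{\xi}_k\|_2)\}$, or $\eta C/\sqrt{m}$ on average over $r$ by Cauchy--Schwarz. So $\tfrac{T}{m}C$ follows only under an extra assumption such as $C\gtrsim\max_{i,j}\|\mathbf{u}_{i,j}\|_2+\sigma_p\sqrt{d}$, i.e.\ $\min_{i,j}\Lambda_{i,j}\gtrsim 1$.

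For the noise part there is nothing to extract. Both \eqref{equ: dp update} and the algorithm in Appendix~\ref{appendix: dpsgd} add $\mathcal{N}(0,\sigma_n^2\mathbf{I})$ to every coordinate of every neuron, with no $1/m$. So the noise term in your triangle-inequality bound for each $\|\mathbf{w}^{(T)}_{k,r}\|_2$ is $\Theta(\eta\sqrt{Td}\,\sigma_n)$, and averaging over $r$ leaves it unchanged. Using independence across neurons does not rescue this either, because $\boldsymbol{\zeta}$ is chosen after $\mathbf{W}$. For $\ell_2$ perturbations in the linear regime, the best you get is $\frac{1}{m}\|\sum_r a_r\,\eta\sum_{t<T}\mathbf{n}^{(t)}_{k,r}\|_2\approx\eta\sigma_n\sqrt{Td/m}$, where $a_r\in\{0,1\}$ are the activation indicators. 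That is still not a $1/m$ factor.

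The paper's proof does not supply the missing idea either. Its step (d) writes $t\frac{\eta}{m}C+\frac{\eta}{m}\sqrt{td}\,\sigma_n+\sqrt{d}\sigma_0$, citing only the chi-squared tail bound, which cannot produce a $1/m$. It also bounds $\|\mathbf{w}^{(t+1)}_{k,r}\|_2$ by $\sum_{t'}\|\mathbf{w}^{(t'+1)}_{k,r}-\mathbf{w}^{(t')}_{k,r}\|_2$, which would give $t\eta\sqrt{d}\,\sigma_n$ rather than $\sqrt{td}\,\eta\sigma_n$. Your treatment of $\eta\sum_{t<T}\mathbf{n}^{(t)}_{k,r}$ as a single $\mathcal{N}(0,\eta^2T\sigma_n^2\mathbf{I})$ vector is the correct way to obtain $\sqrt{Td}$.

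What this argument actually establishes is your fallback: $\eta\sqrt{Td}\,\sigma_n$ and the gradient term above replace the stated $1/m$-scaled terms. The qualitative message survives, namely an adversarial error from DP noise growing like $\sqrt{T}$. The $1/m$ scaling in the statement should be treated as unproven. Separately, the learning-rate item of Condition~\ref{condition} does not by itself give $\eta\le 1$. Absorbing $\eta$ therefore requires $c_4(C+\sqrt{d}\sigma_n)(\max_{i,j}\|\mathbf{u}_{i,j}\|_2+\sqrt{d}\sigma_p)\ge 1$.
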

We defer the proof of Theorem~\ref{theorem: convergence_adv} to Appendix~\ref{appendix: theorem convergence_adv proof}.
Theorem \ref{theorem: convergence_adv} indicates that the error induced by adversarial perturbation increases at a rate of  $\mathcal{O}(T)$,
% Although \ac{dpsgd} does not apply any adversarial training components, 
with a \ac{dp} noise term $\frac{\sqrt{Td}}{m}\sigma_n$, which increases at a rate of $\mathcal{O}(\sqrt{T})$.
This suggests that introducing privacy-preserving noise can exacerbate the model's susceptibility to adversarial attacks, thereby worsening its adversarial robustness.
Furthermore, the derived test loss bound is consistent with the excess risk bounds observed in adversarial training scenarios \citep{xiao2022stability}.

\section{Understanding DP-SGD impacts}\label{sec: implications}
In this section, we leverage the derived test loss bounds to interpret the phenomena observed in \ac{dpsgd}-trained models.

\subsection{Interpretation of disparate impact}
To understand the disparate impact, we first define the distribution of class $i$ as $\mathbb{P}_{\mathcal{D}_{i}}\!\left[(\mathbf{x},y)\right] \!=\! \mathbb{P}_{\mathcal{D}}\!\left[(\mathbf{x},y)| y=i\right],\ i \in\{1,2\}$, and evaluate the model’s performance on different classes by bounding the test loss on data from each class.

\begin{corollary}[Disparate impact of different classes]\label{corollary: disparate impact}
	Under Condition \ref{condition} and Assumption \ref{assumption: non-perfect}, with a probability at least $1-\delta$, for any $i \in \{1,2\}$, the test loss of a \ac{dpsgd}-trained model satisfies:	
    $\mathcal{L}_{\mathcal{D}_{i}}\!\left(\!\mathbf{W}^{(T)}\!\right)
			\le\frac{1}{\sum_{j\in\{\textnormal{min},\textnormal{maj}\}}\!\gamma_{i,j}}\sum_{j\in\{\textnormal{min},\textnormal{maj}\}}\!\gamma_{i,j}\cdot\bar{L}_{i,j}\!\left(\mathbf{W}^{(T)}\right)$.
\end{corollary}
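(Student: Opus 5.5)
The corollary follows from Theorem~\ref{theorem: convergence} by conditioning: $\mathcal{D}_i$ is a mixture of $\mathcal{D}_{i,\text{maj}}$ and $\mathcal{D}_{i,\text{min}}$, and its test loss is the corresponding mixture of the group test losses.

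First I will compute the mixture weights. Under $\mathcal{D}$, conditional on $y=i$, the feature patch is $\mathbf{u}_{i,\text{maj}}$ with probability $p_f$ and $\mathbf{u}_{i,\text{min}}$ with probability $1-p_f$. Definition~\ref{def: fnr} gives $\gamma_{i,j}=\mathbb{P}[y=i]\,\mathbb{P}[\text{feature}=\mathbf{u}_{i,j}\mid y=i]$. Hence
\[
\frac{\gamma_{i,j}}{\sum_{j'}\gamma_{i,j'}}=\mathbb{P}[\text{feature}=\mathbf{u}_{i,j}\mid y=i].
\]
For example, for $i=1$ this is $p_cp_f/p_c=p_f$. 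By the law of total probability, $\mathbb{P}_{\mathcal{D}_i}=\sum_{j}\frac{\gamma_{i,j}}{\sum_{j'}\gamma_{i,j'}}\mathbb{P}_{\mathcal{D}_{i,j}}$.

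Second, fix the trained weights $\mathbf{W}^{(T)}$ and take expectations over the test sample only. Linearity of expectation, applied to the mixture above, gives
\[
\mathcal{L}_{\mathcal{D}_i}(\mathbf{W}^{(T)})=\frac{1}{\sum_{j}\gamma_{i,j}}\sum_{j}\gamma_{i,j}\,\mathcal{L}_{\mathcal{D}_{i,j}}(\mathbf{W}^{(T)}).
\]
This is an identity; the one subtlety is that the two minority/majority distributions are disjoint events given $y=i$. Their supports differ in the feature direction, except on a null event. The ``feature patch position'' randomness is part of each $\mathcal{D}_{i,j}$, so nothing is lost.

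Third, invoke Theorem~\ref{theorem: convergence}. It holds with probability at least $1-\delta$ simultaneously for all $i\in\{1,2\}$ and $j\in\{\text{maj},\text{min}\}$: the statement is ``for any $i,j$'' on a single high-probability event of the training randomness. I will check this in the appendix proof; if it were only per-$(i,j)$, a union bound over the four pairs would replace $\delta$ by $\delta/4$, which is absorbed by the logarithmic dependence in Condition~\ref{condition}. On that event, bound each term $\mathcal{L}_{\mathcal{D}_{i,j}}(\mathbf{W}^{(T)})\le\bar{L}_{i,j}(\mathbf{W}^{(T)})$. Since the weights are nonnegative, the inequality is preserved, which yields the claim.

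The main (mild) obstacle is this simultaneity of the high-probability event across groups. Everything else is the mixture decomposition.
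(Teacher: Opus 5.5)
Your proposal is correct and is the argument the paper relies on: the paper gives no separate proof and treats the corollary as an immediate consequence of Theorem~\ref{theorem: convergence}, via the mixture identity $\mathcal{L}_{\mathcal{D}_i}=\sum_j \frac{\gamma_{i,j}}{\sum_{j'}\gamma_{i,j'}}\mathcal{L}_{\mathcal{D}_{i,j}}$ followed by the termwise bound $\mathcal{L}_{\mathcal{D}_{i,j}}\le\bar{L}_{i,j}$ with nonnegative weights. Your union-bound fallback for the high-probability event is a sensible precaution. The disjointness remark is unnecessary, since the decomposition is simply the law of total expectation over the latent choice of feature.
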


Similarly, we define the distribution of group $j$ as $\mathcal{D}_j, j\in\{\text{maj},\text{min}\}$ and evaluate the model performance across different groups (majority and minority) by examining the test loss on data distribution within each group.

\begin{corollary}[Disparate impact of subpopulation groups]\label{corollary: unfairness}
	Under Condition \ref{condition} and Assumption \ref{assumption: non-perfect}, with probability at least $1-\delta$, for any $j\in\{\textnormal{maj},\textnormal{min}\}$, the test loss of a \ac{dpsgd} trained model satisfies:
	$\mathcal{L}_{\mathcal{D}_{j}}\!\left(\!\mathbf{W}^{(T)}\!\right)\le \frac{1}{\sum_{i=1}^{2}\gamma_{i,j}}\sum_{i=1}^{2}\gamma_{i,j}\cdot\bar{L}_{i,j}\left(\mathbf{W}^{(T)}\right)$.
\end{corollary}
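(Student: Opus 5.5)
The plan is to write the group-$j$ test loss as a mixture of the per-class conditional losses and then apply Theorem~\ref{theorem: convergence} to each component. By the definition of $\mathcal{D}$, conditioning on the event that the feature patch carries a group-$j$ feature gives a mixture of $\mathcal{D}_{1,j}$ and $\mathcal{D}_{2,j}$. The mixture weights are $\mathbb{P}[y=i,\ \text{group } j]$, normalized by $\mathbb{P}[\text{group } j]$. Since the label is drawn first and the group is then chosen with probability $p_f$ or $1-p_f$ independently of the label, these joint probabilities are exactly $\gamma_{i,j}$ from Definition~\ref{def: fnr}. For example, $\mathbb{P}[y=1,\text{maj}]=p_cp_f=\gamma_{1,\text{maj}}$, and $\mathbb{P}[\text{maj}]=p_cp_f+(1-p_c)p_f=\sum_{i=1}^2\gamma_{i,\text{maj}}$. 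The law of total expectation then gives
\begin{equation}\nonumber
\mathcal{L}_{\mathcal{D}_j}(\mathbf{W}^{(T)})=\frac{1}{\sum_{i=1}^{2}\gamma_{i,j}}\sum_{i=1}^{2}\gamma_{i,j}\,\mathcal{L}_{\mathcal{D}_{i,j}}(\mathbf{W}^{(T)}).
\end{equation}
This identity holds deterministically for any fixed weights $\mathbf{W}^{(T)}$, because $\mathcal{L}_{\mathcal{D}_{i,j}}$ is an expectation only over the fresh test sample.

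Next, I would invoke Theorem~\ref{theorem: convergence}. It holds with probability at least $1-\delta$ simultaneously for all $i\in\{1,2\}$ and $j\in\{\text{maj},\text{min}\}$, because its statement quantifies "for any $i,j$" inside the high-probability event. So no union bound over the two classes is needed and the probability level stays at $1-\delta$. On that event, $\mathcal{L}_{\mathcal{D}_{i,j}}(\mathbf{W}^{(T)})\le \bar{L}_{i,j}(\mathbf{W}^{(T)})$ for $i=1,2$. Since every weight $\gamma_{i,j}$ is nonnegative, substituting these bounds into the convex combination gives the claimed inequality.

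The only point needing care is the probability quantifier. If the proof of Theorem~\ref{theorem: convergence} actually established the bound separately for each $(i,j)$, I would rescale $\delta$ to $\delta/4$ and take a union bound. That costs only a logarithmic change in Condition~\ref{condition}(1), which the constant $c_2$ absorbs. Otherwise the argument is the same as for Corollary~\ref{corollary: disparate impact}, with the roles of class and group swapped.
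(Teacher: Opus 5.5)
Your proposal is correct and is the argument the paper intends; the paper gives no separate proof and treats the corollary as immediate from Theorem~\ref{theorem: convergence}. That argument writes $\mathcal{L}_{\mathcal{D}_j}$ as the mixture of $\mathcal{L}_{\mathcal{D}_{1,j}}$ and $\mathcal{L}_{\mathcal{D}_{2,j}}$ with weights $\mathbb{P}[y=i\mid \text{group } j]=\gamma_{i,j}/\sum_{i'}\gamma_{i',j}$, then bounds each component by $\bar{L}_{i,j}$ on the theorem's high-probability event, exactly as you do (your union-bound fallback only changes constants).
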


Recalling the expression for $\bar{L}$ in Eq.~\eqref{equ: L bar} and the terms in Definition \ref{def: fnr}, Corollaries \ref{corollary: disparate impact} reveal three primary sources of disparate impact: 
\textit{1) Feature disparity} $\left\|\mathbf{u}_{i,j}\right\|_2$,
\textit{2) Gradient clipping} $\Lambda_{i,j}$,
and \textit{3) Data imbalance} $\gamma_{i,j}$.
We discuss them in detail below.

\textbf{Feature disparity.}
The feature-to-noise ratio $\mathcal{F}_{i,j}$ depends on the feature sizes of the data $\mathbf{u}_{i,j}$. 
In real-world applications, data from different classes or groups may exhibit significantly different feature sizes, resulting in divergent model performance. 
Our results show that the most frequently misclassified data samples are those with long-tailed features, which is verified in Section \ref{exp: real}.

\textbf{Gradient clipping.}
Theorem \ref{theorem: convergence} indicates that the test loss is influenced by the clipping factor $\Lambda_{i,j}$. 
Classes or groups with larger gradient norms will experience more aggressive clipping, leading to poorer feature learning performance.

\textbf{Data imbalance.}
Theorems \ref{theorem: convergence} and \ref{theorem: lower bound} demonstrate that the privacy protection error decreases as the data proportion $\gamma_{i,j}$ increases. Consequently, groups or classes with larger data representations achieve superior model utility, whereas underrepresented groups incur disproportionately higher error rates.
This raises a risk of worse model performance on the skewed data sources\footnote{Take ImageNet \citep{deng2009imagenet} as an example. More than 45\% of ImageNet data comes from the United States, corresponding to only 4\% of the world’s population;
In contrast, China and India contribute just 3\% of ImageNet data \citep{zou2018design}.
Thus, \ac{dpsgd} trained models on ImageNet may perform poorly on tasks related to China and India.}
and the long-tailed distributed applications \citep{feldman2020neural}.\footnote{For example, in the SUN dataset \citep{xiao2010sun}, the number of examples in each class displays a long-tailed structure \citep{feldman2020does}.}

\subsection{Interpretation of adversarial robustness}
As shown in Theorem \ref{theorem: convergence_adv}, 
\ac{dpsgd} tends to result in higher adversarial test loss.
We interpret this decrease in adversarial robustness from two perspectives.

\textbf{Feature learning.} 
As noted in \cite{allen2020towards}, an adversarially robust model typically removes class-irrelevant non-robust noise and learns robust features.
However, \ac{dpsgd} injects significant noise during training, leading neural networks to inevitably learn non-robust class-irrelevant noise. 

\textbf{Network parameter growth.} 
Due to the noise added by \ac{dpsgd}, the network parameters’ norms increase as the number of iterations grows. 
Adversarial perturbations $\boldsymbol{\zeta}$ attack the model by changing the activated inner products of neurons, i.e.,
\begin{equation}
	\begin{aligned}
		&F_k(\mathbf{W},\mathbf{x}+\boldsymbol{\zeta})\\
        =&\frac{1}{m}\sum_{r=1}^{m}\sum_{j=1}^{2}\left[\sigma\left(\left\langle\mathbf{w}_{k,r},\mathbf{x}^{(j)}\right\rangle +\!\!\!\underbrace{\left\langle\mathbf{w}_{k,r}, \boldsymbol{\zeta}^{(j)}\right\rangle}_\text{By adversarial perturbation}\right)\right]\!,
	\end{aligned}
\end{equation}
where higher network parameter norms result in increased vulnerability to adversarial attacks.

\subsection{Public-pretraining and private-finetuning}
As demonstrated in \cite{berrada2023unlocking}, pretraining on public datasets can significantly mitigate the accuracy drop and reduce the side effects caused by \ac{dpsgd}. 
The authors claimed that with pretraining, the neural network utilizes good pretrained features, and the loss at initialization $\mathcal{L}_{\mathcal{D}_{i,j}}(\mathbf{W}^{(0)})$ is relatively small, which reduces the number of finetuning iterations needed, leading to smaller privacy protection errors and thus, milder side effects.

However, in \cite{tramer2022position}, the authors pointed out that many existing pretraining and finetuning datasets share a similar data distribution (e.g., pretraining on ImageNet and finetuning on CIFAR-10). 
In this subsection, we explore how shifts in distribution between pretraining and finetuning data affect private finetuning performance.

\textbf{Setup.} 
Consider a two-layer CNN model (defined in Section \ref{sec: model}) trained on a simplified data distribution $\mathcal{D}_\text{pt}$ where each class has a single feature with the same magnitude: $\mathbf{u}_1$ for class 1 and $\mathbf{u}_2$ for class 2, with$\left\|\mathbf{u}_1\right\|_2 = \left\|\mathbf{u}_2\right\|_2$. The model is trained using standard SGD and then fine-tuned on a similar data distribution $\mathcal{D}_\text{ft}$, where the data is generated from $\mathcal{D}_\text{pt}$ and then rotated by an angle $\theta$ (Details are discussed in Appendix~\ref{appendix: pt-ft}).  
Based on the results in \cite{kou2023benign}, a ReLU \ac{cnn} trained with gradient descent learns the feature in a constant order. 
Therefore, for simplicity, we assume that the pre-trained model parameters are $\tilde{\mathbf{w}}_{j,r} = C_1 \cdot\mathbf{u}_{j} + C_3\cdot\mathcal{N}(0,\sigma_p\mathbf{H})$, where $C_1$ and $C_3$ are constants.

\begin{condition}\label{condition2}
	Suppose that there exist a positive constant $0< c_5 \le 1$ and sufficiently large constants $c_6,c_7,c_8>0$. For all $i\in[2]$,
 \begin{enumerate}
    \item The data dimension satisfies $d\ge c_6\log(n/\delta)$.
     \item The batch size $B$ is proportional to the training dataset size $n$, specifically $B\ge c_5\cdot n$, enabling stochastic gradients to effectively leverage large datasets.
     \item The feature size, patch noise, and \ac{dpsgd} noise satisfy $\left\|\mathbf{u}_{i}\right\|_2 \ge c_7\sigma_p \ge c_7 \sigma_n$.
     \item The learning rate satisfies $\eta \le \left(c_8(C+\sqrt{d}\sigma_n)(\max_{i}\left\|\mathbf{u}_{i}\right\|_2 + \sqrt{d}\sigma_p)\right)^{-1}$.
 \end{enumerate}
\end{condition}

Define $\Lambda_i$ in a similar manner that $\Lambda_{i} = \frac{C}{\left\|\mathbf{u}_{i}\right\|_2+\sigma_p\sqrt{d}}$.
We characterize the finetuning test loss as follows.
\begin{proposition}\label{proposition: pt ft}
	Under Condition \ref{condition2} and Assumption \ref{assumption: non-perfect}, for any $i\in\{1,2\}$, with probability $1-\delta$, the test loss of private fine-tuned model with satisfies:
	\begin{equation}
		\begin{aligned}
			\mathcal{L}_{\mathcal{D}_\textnormal{ft}}(\mathbf{W}^{(T)})\le& 
			\exp\!\left(\!-\Omega\left(\!\frac{\Lambda_i\left\|\mathbf{u}_{i}\right\|_2^2}{m}T\right)\!\right)\!\cdot\tilde{L}\\
			&+\mathcal{O}\left(\frac{\sqrt{d}}{\sqrt{n}\Lambda_{i}}\right)+\mathcal{O}\left(\frac{m\sqrt{d}\sigma_n}{\Lambda_i\left\|\mathbf{u}_i\right\|_2}\right),
		\end{aligned}
	\end{equation}
	where 
	\begin{equation}
		\begin{aligned}
			&\tilde{L} = -\frac{1}{2}\ln\left(\frac{\exp(C_1 \cos\theta \left\|\mathbf{u}_{2}\right\|_2^2)}{\exp(C_1 \cos\theta \left\|\mathbf{u}_{2}\right\|_2^2)+\exp( C_3\sigma_p^2)}\right)\\
			 &-\!\!\frac{1}{2}\!\ln\!\!\left(\!\frac{\exp(C_1 \cos\theta \left\|\mathbf{u}_{1}\right\|_2^2)}{\exp(C_1\! \cos\theta \left\|\mathbf{u}_{1}\right\|_2^2)\!+\!\exp(C_1\!\sin\theta \left\|\mathbf{u}_{1}\right\|_2^2\!+\! C_3\sigma_p^2)}\!\!\right).\\
		\end{aligned}
	\end{equation}
\end{proposition}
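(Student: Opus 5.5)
The plan is to reduce Proposition~\ref{proposition: pt ft} to the machinery behind Theorem~\ref{theorem: convergence}, with two changes. First, the initialization $\mathbf{W}^{(0)}$ is now the pretrained weights $\tilde{\mathbf{w}}_{j,r}$ rather than a small Gaussian. Second, the data distribution has a single feature per class, so we set $\gamma_{i,\cdot}$ to a constant and drop the group index. The proof of Theorem~\ref{theorem: convergence} gives a per-iteration contraction of the form
\[
\mathcal{L}_{\mathcal{D}_\textnormal{ft}}(\mathbf{W}^{(t+1)}) \le \Bigl(1-\Omega\bigl(\tfrac{\eta\Lambda_i\|\mathbf{u}_i\|_2^2}{m}\bigr)\Bigr)\mathcal{L}_{\mathcal{D}_\textnormal{ft}}(\mathbf{W}^{(t)}) + (\text{generalization}) + (\text{privacy}).
\]
This recursion is obtained from the piecewise-linear approximation of the cross-entropy loss, and it uses Assumption~\ref{assumption: non-perfect} so that the loss derivative stays bounded below by a constant depending on $s$. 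Condition~\ref{condition2} mirrors Condition~\ref{condition} item by item, so I would rerun that argument with the rotated features $\mathbf{u}_i^\theta$ in place of $\mathbf{u}_{i,j}$. Unrolling the recursion over $T$ steps then produces the vanishing term $\exp(-\Omega(\Lambda_i\|\mathbf{u}_i\|_2^2T/m))\,\mathcal{L}_{\mathcal{D}_\textnormal{ft}}(\mathbf{W}^{(0)})$ plus the two error terms.

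The step specific to this proposition is evaluating $\mathcal{L}_{\mathcal{D}_\textnormal{ft}}(\mathbf{W}^{(0)})$ and bounding it by $\tilde L$. I would write a fine-tuning sample of class $y$ as a rotated feature patch $\mathbf{u}_y^\theta$ together with a noise patch $\boldsymbol{\xi}$. Here $\mathbf{u}_y^\theta = \cos\theta\,\mathbf{u}_y + \sin\theta\,\mathbf{v}$, where the rotation direction $\mathbf{v}$ is taken as in Appendix~\ref{appendix: pt-ft}. Plugging in $\tilde{\mathbf{w}}_{j,r} = C_1\mathbf{u}_j + C_3\boldsymbol{\xi}'$ gives $\langle \tilde{\mathbf{w}}_{y,r}, \mathbf{u}^\theta_y\rangle = C_1\cos\theta\,\|\mathbf{u}_y\|_2^2$ plus a noise cross term. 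The rotated component can align with the other class's feature, contributing $C_1\sin\theta\,\|\mathbf{u}_1\|_2^2$ to the wrong logit in one of the two classes. The noise-patch inner products $\langle \tilde{\mathbf{w}}, \boldsymbol{\xi}\rangle$ concentrate at scale $\sigma_p^2$ times constants, using $d \ge c_6\log(n/\delta)$ and Gaussian concentration; this gives the $C_3\sigma_p^2$ terms. I would bound the ReLU outputs from above and below, and use monotonicity of $-\log$ of the softmax in the logit gap. Averaging over the two classes with equal weight, since the classes are symmetric, then gives exactly the two $-\tfrac12\ln(\cdot)$ terms in $\tilde L$, up to absorbing constants into $C_1$ and $C_3$.

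Finally, I would rewrite the error terms in the form stated. The generalization term $\mathcal{O}(1/(\sqrt{n}\gamma\Lambda_i))$ picks up a $\sqrt{d}$ factor because the pretrained weights carry a noise component of norm roughly $\sqrt{d}\sigma_p$. That component enters the gradient-variance bound, since it is no longer negligible as it was under a small initialization. Likewise, the privacy term $m/(\Lambda\mathcal{F})$ becomes $m\sqrt{d}\sigma_n/(\Lambda_i\|\mathbf{u}_i\|_2)$, because the DP noise $\eta\mathbf{n}^{(t)}$ has norm $\sqrt{d}\sigma_n$ per step. I expect the main obstacle to be re-establishing the invariants of the Theorem~\ref{theorem: convergence} proof from a large, non-random initialization: that neuron activation patterns stay stable and that the clipping factor $\Lambda_i$ still lower-bounds the effective gradient scale. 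The first thing I would try is to show that the pretrained weights already satisfy the activation conditions the original proof establishes after its early phase, so that argument can be entered directly at that point.
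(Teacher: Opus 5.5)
Your plan is essentially the paper's proof, which computes the same ReLU inner products of the pretrained filters with the rotated features (correct logit $C_1\cos\theta\|\mathbf{u}_y\|_2^2$; wrong logit $C_1\sin\theta\|\mathbf{u}_1\|_2^2$ for class $1$ and $0$ for class $2$ because $\langle \mathbf{u}_1,\mathbf{u}'_2\rangle<0$; noise patch at most $C_3\sigma_p^2$, which the paper asserts without derivation) to get $\mathcal{L}_{\mathcal{D}_\textnormal{ft}}(\mathbf{W}^{(0)})\le\tilde L$, and then plugs this initial loss directly into Theorem~\ref{theorem: convergence} using $\|\mathbf{u}_1\|_2=\|\mathbf{u}_2\|_2$. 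One correction to your bookkeeping: the $\sqrt{d}$ factors do not come from the noise component of the pretrained weights (the increment bounds go through the $1$-Lipschitz ReLU and never involve the weight norm) but from the noise-patch cross terms $\langle \tfrac{1}{B}\sum_k\boldsymbol{\xi}_k,\boldsymbol{\xi}\rangle$ and $\langle \mathbf{n}^{(t)},\boldsymbol{\xi}\rangle$ with $\|\boldsymbol{\xi}\|_2\approx\sqrt{d}\sigma_p$; these are already present in the unsimplified bound at the end of the proof of Theorem~\ref{theorem: convergence} and give the stated terms once you use $\sigma_p\le\|\mathbf{u}_i\|_2/c_7$.
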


	A worth noting fact is that $\tilde{L}$ increases with $\theta$, meaning that the private finetuning test loss increases as the feature difference between the pretraining and finetuning distributions $\theta$ increases. 
	Even worse, if $\tilde{L}> \mathcal{L}_{\mathcal{D}_2}(\mathbf{W}^{(0)})$, pretraining can lead to worse performance than training from scratch.

\section{Solutions for improving feature-to-noise ratio}
Motivated by our preceding analyses, this section introduces several strategies to enhance the feature-to-noise ratio. Empirical validations confirming the efficacy of these approaches are deferred to Appendices~\ref{appendix: freezing} and \ref{appendix: data augmentation}.

\textbf{Data augmentation.}
Data augmentation is a well-established technique for cultivating robust feature representations in neural networks \cite{shen2022data}. This is particularly beneficial in private learning, as the enriched task-relevant information provided by augmentation helps amplify the FNR, yielding models with significantly higher utility and accuracy \cite{de2022unlocking}.

\textbf{Network freezing/pruning.}
Another approach to reducing the \ac{fnr} is network freezing or pruning \cite{frankle2018lottery}. The underlying principle is that neurons in a trained network, shaped by non-linear activation functions, exhibit functional specialization and thus contribute unequally to the final prediction. Consequently, selectively freezing or pruning neurons with low contribution can enhance the model's focus on salient features, leading to significant reductions in \ac{fnr}. Building on this insight, we propose a stage-wise network freezing method that effectively improves model performance, with a detailed description provided in Appendix~\ref{appendix: freezing}.

\section{Experiments}   \label{sec: experiment}
\subsection{Synthetic datasets}  
\textbf{Synthetic data generation.}
We generate synthetic data following the data distribution described in Section \ref{sec: model}. 
Specifically, we set both the training and test dataset sizes to $450$.
We set the feature vector length in each patch to $100$. 
The feature vector sizes and dataset sizes of the majority and minority groups of classes $0$ and $1$ are specified as $\left\|\mathbf{u}_{1,\text{maj}}\right\|_2 = 4, \gamma_{1,\textnormal{maj}} = 44\%, \left\|\mathbf{u}_{1,\text{min}}\right\|_2 = 2, 
\gamma_{1,\textnormal{min}} = 22\%,
\left\|\mathbf{u}_{2,\text{maj}}\right\|_2 = 1.5, 
\gamma_{2,\textnormal{maj}} = 22\%
\text{ and }\left\|\mathbf{u}_{2,\text{min}}\right\|_2 = 0.5, \gamma_{2,\textnormal{min}} = 11\%$.
These choices of feature vector sizes and dataset proportions introduce feature disparity and data imbalance into the synthetic data.
In addition, we set the standard deviation of the noise patch to $\sigma_p = 0.2$.

We train a two-layer \ac{cnn} with ReLU activation function and cross-entropy loss (see Section \ref{sec: model}).
The number of neurons is $64$, i.e., $m=32$. 
We use the default PyTorch initialization and train the \ac{cnn} with \ac{dpsgd}, with a batch size of $B = 128$ for $20$ epochs.

\textbf{Test loss.} 
We evaluate the test loss of the trained model across different groups and features in Figure \ref{fig: nu-testloss}(a).
We observe that the test loss generally increases with the \ac{dp} noise standard deviation, aligning with the findings in Theorem~\ref{theorem: convergence}, which suggests that the upper bound on the test loss depends on the corresponding \ac{fnr}.
Furthermore, the class and group with larger feature sizes incur smaller test losses.
Notably, the gaps among the groups become more significant as the noise standard deviation increases.

\vspace{-5pt}
\begin{figure}[ht]
	\centering
        \subfigure[Test loss versus DP noise standard deviation $\sigma_n$.]
	{\includegraphics[width=0.8\linewidth]{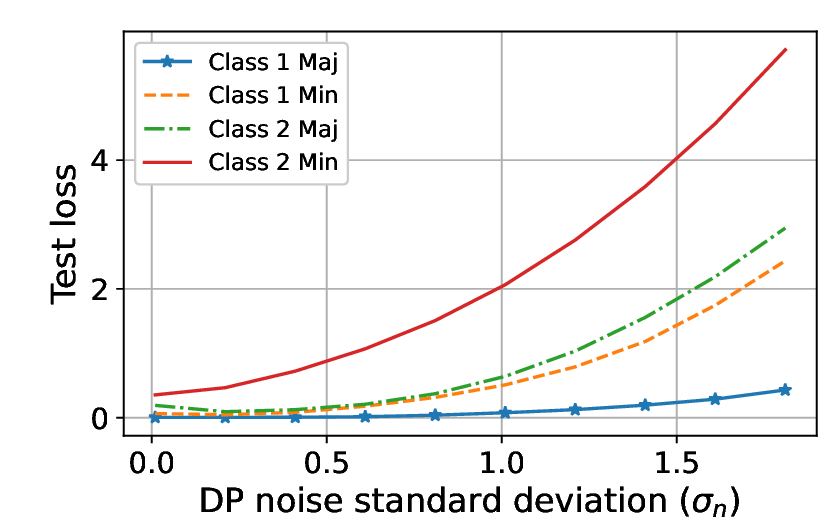}}
	% \label{fig: testloss}
	\subfigure[Adversarial test loss attacked with the Projected Gradient Descent Method.]
        {\includegraphics[width=0.8\linewidth]{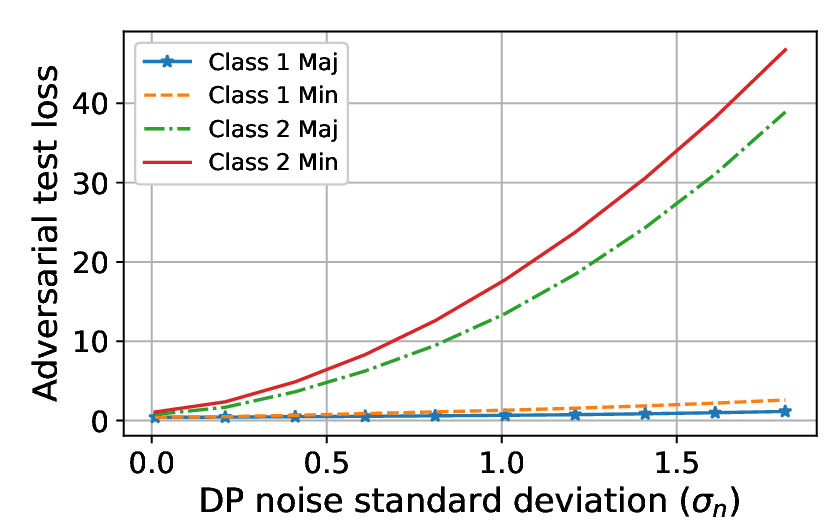}}
        % \label{fig: test_loss_adv}
        \caption{Model standard test loss and adversarial test loss.}
        \label{fig: nu-testloss}
\end{figure}

\textbf{Adversarial robustness.}
In Figure \ref{fig: nu-testloss}(b), we assess the adversarial robustness by attacking the model with the projected gradient descent method (generate the adversarial examples by maximizing the loss with projected gradient descent) with $\bar{\zeta} = 0.02$. We observe that the \ac{dpsgd} trained model degrades in adversarial robustness for certain groups/classes.
This aligns with the results from Theorem \ref{theorem: convergence} that the upper bound of the model's adversarial test loss increases with \ac{dp} noise standard deviation.

\subsection{Real-world datasets}\label{exp: real}
\textbf{Setup.}
For MNIST \cite{lecun1998gradient} and CIFAR-10 \cite{krizhevsky2009learning}, we train LeNet and a \ac{cnn} following the architecture in \citep{tramer2020differentially} with \ac{dpsgd}.
We fix the privacy budget as $\epsilon = 3, \alpha = 10^{-5}$, the gradient clipping threshold as $C = 0.1$ \citep{tramer2020differentially}, the batch size as 256, and try various learning rates.
We use the \ac{dpsgd} implementation in Opacus \citep{yousefpour2021opacus}.
We generate adversarial examples using the projected gradient descent method with $\bar{\zeta} = 4/255$.

\begin{figure}[ht]
	\centering
	\subfigure{\includegraphics[width=0.49\linewidth]{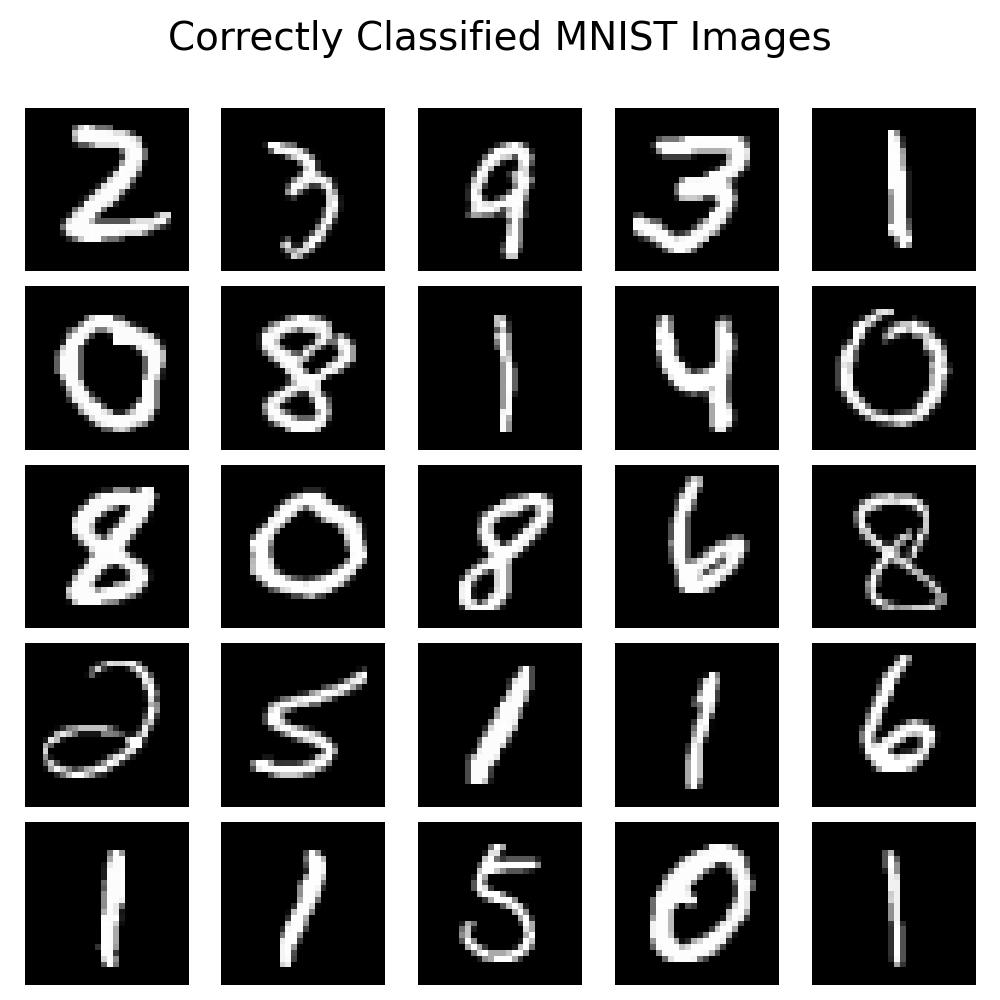}}
        \subfigure{\includegraphics[width=0.49\linewidth]{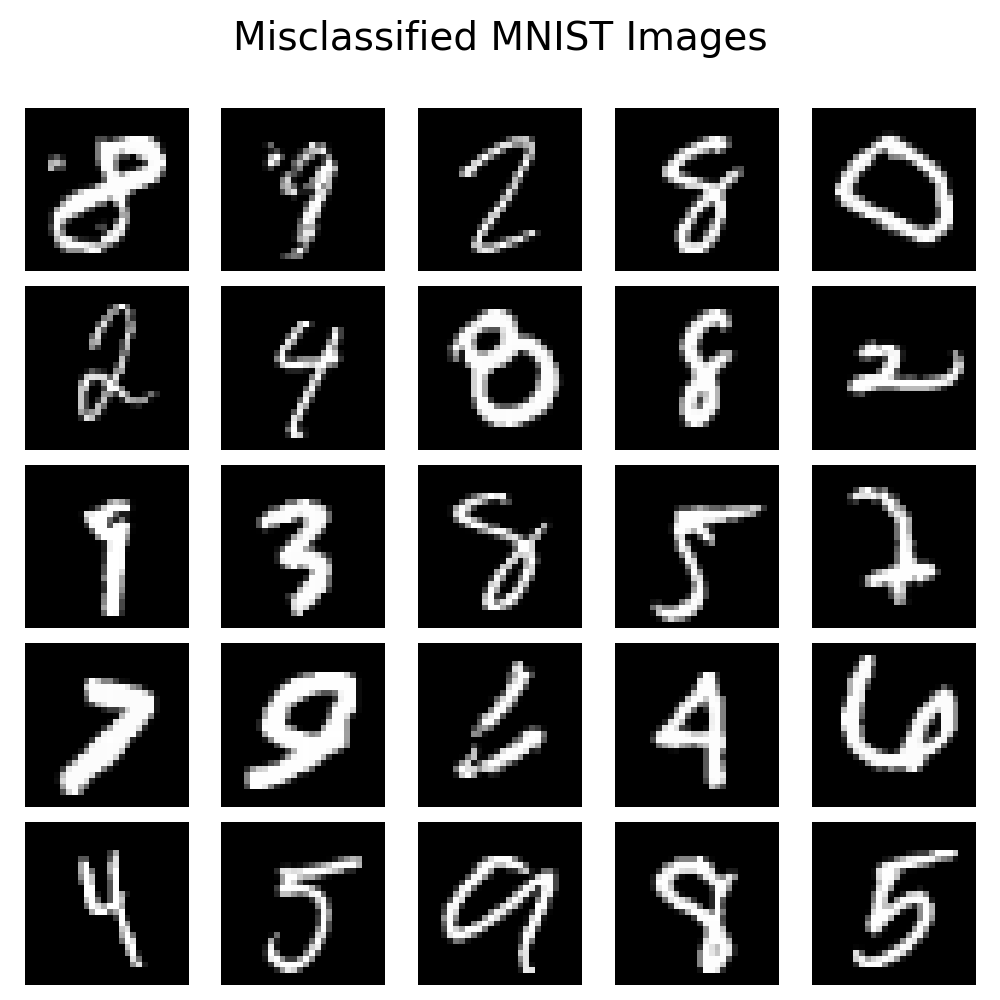}}
        \caption{Visualization of correctly classified (left) and misclassified (right) images.}
        \label{fig: testloss}
\end{figure}

\textbf{Impact of feature size.}
We first visualize the correctly classified and misclassified images for MNIST\footnote{MNIST's features carry clear physical meaning that long-tailed data corresponds to poorly written digits~\citep{feldman2020neural}.} in Figure~\ref{fig: testloss}.
The correctly classified images are mostly well-written digits, while the misclassified images contain more poorly written ones. 
This verifies our theory that DP-SGD hurts more on long-tailed data even within the same class.

In real images, identifying which specific parts represent features can be challenging. Therefore, in our experiment, we treat the object within the image as the "feature" and the background as "noise."
To emulate the image backgrounds, we apply padding to the periphery of the input images and then resize them back to the original size (as shown in Figure~\ref{fig: imagepadding} in the appendix).
Intuitively, a higher padding ratio corresponds to a lower FNR.
The table below summarizes test and adversarial test accuracy for different padding ratios (we also provide results for vision transformers on padded images in Appendix~\ref{app: vit}). As indicated in our theory, the model accuracy decreases with the padding ratios.
We also validate the impact of class sizes in Appendix~\ref{app: impact_size}.

\begin{table}[h]
	%\caption{Sample table title}
	\label{sample-table1}
	\begin{center}
		\begin{tabular}{llllll}
        \toprule
			Padding ratio  & 0\%  & 26\% &50\% & 62\% & 75\%\\
			\midrule
			MNIST &97\% &97\% & 95\%& 94\% & 86\%\\
			CIFAR-10 & 58\% & 54\%  & 51\% & 48\% &46\%\\
			MNIST (adv) &95\% &89\% & 50\%& 20\% & 1\%\\
			CIFAR-10 (adv) & 3\%& 2\% & 1\% & 0\% &0\%\\
            \bottomrule
		\end{tabular}
	\end{center}
\end{table}

\textbf{Impact of public-pretraining. }
We examine the impact of feature differences, introduced by rotation angles, on the performance of the public-pretraining and private-finetuning paradigms.
Keeping the training dataset unrotated, we rotate the test images and split them into a finetuning training dataset and a test dataset.
We use both \ac{cnn} and ResNet-18 \citep{he2016deep} architectures for CIFAR-10. 
The table below shows the private-finetuning test performance under different rotation angles. As predicted by our theory, the model accuracy decreases with the rotation angle. 

\begin{table}[h]
	%\caption{Sample table title}
	\label{sample-table2}
	\begin{center}
		\begin{tabular}{lllll}
        \toprule
			Rotation angle  & $0^\circ$  & $22.5^\circ$ & $45^\circ$ & $67.5^\circ$\\
			\midrule
			MNIST & 99\% &  97\% & 94\% & 94\%\\
			CIFAR-10 (CNN)\!&  70\% & 59\% & 51\% &  49\% \\
			CIFAR-10 (ResNet)\!\! & 91\%& 66\%& 43\% & 37\%\\
            \bottomrule
		\end{tabular}
	\end{center}
\end{table}

\section{Conclusions and future works}\label{sec: conclusions}
In this paper, we investigate the side effects of \ac{dpsgd} in two-layer ReLU \acp{cnn}, revealing that these side effects depend on the data's feature, data noise, and privacy-preserving noise.
Our results uncover three sources of disparate impact: \textit{gradient clipping}, \textit{data imbalance}, and \textit{feature disparity}.
In addition, we show that the privacy-preserving noise introduces randomness into the learned features, leading to worse adversarial robustness. 
We also show that finetuning performance with pre-trained models deteriorates as the feature differences between the pretraining and finetuning datasets increase.
Numerical results on both synthetic and real-world datasets validate our theoretical analyses.
Larger neural networks may involve more complex learning dynamics than the one we analyzed. 
Future work includes analyzing modern architectures such as transformers.

\bibliography{ref}

\newpage
\appendix
\onecolumn

\section{Additional Related Work}\label{appendix: related work}

\paragraph{Feature learning in neural networks.}
Feature learning in neural networks explores how neural networks learn data patterns during training and has provided insights into phenomena such as momentum \citep{jelassi2022towards}, benign overfitting \citep{cao2022benign,kou2023benign}, adversarial training \citep{allen2022feature,li2023clean}, data augmentation \cite{zou2023benefits}, and learning on long-tailed \cite{xu2025rethinking}.
However, these approaches are not applicable to \ac{dpsgd} trained neural networks due to the noise perturbation introduced by \ac{dpsgd}.
In more detail, the key properties for feature growth may not hold.
In this paper, we develop new techniques to study the generalization performance of \ac{dpsgd} trained models.

\paragraph{Side effects in differentially private learning.}
Side effects of \ac{dp} have been widely studied in deep learning literature.
\textit{Disparate impact} was initially observed in classification tasks \cite{bagdasaryan2019differential} and generative tasks \citep{ganev2022robin}.
Then, researchers have proposed several methods, such as a regularization approach \citep{tran2021differentially}, re-weighting, and stratification methods \citep{esipova2022disparate,rosenblatt2024simple} to mitigate the disparate impacts.
A recent paper \citep{berrada2023unlocking} showed that the \ac{dp} models pre-trained with large datasets and fine-tuned with large batch size can have marginal disparate effects. 

The interplay between \ac{dp} and \textit{fairness} is also a widely studied topic. 
\cite{cummings2019compatibility} showed that {exact} fairness is not compatible with \ac{dp} under the PAC learning setting.
\cite{sanyal2022unfair} showed that it is not possible to build accurate learning algorithms that are both private and fair when data follows a {specific} kind of long-tailed distribution.
\cite{mangold2023differential} bounded the difference in fairness levels between private and non-private models under the assumption that the confidence margin is Lipschitz-continuous.
A growing body of literature highlights the inherent tension between privacy and fairness in \ac{dpsgd}, specifically exploring novel clipping modifications to restore model equity \citep{demelius2025private,soleymani2025softadaclip,zhao2025mitigating}.

Some other side effects have also been studied. 
\cite{tramer2020differentially} showed that \ac{dpl} may perform worse after \textit{bad feature learning} compared with learning handcraft features.
\cite{tursynbek2020robustness} studied \textit{adversarial robustness} in \ac{dpl} and showed that models trained by \ac{dpl} may be more vulnerable compared with non-private models.
\cite{zhang2022differentially} studied the adversarial robustness of private linear classifiers and showed differentially privately fine-tuned pre-trained models may be robust under certain parameter settings.
In addition, \cite{zhang2022closer} studied calibration of \ac{dpl} and observed \textit{miscalibration} across a wide range of vision and language tasks.
\cite{bu2023convergence} studied \ac{dpl} with \ac{ntk} and demonstrated that a large clipping threshold may benefit the calibration of \ac{dpl}.

In these works, researchers have studied the convergence of \ac{dpl} and explored explanations for the aforementioned side effects.
However, these analyses relied on some restricted assumptions that are not applicable to neural networks because (1) differentially private neural networks training is not in the \ac{ntk} regime as the noise keeps the network parameter far away from the initialization during training;
(2) training loss of ReLU neural networks is non-convex and non-smooth, contradicting the assumptions in most analyses.
In this work, we aim to overcome these {challenges} and explain the side effects of \ac{dpsgd} on a two-layer ReLU \ac{cnn}.

\section{Details about pre-training and fine-tuning data distributions}\label{appendix: pt-ft}
To illustrate the impact of pre-training, we simplify the data distribution as follows.
We consider the following pre-training and fine-tuning data distributions. 
We control their difference by a parameter $\theta$.
\paragraph{Pre-training data distribution.}
We consider a $2$-class classification problem over $2$-patch inputs. Each labelled data is denoted as $(\mathbf{x},y)$, with label $y\in \{1, 2\}$ and data vector $\mathbf{x} = \left(\mathbf{x}^{(1)}, \mathbf{x}^{(2)}\right) \in \mathbb{R}^{d\times2}$. 
A sample $(\mathbf{x},y)$ is generated from a data distribution $\mathcal{D}$ as follows.
\begin{enumerate}
	\item The label $y$ is randomly sampled from $\{1, 2\}$.
	With probability $1/2$, the label is selected as $y=1$; otherwise, it is selected as $y=2$.
	\item Each input data patch $\mathbf{x}^{(1)}, \mathbf{x}^{(2)} \in \mathbb{R}^d$ contains either feature or noise.
	\begin{itemize}
		\item Feature patch: One data patch ($\mathbf{x}^{(1)}$ or $\mathbf{x}^{(2)}$) is randomly selected as the feature patch. 
		This patch contains a feature $\mathbf{u}_{y}$ for $y\in\{1,2\}$.
		
		\item Noisy patch: The remaining patch $\boldsymbol{\xi}$ is generated from a Gaussian distribution $\mathcal{N}(0, \sigma_p^2 \mathbf{H})$, where $\mathbf{H} = \mathbf{I} - \sum_{i=1}^{2}\mathbf{u}_{i}\mathbf{u}_{i}^\top \cdot\left\|\mathbf{u}_{i}\right\|_2^{-2}$.
	\end{itemize}
\end{enumerate}
\paragraph{Fine-tuning data distribution.}
We consider a $2$-class classification problem over $2$-patch inputs. Each labelled data is denoted as $(\mathbf{x},y)$, with label $y\in \{1, 2\}$ and data vector $\mathbf{x} = \left(\mathbf{x}^{(1)}, \mathbf{x}^{(2)}\right) \in \mathbb{R}^{d\times2}$. 
A sample $(\mathbf{x},y)$ is generated from a data distribution $\mathcal{D}$ as follows.
\begin{enumerate}
	\item The label $y$ is randomly sampled from $\{1, 2\}$.
	With probability $1/2$, the label is selected as $y=1$; otherwise, it is selected as $y=2$.
	\item Each input data patch $\mathbf{x}^{(1)}, \mathbf{x}^{(2)} \in \mathbb{R}^d$ contains either feature or noise.
	\begin{itemize}
		\item Feature patch: One data patch ($\mathbf{x}^{(1)}$ or $\mathbf{x}^{(2)}$) is randomly selected as the feature patch. 
		For $y=1$, this patch contains a feature $\mathbf{u}'_{1} = \cos\theta\mathbf{u}_{1} + \sin\theta\mathbf{u}_{2}$.
		For $y=2$, this 
		this patch contains a feature $\mathbf{u}'_{2} = \cos\theta\mathbf{u}_{2} - \sin\theta\mathbf{u}_{1}$.
		
		\item Noisy patch: The remaining patch $\boldsymbol{\xi}$ is generated from a Gaussian distribution $\mathcal{N}(0, \sigma_p^2 \mathbf{H})$, where $\mathbf{H} = \mathbf{I} - \sum_{i=1}^{2}\mathbf{u}'_{i}(\mathbf{u}_{i}')^\top \cdot\left\|\mathbf{u}'_{i}\right\|_2^{-2}$.
	\end{itemize}
\end{enumerate}

\section{DP-SGD Algorithm}\label{appendix: dpsgd}

\begin{algorithm}
	\label{algo: DP-SGD}
	\caption{DP-SGD}
	\begin{algorithmic}
		\STATE {\bfseries Input:} training set $\mathcal{S}$, learning rate $\eta$, \ac{dp} noise standard deviation $\sigma_n$, batch size $B$
		%	\KwOutput{$\mathbf{w}^{(T)}$}
		%		
		\STATE initialize $\mathbf{W}^{(0)}$ randomly
		\FOR{$\text{each round}\,\,\, t = 1, 2,\cdots, T$}
		\STATE	Take a random training subset $\mathcal{S}^{(t)}$ uniformly from $\mathcal{S}$ with probability $\frac{B}{n}$		
		\STATE	Compute $\mathbf{g}^{(t)} = \frac{1}{B}\sum_{(\mathbf{x},y)\in\mathcal{S}^{(t)}}\nabla\mathcal{L}\left(\mathbf{W}^{(t-1)},\mathbf{x},y\right) + \mathcal{N}\left(0,\sigma_n^2\mathbf{I}\right)$		
		\STATE	$\mathbf{W}^{(t)} = \mathbf{W}^{(t-1)} -\eta \mathbf{g}^{(t)}$
		\ENDFOR
	\end{algorithmic}
\end{algorithm}

\section{Details about experiments}
In this section, we introduce the experimental details. We run 3 random seeds for the experiments and record the mean of them.
\subsection{Visualization of padding images}
In Figure \ref{fig: imagepadding}, we present examples of padding images.
\begin{figure}
	\centering
	\includegraphics[width=0.8\linewidth]{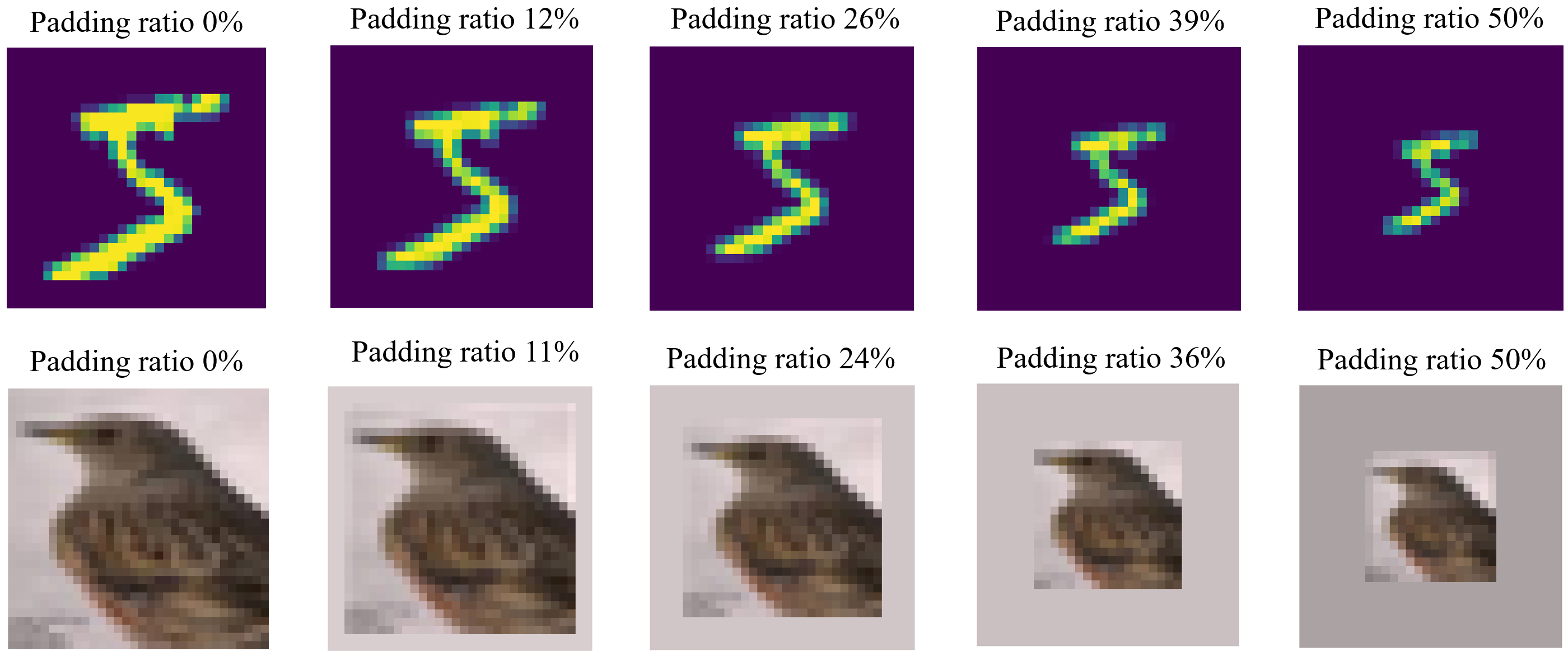}
	\caption{Examples on using image padding to control feature sizes. For digits and objects, we pad the images with their background color.}
	\label{fig: imagepadding}
\end{figure}

\subsection{Experimental details of Figure \ref{fig:phase}}\label{app: figure1}
Because we mainly characterize the privacy-utility tradeoff, we simply set 2 classes with equal feature sizes and equal dataset sizes $100$.
We vary the feature size from $0$ to $21$ and the \ac{dpsgd} noise standard deviation $\sigma_n$ from $0$ to $4$.
We set the noise patch standard deviation $\sigma_p$ as $0.02$.
We set the clipping threshold as $C=2$.
We set the number of neurons as $m=32$.

\section{Additional experiments}
\subsection{Validation of the impact of class sizes}\label{app: impact_size}
We conducted additional experiments to evaluate performance under varying degrees of class imbalance. Specifically, we employed the MNIST dataset, fixing the population of all classes except Class 5 to 3,000 samples. We then systematically varied the sample size of Class 5. All models were trained with a fixed privacy budget of $(3,10^{-5})$-DP
. The impact of these variations on the test accuracy for Class 5 is summarized in the table below.
\begin{table}[h]
\centering
\begin{tabular}{lcccc}
\toprule
\textbf{Class 5 Size} & \textbf{100} & \textbf{500} & \textbf{1000} & \textbf{3000} \\
\hline
\textbf{Accuracy (\%)} & 71 & 93 & 96 & 97 \\
\hline
\textbf{Accuracy (Adv) (\%)} & 60 & 83 & 96 & 97 \\
\bottomrule
\end{tabular}
\end{table}
These results corroborate our theoretical findings, showing that test accuracy degrades with reduced sample sizes.

\subsection{Experiments of vision transformers on padded images}\label{app: vit}
we conducted additional experiments using Vision Transformers (ViTs) on padded MNIST datasets. The results, presented in the table below, demonstrate that the feature-to-noise ratio remains a critical metric for characterizing the generalization performance of DP-SGD, even when applied to Transformer-based architectures.
\begin{table}[h]
\centering
\begin{tabular}{lcccc}
\toprule
\textbf{Padding ratio} & \textbf{0\%} & \textbf{26\%} & \textbf{50\%} & \textbf{75\%} \\
\midrule
MNIST & 93\% & 93\% & 93\% & 91\% \\
\midrule
MNIST (adv) & 78\% & 77\% & 66\% & 20\% \\
\bottomrule
\end{tabular}
\end{table}

\section{Some datasets and neural network architectures observing side effects}\label{app: data_architecture}
\cite{bagdasaryan2019differential} has identified disparate effects in datasets such as MNIST, CelebA, and Twitter posts using ResNet and LSTM networks. 
Unfairness has been observed in CelebA and CIFAR-10 datasets using ResNet networks \citep{sanyal2022unfair}.
\cite{zhang2022closer} observed miscalibration in QNLI, QQP, SST-2 with fine-tuned RoBERTa-base.

\section{Experiments about network freezing}    \label{appendix: freezing}
In this section, we conduct experiments to verify the effectiveness of network freezing. 
Specifically, we consider the following stagewise network freezing algorithm.

\begin{algorithm}
	\label{algo: freezing}
	\caption{Stage-wise Network Freezing}
	\begin{algorithmic}
		\STATE {\bfseries Input:} training set $\mathcal{S}$, freezing iterations $\mathcal{T}_f$, pruning percentage $v$
		%	\KwOutput{$\mathbf{w}^{(T)}$}
		%		
		\STATE initialize $\mathbf{W}^{(0)}$ randomly
            \STATE Identify the set of all prunable structures in the model, $\mathcal{Q} = \{q_1, q_2, \dots, q_L\}$.
		\FOR{$\text{each round}\,\,\, t = 1, 2,\cdots, T$}
        	\IF{$t\in T_f$} 
                    
                    \STATE For each structure $q_i \in \mathcal{Q}$, calculate its importance score, $m_i$.
                    \STATE Calculate the number of structures to prune: $k \leftarrow \lfloor \frac{v}{100} \times m \rfloor$.

		\STATE Freeze the set $\mathcal{Q}_\text{prune} \subset \mathcal{Q}$ containing the $k$ structures with the lowest importance scores.
                \ENDIF
		\STATE	$\mathbf{W}^{t}\leftarrow$PrivateTraining($\mathbf{W}^{(t-1)},\mathcal{S}$)	
		\ENDFOR
	\end{algorithmic}
\end{algorithm}

In the following experiments, we consider magnitude-based unstructured pruning. 
The parameters are set as follows:
\begin{itemize}
    \item DP parameters: $(1,10^{-5})$-DP
    \item Total number of epochs: $T= 10$
    \item Pruning stages $\mathcal{T}_f = \{1,2,3\}$
    \item Percentage of pruning: 77\%
\end{itemize}

The accuracy for LeNet on MNIST is 
\begin{table}[h]
	%\caption{Sample table title}
	\label{table: freezing}
	\begin{center}
		\begin{tabular}{lcc}
        \toprule
			&With freezing  & Without freezing\\
			\midrule
			Accuracy&96.71\% & 95.11\% \\
            \bottomrule
		\end{tabular}
	\end{center}
\end{table}

\section{Discussion about data augmentations}   \label{appendix: data augmentation}
Data augmentation is verified to be useful in Figure 3 in [12]. 
It can increase 2\% accuracy on CIFAR-10 image classification.

\section{Overview of Challenges and Proof Sketch}\label{app: sketch}
In this section, we outline the main challenges in studying feature learning of \ac{dpsgd} on \acp{cnn} and the key proof techniques employed to overcome the challenges.

\subsection{Challenge 1: Non-Smoothness of the ReLU  Activation Function}
The first challenge arises from the non-smoothness of the ReLU activation function.
Some existing papers (e.g., \citep{girgis2021shuffled}) analyzed \ac{dpsgd} with Lipschitz-smoothness-based approaches.
However, this kind of approach is not applicable to ReLU neural networks. 

In a two-layer \acp{cnn}, we can track the neurons' feature learning process through their gradients.
For any $i \in \{1,2\}, r \in [m]$, the gradient on $\mathbf{w}_{i,r}^{(t)}$ can be decomposed as
\begin{equation}\nonumber    \label{eq: gradient}
	\begin{aligned}
		\nabla_{\mathbf{w}_{i,r}^{(t)}} \mathcal{L}_{\mathcal{S}}(\mathbf{W}^{(t)})=&\underbrace{\sum_{j\in\{\text{maj},\text{min}\}}(\mu_{i,j} \mathbf{u}_{i,j}- \mu_{3-i,j} \mathbf{u}_{3-i,j})}_\text{Data features}
		+ \underbrace{\sum_{k=1}^{n}\boldsymbol{\xi}_k (\bar{\rho}_{i,j,k}\mathbb{I}(y_k=i)-\underline{\rho}_{i,j,k}\mathbb{I}(y_k\neq i))}_\text{Data noise},
	\end{aligned} 
\end{equation}
where $\mu_{i,j},\mu_{3-i,j},\bar{\rho}_{i,j,k},\underline{\rho}_{i,j,k} \ge0$ are constants. The neurons tend to learn both class-relevant data features and data noise of the targeted class while unlearning others. 

Some existing approaches (e.g., \citep{cao2022benign}) bound the feature learning process by characterizing the leading neurons that learn the most features.
However, this approach only works for ReLU$^q$ ($q>2$) activation functions, where the leading neurons dominate other neurons during training. As the ReLU function is piece-wise linear, these approaches fail in ReLU neural networks.    

To overcome this challenge, we study the feature learning process by analyzing the dynamics of the model outputs $F_i(\mathbf{W},\mathbf{x}), i\in[2]$ defined in Section \ref{sec: model}.

\subsection{Challenge 2: Randomness from DP-SGD}
The second challenge stems from the randomness introduced by \ac{dpsgd}.
The learning process is significantly perturbed due to the random noise in \ac{dpsgd}.
\cite{kou2023benign} attempted to bound the feature learning process based on the monotonicity of the weights of feature vectors.
However, due to the randomness from \ac{dpsgd}, the weights are not consistently increasing.

To address this challenge, we track the increments of the model outputs for any data point $(\mathbf{x},y) \sim \mathcal{D}_{i,j}$ instead.
The key proposition is presented as follows.

\begin{proposition}\label{proposition: F_increment}
	For any $(\mathbf{x},y)\sim\mathcal{D}_{i,j}, i\in [2], j\in\{\text{maj},\text{min}\}$, with probability at least $1-\delta$,
	\begin{itemize}
		\item The increment of model output for the targeted class $y$ satisfies
		\begin{equation}\label{equ: target output increment}
			\begin{aligned}
				\Delta^{(t)}_y(\mathbf{x}) =&F_y\left(\mathbf{W}^{(t+1)},\mathbf{x}\right) - F_y\left(\mathbf{W}^{(t)},\mathbf{x}\right)\\
				\ge& \Omega\left(\frac{\eta}{\sqrt{m}}\cdot\gamma_{i,j}\cdot\Lambda_{i,j}\cdot\mathbb{E}_{\left(\mathbf{x},y\right)\sim \mathcal{D}_{i,j}}\!\left[1-\textnormal{prob}_{y}\left(\mathbf{W}^{(t)},\mathbf{x}\right)\right]\right)\cdot\left\|\mathbf{u}_{i,j}\right\|_2^2\\
				&- \tilde{\mathcal{O}}\left(\frac{\eta}{m}\left\|\mathbf{u}_{i,j}\right\|_2^2+ \eta\sigma_n\left\|\mathbf{u}_{i,j}\right\|_2 +\frac{\eta}{m\sqrt{n}}\sqrt{d}\sigma_p^2 +\eta \sqrt{d}\sigma_n\sigma_p\right).
			\end{aligned}
		\end{equation}
		\item The increment of model output for the other class $3-y$ satisfies
		\begin{equation}
			\begin{aligned}
			\Delta^{(t)}_{3-y}(\mathbf{x}) = &F_{3-y}\left(\mathbf{W}^{(t+1)},\mathbf{x}\right) - F_{3-y}\left(\mathbf{W}^{(t)},\mathbf{x}\right)\\
				\le&\tilde{\mathcal{O}}\left(\eta\sigma_n\left\|\mathbf{u}_{i,j}\right\|_2+\frac{\eta}{m\sqrt{n}}\sqrt{d}\sigma_p^2 +\eta \sqrt{d}\sigma_n\sigma_p\right).\\
			\end{aligned}
		\end{equation}
	\end{itemize}
\end{proposition}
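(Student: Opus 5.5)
Fix a test point $(\mathbf{x},y)\sim\mathcal{D}_{i,j}$ with feature patch $\mathbf{u}_{i,j}$ and noise patch $\boldsymbol{\xi}$. The plan is to write $\Delta_k^{(t)}(\mathbf{x})$ as a sum over neurons and patches of ReLU increments and control each piece. Write the update as $\mathbf{w}_{k,r}^{(t+1)}-\mathbf{w}_{k,r}^{(t)}=-\eta\,\mathbf{g}_{k,r}^{(t)}+\eta\,\mathbf{n}_{k,r}^{(t)}$, where $\mathbf{g}_{k,r}^{(t)}$ is the clipped mini-batch gradient. Piecewise linearity gives
\[
\sigma(a+b)-\sigma(a)\ \ge\ \mathbb{I}(a>0)\,b-|b|\,\mathbb{I}\bigl(|a|\le|b|\bigr),
\]
and for the upper bound simply $\sigma(a+b)-\sigma(a)\le |b|$ restricted to the relevant terms. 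So for each neuron we need:
\begin{itemize}
\item the inner product of the gradient step with $\mathbf{u}_{i,j}$ and with $\boldsymbol{\xi}$;
\item the inner product of the DP noise with the same two vectors;
\item a bound on how many neurons lie in the "sign-flip" region $|a|\le|b|$.
\end{itemize}

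\textbf{Gradient step.} The gradient w.r.t. $\mathbf{w}_{k,r}$ on a training sample $(\mathbf{x}_s,y_s)$ is $\frac{1}{m}\bigl(\mathrm{prob}_k-\mathbb{I}(y_s=k)\bigr)\sum_p\sigma'(\langle\mathbf{w}_{k,r},\mathbf{x}_s^{(p)}\rangle)\,\mathbf{x}_s^{(p)}$. Its full norm over all $2m$ neurons is at most $\frac{1}{\sqrt m}\bigl(\|\mathbf{u}\|_2+\sqrt d\sigma_p\bigr)$ with high probability, using $d\ge c_2\log(n/\delta)$ for noise-norm concentration. Hence clipping rescales by a factor at least $\min\{1,\sqrt m\,\Lambda_{i,j}\}$, which is where $\Lambda_{i,j}$ and the $1/\sqrt m$ prefactor enter.

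By orthogonality of the features and $\mathbf{H}\mathbf{u}=0$, the projection of the step for $\mathbf{w}_{y,r}$ onto $\mathbf{u}_{i,j}$ collects only the mini-batch samples from $\mathcal{D}_{i,j}$. Each contributes $\frac{1}{m}(1-\mathrm{prob}_y)\|\mathbf{u}_{i,j}\|_2^2$ times the clipping factor, counted only for neurons with $\langle\mathbf{w}_{y,r},\mathbf{u}_{i,j}\rangle>0$. Summing over $r$ and dividing by $m$ in $F_y$ gives the main positive term. For the output $3-y$ the same sign is negative, so it can be dropped in the upper bound. This is why the second bullet has no $\|\mathbf{u}\|^2$ gain term.

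Two concentration steps turn the batch quantity into the population one:
\begin{itemize}
\item Hoeffding over the minibatch (with $B\ge c_1 n$) replaces the batch fraction of $\mathcal{D}_{i,j}$ samples and the empirical average of $1-\mathrm{prob}_y$ by $\gamma_{i,j}\,\mathbb{E}_{\mathcal{D}_{i,j}}[1-\mathrm{prob}_y]$, up to an $\tilde{\mathcal{O}}(1/\sqrt n)$ deviation.
\item The cross term between the gradient's training-noise components $\sum_s\boldsymbol{\xi}_s$ and the test noise $\boldsymbol{\xi}$ is a Gaussian inner product of independent vectors. It is bounded by $\tilde{\mathcal{O}}\bigl(\frac{\eta}{m\sqrt n}\sqrt d\sigma_p^2\bigr)$, using $B\propto n$ so that the batch-averaged noise has norm $\sim\sqrt{d/n}\,\sigma_p$.
\end{itemize}

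\textbf{DP noise.} The noise $\mathbf{n}_{k,r}^{(t)}\sim\mathcal{N}(0,\sigma_n^2\mathbf{I})$ is independent of $\mathbf{x}$. Hence $\langle\mathbf{n},\mathbf{u}_{i,j}\rangle=\tilde{\mathcal{O}}(\sigma_n\|\mathbf{u}_{i,j}\|_2)$ and $\langle\mathbf{n},\boldsymbol{\xi}\rangle=\tilde{\mathcal{O}}(\sqrt d\,\sigma_n\sigma_p)$, after a union bound over neurons and iterations (absorbed in logs). Averaging over $m$ neurons can only help. These give the terms $\eta\sigma_n\|\mathbf{u}\|_2$ and $\eta\sqrt d\sigma_n\sigma_p$ in both bullets.

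\textbf{Main obstacle: ReLU activation-pattern changes.} A neuron whose pre-activation on $\mathbf{u}_{i,j}$ is near zero may switch sign during the step, so the linear gain is lost. Also, the gain is counted only over currently active neurons, and that fraction must be lower bounded. I would first try the following:
\begin{itemize}
\item Use symmetry of the Gaussian initialization plus the fact that the signal drift for $\mathbf{w}_{y,r}$ along $\mathbf{u}_{i,j}$ is nonnegative in expectation. This should show that a constant fraction of neurons stay active on $\mathbf{u}_{i,j}$; the $\sqrt m$-versus-$m$ looseness absorbs constants.
\item Charge any neuron whose sign flips a loss of at most $|b|$. Since $\eta$ satisfies Condition~\ref{condition}(4), $|b|$ is small, and flips cost at most $\tilde{\mathcal{O}}\bigl(\frac{\eta}{m}\|\mathbf{u}_{i,j}\|_2^2\bigr)$ from the signal part plus the noise terms already listed.
\end{itemize}
Condition~\ref{condition}(3) ensures these error terms are dominated in the regime of interest. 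Collecting the terms gives the two stated inequalities, holding simultaneously with probability $1-\delta$.
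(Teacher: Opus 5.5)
Your proposal is essentially the paper's proof. Your per-neuron, per-patch split is its $A_1$--$A_4$ decomposition, and you handle it with the same tools: monotonicity to discard the nonpositive gradient part for class $3-y$, Lipschitzness plus Gaussian concentration for the DP-noise and noise-patch inner products, a lower bound on the clipping multiplier, Hoeffding over the mini-batch, and a constant fraction of neurons active on $\mathbf{u}_{i,j}$.

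Where you deviate, you are sharper:
\begin{itemize}
\item Bounding the whole-network per-sample gradient by $\mathcal{O}\bigl((\|\mathbf{u}_{i,j}\|_2+\sqrt{d}\sigma_p)/\sqrt{m}\bigr)$ gives the clipping factor $\min\{1,\sqrt{m}\Lambda_{i,j}\}$, and hence the stated $\eta/\sqrt{m}$ prefactor. The paper's appendix uses $h\ge\Omega(\Lambda_{i,j})$ and only reaches $\eta\Lambda_{i,j}/m$. To replace the minimum by $\sqrt{m}\Lambda_{i,j}$, you should state the clipping-active assumption $\sqrt{m}\Lambda_{i,j}\lesssim 1$ explicitly.
\item The active-neuron count follows from the pathwise bound $\langle\mathbf{w}^{(t)}_{y,r},\mathbf{u}_{i,j}\rangle\ge\langle\mathbf{w}^{(0)}_{y,r},\mathbf{u}_{i,j}\rangle+\eta\sum_{\tau<t}\langle\mathbf{n}^{(\tau)}_{y,r},\mathbf{u}_{i,j}\rangle$. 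This holds because the gradient drift along $\mathbf{u}_{i,j}$ is nonnegative on every path, not just in expectation as you wrote. The right-hand side is i.i.d. symmetric Gaussian across $r$, so at least $m/4$ neurons are active with high probability. This justifies what the paper merely asserts by applying its initialization sign lemma at iteration $t$.
\end{itemize}
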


Proposition \ref{proposition: F_increment} shows that the model output on the data $(\mathbf{x},y)$ with respect to the targeted class, $F_y(\mathbf{W},\mathbf{x})$, tends to increase over iterations (see term 1 in the RHS of (\ref{equ: target output increment})).
However, due to the noise perturbation introduced by \ac{dpsgd}, the model increment $\Delta^{(t)}_{y}$ becomes smaller and cannot always be positive (because of the second negative term in RHS of (\ref{equ: target output increment})).
In addition, the model output on the data $(\mathbf{x},y)$ with respect to the other class may increase due to the randomness from batches and \ac{dpsgd}.
The results of Proposition \ref{proposition: loss increment} allow us to track the test loss increments, as shown in the following subsection.

\subsection{Challenge 3: Non-Linearity of Cross-Entropy and Softmax Functions}
Due to the non-linearity of cross-entropy and softmax functions, the model output increment bounds in Proposition \ref{proposition: F_increment} cannot be directly applied to bounding test loss.

To tackle this challenge, we bound the non-linear functions with a piece-wise linear function, as stated in Lemma \ref{lemma: non-linear bound}.

\begin{lemma}\label{lemma: non-linear bound}
	Under Assumption \ref{assumption: non-perfect}, we have
	\begin{equation}\nonumber
		\begin{aligned}
			&\mathcal{L}\left(\mathbf{W}^{(t+1)},\mathbf{x},y\right) \!-\! \mathcal{L}\left(\mathbf{W}^{(t)},\mathbf{x},y\right)
			\!\le\! c_1\cdot\sigma\!\left(\Delta^{(t)}_{3-y}(\mathbf{x}) -\Delta^{(t)}_y(\mathbf{x})\right) \!-\! c_2\cdot\sigma\!\left(\Delta^{(t)}_y(\mathbf{x})-\Delta^{(t)}_{3-y}(\mathbf{x})\right)\!,
		\end{aligned}
	\end{equation}
	for some constants $c_1,c_2 >0$.
\end{lemma}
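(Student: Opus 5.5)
The plan is to reduce the statement to a one-dimensional inequality about the cross-entropy as a function of the output margin. For binary softmax, $\mathcal{L}(\mathbf{W},\mathbf{x},y)=\ell(z)$ with $\ell(z)=\log(1+e^{-z})$ and $z=F_y(\mathbf{W},\mathbf{x})-F_{3-y}(\mathbf{W},\mathbf{x})$. Write $z_t$ for the margin at iteration $t$ and set $a=\Delta^{(t)}_y(\mathbf{x})-\Delta^{(t)}_{3-y}(\mathbf{x})$, so that $z_{t+1}=z_t+a$. The claim then reads
\[
\ell(z_t+a)-\ell(z_t)\le c_1\sigma(-a)-c_2\sigma(a).
\]
The function $\ell$ is convex and decreasing, and $|\ell'(z)|=1-\textnormal{prob}_y=1/(1+e^{z})\in(0,1)$.

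\textbf{Case $a\le 0$ (margin decreases).} Here the right-hand side is $c_1|a|$. Since $\ell$ is $1$-Lipschitz, $\ell(z_t+a)-\ell(z_t)\le|\ell'(z_t+a)|\,|a|\le|a|$, so taking $c_1=1$ suffices.

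\textbf{Case $a>0$ (margin increases).} We need $\ell(z_t)-\ell(z_t+a)\ge c_2 a$. By the mean value theorem the left side equals $|\ell'(\tilde z)|\,a$ for some $\tilde z\in[z_t,z_t+a]$, and $|\ell'(\tilde z)|\ge|\ell'(z_t+a)|=1-\textnormal{prob}_y(\mathbf{W}^{(t+1)},\mathbf{x})$. This is where Assumption~\ref{assumption: non-perfect} enters. It gives $\ell(z_{t+1})\ge s$, i.e.\ $z_{t+1}\le\ell^{-1}(s)=-\log(e^{s}-1)$. Consequently $|\ell'(z_{t+1})|\ge 1/(1+(e^s-1)^{-1})=1-e^{-s}$, and we may take $c_2=1-e^{-s}>0$, a constant depending only on $s$.

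The main subtlety is this uniform lower bound on the slope. Without Assumption~\ref{assumption: non-perfect} the margin could run to $+\infty$, the slope would vanish, and no constant $c_2$ would exist. We therefore have to check that the assumption is applied at iterate $t+1$ (it holds for all $t$ as stated), and bound the slope at the right endpoint, using the monotonicity of $|\ell'|$, rather than at $z_t$. The remaining details are to verify that the patch-wise definition of $F_k$ makes the loss depend only on the margin, and to absorb constants. If multiple classes were present one would instead use the Lipschitz bound of log-sum-exp in the difference, but for the binary model here the scalar reduction is exact.
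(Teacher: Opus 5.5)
Your proof is correct and gives explicit constants $c_1=1$, $c_2=1-e^{-s}$. It differs from the paper in where the slope lower bound comes from.

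The paper works with the same one-dimensional function, but parametrized from the current iterate. It writes the loss change as $\log\bigl(1+\tau(e^{x}-1)\bigr)$ with $\tau=1-\text{prob}_y(\mathbf{W}^{(t)},\mathbf{x})$ and $x=\Delta^{(t)}_{3-y}(\mathbf{x})-\Delta^{(t)}_y(\mathbf{x})$ (Lemma~\ref{lemma: Gamma}). For $x\ge 0$ it uses $\log(1+\tau(e^x-1))\le x$, which is your Lipschitz case. For $x<0$ it uses convexity in $x$ to bound the function by its chord on $[-a,0]$. Here $a$ is a uniform bound on $|\Delta^{(t)}_{3-y}-\Delta^{(t)}_y|$ taken from Lemma~\ref{lemma: update_bounded}. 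It then lower-bounds $\tau\ge 1-e^{-s}$ by applying Assumption~\ref{assumption: non-perfect} at iterate $t$, which gives $c_2=-\log\bigl(1+(1-e^{-s})(e^{-a}-1)\bigr)/a$.

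Anchoring the slope at the current point only controls the decrease over a bounded step. The chord constant tends to $0$ like $s/a$ as $a\to\infty$. Indeed, with the assumption only at time $t$ one could have $\ell(z_t)=s$ and an arbitrarily long step, so no uniform $c_2$ exists. The paper's proof therefore genuinely relies on the learning-rate requirement in Condition~\ref{condition}, together with a high-probability bound on $\|\mathbf{n}^{(t)}\|_2$ and $\|\boldsymbol{\xi}\|_2$, even though the lemma is stated under Assumption~\ref{assumption: non-perfect} alone.

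By bounding $|\ell'|$ at the right endpoint $z_{t+1}$ and invoking the assumption at iterate $t+1$, you make the step length irrelevant. Your inequality is deterministic, uses exactly the hypothesis in the lemma's statement, and yields a $c_2$ depending only on $s$. The paper's route would only matter if imperfection were known for the current model alone. Since the assumption is posed for all iterates, your argument is the simpler and tighter one.
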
	

Lemma \ref{lemma: non-linear bound} allows us to apply the model output increment bounds in Proposition \ref{proposition: F_increment} to bound the test loss, as shown in Proposition \ref{proposition: loss increment}.

\begin{proposition}\label{proposition: loss increment}
	Under Condition \ref{condition} and Assumption \ref{assumption: non-perfect}, with probability at least $1-\delta$, for any $i\in[2], j\in\{\text{maj}, \text{min}\}$, we have
	\begin{equation}\nonumber
		\begin{aligned}
			&\mathcal{L}_{\mathcal{D}_{i,j}}\!\left(\!\mathbf{W}^{\left(t+1\right)}\!\right) \!-\! \mathcal{L}_{\mathcal{D}_{i,j}}\!\left(\!\mathbf{W}^{(t)}\!\right) \!\le\!-\Omega\!\left(\!\frac{\eta}{\sqrt{m}}\cdot\gamma_{i,j}\cdot\Lambda_{i,j}\cdot\left\|\mathbf{u}_{i,j}\right\|_2^2\right)\!\cdot\mathbb{E}_{(\mathbf{x},y)\sim \mathcal{D}_{i,j}}\!\left[1\!-\!\textnormal{prob}_{y}\!\!\left(\!\mathbf{W}^{(t)},\mathbf{x}\!\right)\right]\\
			&+\underbrace{\tilde{\mathcal{O}}\left(\frac{\eta}{m}\sqrt{\frac{1}{n}}\left\|\mathbf{u}_{i,j}\right\|_2^2+\eta\sigma_n\left\|\mathbf{u}_{i,j}\right\|_2+\frac{\eta}{m\sqrt{n}}\sqrt{d}\sigma_p^2 +\eta \sqrt{d}\sigma_n\sigma_p\right)}_{:=\phi}.
		\end{aligned}
	\end{equation}
\end{proposition}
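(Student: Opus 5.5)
We combine Lemma~\ref{lemma: non-linear bound} with Proposition~\ref{proposition: F_increment}, applied pointwise for each test example $(\mathbf{x},y)\sim\mathcal{D}_{i,j}$, and then take the expectation over $\mathcal{D}_{i,j}$.

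\textbf{Step 1: reduce to the margin increment.} Write $D^{(t)}(\mathbf{x}) := \Delta^{(t)}_y(\mathbf{x})-\Delta^{(t)}_{3-y}(\mathbf{x})$. Lemma~\ref{lemma: non-linear bound} gives
\[
\mathcal{L}(\mathbf{W}^{(t+1)},\mathbf{x},y)-\mathcal{L}(\mathbf{W}^{(t)},\mathbf{x},y)\le c_1\sigma(-D^{(t)})-c_2\sigma(D^{(t)}).
\]
Since $\sigma(-D)=\sigma(D)-D$, the right-hand side equals $-c_1 D^{(t)}+(c_1-c_2)\sigma(D^{(t)})$. This can be bounded by $-\min\{c_1,c_2\}\, D^{(t)}$ plus a correction that matters only where the sign of $D^{(t)}$ is unfavourable. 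A cleaner route is to split into two cases:
\begin{itemize}
\item If $D^{(t)}\ge 0$, the bound is $-c_2 D^{(t)}$.
\item If $D^{(t)}<0$, the bound is $c_1|D^{(t)}|$. Here $D^{(t)}$ is bounded below by $-(\text{error terms})$, so the increment is at most $c_1$ times those error terms.
\end{itemize}
In both cases the increment is at most $-c\cdot(\text{signal term})+C'\cdot(\text{error terms})$, where the signal term is the positive part of the lower bound on $\Delta_y^{(t)}$.

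\textbf{Step 2: plug in the output bounds.} Subtracting the upper bound on $\Delta^{(t)}_{3-y}$ from the lower bound on $\Delta^{(t)}_y$ in Proposition~\ref{proposition: F_increment} gives, with probability at least $1-\delta$,
\[
D^{(t)}(\mathbf{x})\ge \Omega\!\left(\tfrac{\eta}{\sqrt m}\gamma_{i,j}\Lambda_{i,j}\|\mathbf{u}_{i,j}\|_2^2\right)\mathbb{E}_{\mathcal{D}_{i,j}}[1-\mathrm{prob}_y]-\tilde{\mathcal{O}}(\cdot).
\]
The error is the sum of the two error terms in Proposition~\ref{proposition: F_increment}, and it has the form of $\phi$ except for the term $\frac{\eta}{m}\|\mathbf{u}_{i,j}\|_2^2$. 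The signal coefficient does not depend on the particular $\mathbf{x}$, because it already contains the expectation over $\mathcal{D}_{i,j}$. Taking $\mathbb{E}_{(\mathbf{x},y)\sim\mathcal{D}_{i,j}}$ of the Step~1 bound therefore yields the claimed inequality, with the constants $c_1,c_2$ absorbed into $\Omega$ and $\tilde{\mathcal{O}}$.

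To make this valid, the high-probability events must hold for all test points at once. The events involving DP noise, batch sampling and training-noise inner products with $\mathbf{u}_{i,j}$ and $\boldsymbol{\xi}$ are really events about the weights. We will take a union bound over $t\le T$ and over the four $(i,j)$ groups, rescaling $\delta$ and using the first item of Condition~\ref{condition} for concentration. The randomness of the test noise patch $\boldsymbol{\xi}$ is averaged out inside the expectation.

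\textbf{Main obstacle: the $\frac{\eta}{m}\|\mathbf{u}_{i,j}\|_2^2$ term.} Proposition~\ref{proposition: F_increment} carries this error term, but $\phi$ has $\frac{\eta}{m}\sqrt{1/n}\,\|\mathbf{u}_{i,j}\|_2^2$. This term comes from minibatch fluctuation of the feature coefficient, i.e.\ how far the empirical fraction of class-$i$ group-$j$ samples in $\mathcal{S}^{(t)}$ deviates from $\gamma_{i,j}$. I would redo that part of Proposition~\ref{proposition: F_increment} at the expectation level rather than pointwise. With $B\ge c_1 n$, Hoeffding's inequality gives a deviation of $\tilde{\mathcal{O}}(1/\sqrt{B})=\tilde{\mathcal{O}}(1/\sqrt n)$. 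Any remaining $\mathcal{O}(\eta\|\mathbf{u}_{i,j}\|^2/m)$ contribution that is proportional to the gradient magnitude will be absorbed into the signal term by shrinking the $\Omega$ constant, using $\frac{1}{m}\le\frac{1}{\sqrt m}$ together with the bound on $\Lambda_{i,j}$. Verifying this absorption, which requires care because the signal carries the factor $\mathbb{E}[1-\mathrm{prob}_y]$ and might therefore be small, is where I expect the real difficulty. Where the signal is small, Assumption~\ref{assumption: non-perfect} keeps $1-\mathrm{prob}_y$ bounded below, since $1-\mathrm{prob}_y\ge 1-e^{-s}$.
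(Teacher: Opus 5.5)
Your route is essentially the paper's. The paper bounds $\Delta^{(t)}_{3-y}-\Delta^{(t)}_y$ term by term (the $A_1$--$A_4$ estimates that make up Proposition~\ref{proposition: F_increment}). It gets the $\sqrt{1/n}$ factor exactly as you propose, by applying Hoeffding (Lemma~\ref{lemma: sgd}) with $B\ge c_1 n$ to the minibatch average of $1-\mathrm{prob}_y$ in $A_3$. It then applies the piecewise-linear bound of Lemma~\ref{lemma: Gamma}, with slope in $[\underline{\Gamma},1]$, before averaging over $\mathcal{D}_{i,j}$; these slopes are your $c_1,c_2$, secured by Assumption~\ref{assumption: non-perfect} and the bounded increment of Lemma~\ref{lemma: update_bounded}.

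Your absorption fallback is unnecessary, because that Hoeffding deviation is the only error term proportional to $\|\mathbf{u}_{i,j}\|_2^2$. It would also fail in general. It needs $\gamma_{i,j}\Lambda_{i,j}\sqrt{m}\gtrsim 1$, which the conditions do not give. Moreover, the paper's own $A_3$ computation (at least $m/4$ active neurons, each moving by order $\eta/m$, averaged with weight $1/m$) gives a signal coefficient of order $\eta/m$, not $\eta/\sqrt{m}$. So the $\sqrt{m}$ slack you would be relying on is not actually there.
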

With the fact that under Assumption \ref{assumption: non-perfect},
\begin{align}
	1-\text{prob}_{y}\left(\mathbf{W}^{(t)},\mathbf{x}\right) = \Theta\left(1\right)\cdot\mathcal{L}\left(\left(\mathbf{W}^{(t)}\right),\mathbf{x},y\right)
\end{align}
holds, Proposition \ref{proposition: loss increment} can be applied to establish the following test loss bound
\begin{equation}\label{equ: loss_re}
	\begin{aligned}
		\mathcal{L}_{\mathcal{D}_{i,j}}\left(\mathbf{W}^{\left(t+1\right)}\right) \!\le&\! \left(1\!-\!\Omega\left(\frac{\eta}{\sqrt{m}}\cdot\gamma_{i,j}\cdot \Lambda_{i,j}\cdot\left\|\mathbf{u}_{i,j}\right\|_2^2\right)\!\right)\!\cdot\mathcal{L}_{\mathcal{D}_{i,j}}\!\left(\mathbf{W}^{\left(t\right)}\right) + \phi.
	\end{aligned}
\end{equation}
Recursively applying (\ref{equ: loss_re}) over $T$ iterations yields Theorem \ref{theorem: convergence}.

\section{Proof}\label{app:proof}
\subsection{Preliminaries}
\begin{lemma}[Gaussian distribution tail bound]\label{lemma: G tail}
	A variable $x$ following $\mathcal{N}(0, \sigma_0^2)$ satisfies
	\begin{align}
		\mathbb{P}[x \ge t\sigma_0], \mathbb{P}[x \le -t\sigma_0] \le \exp\left(-\frac{t^2}{2}\right),\forall t\ge0.
	\end{align}
\end{lemma}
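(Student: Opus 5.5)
The statement is the standard Gaussian tail bound. I will prove it with the Chernoff (exponential Markov) method, which gives exactly the constant $\exp(-t^2/2)$ with no prefactor.

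\textbf{Reduction.} By symmetry of $\mathcal{N}(0,\sigma_0^2)$ about the origin, $x$ and $-x$ have the same law, so $\mathbb{P}[x\le -t\sigma_0]=\mathbb{P}[x\ge t\sigma_0]$. It therefore suffices to treat the upper tail. Writing $z=x/\sigma_0\sim\mathcal{N}(0,1)$ (assuming $\sigma_0>0$; the degenerate case $\sigma_0=0$ is trivial since the event $\{0\ge 0\}$ then has probability $1$ only when $t=0$, where the bound $\exp(0)=1$ holds), the claim becomes
\begin{equation*}
\mathbb{P}[z\ge t]\le \exp\left(-\frac{t^2}{2}\right) \quad \text{for all } t\ge 0 .
\end{equation*}

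\textbf{Chernoff step.} For any $\lambda>0$, the map $u\mapsto e^{\lambda u}$ is increasing and positive. Markov's inequality then gives
\begin{equation*}
\mathbb{P}[z\ge t]=\mathbb{P}\left[e^{\lambda z}\ge e^{\lambda t}\right]\le e^{-\lambda t}\,\mathbb{E}\left[e^{\lambda z}\right].
\end{equation*}

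\textbf{Moment generating function.} The standard normal MGF is $\mathbb{E}[e^{\lambda z}]=e^{\lambda^2/2}$. I obtain it by completing the square in the exponent,
\begin{equation*}
\lambda u-\frac{u^2}{2}=-\frac{(u-\lambda)^2}{2}+\frac{\lambda^2}{2},
\end{equation*}
and noting that the remaining integrand is a shifted Gaussian density with total mass $1$. This yields $\mathbb{P}[z\ge t]\le \exp\left(\frac{\lambda^2}{2}-\lambda t\right)$ for every $\lambda>0$.

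\textbf{Optimization.} For $t>0$, minimizing over $\lambda$ gives $\lambda=t$, and the bound becomes $\exp(-t^2/2)$. For $t=0$, the right-hand side is $1$, so the inequality holds trivially.

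None of these steps is a real obstacle. The only point needing care is that the MGF computation must be exact, not merely an upper bound, so that the constant in the exponent is precisely $1/2$ and no multiplicative factor appears in front of $\exp(-t^2/2)$.
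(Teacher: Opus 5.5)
The paper gives no proof of this lemma; it lists it among the preliminaries as a standard fact and uses it as a black box. Your Chernoff argument is the standard proof and is correct and complete for $\sigma_0>0$: Markov's inequality applied to $e^{\lambda z}$, the exact MGF $e^{\lambda^2/2}$, the choice $\lambda=t$, and symmetry for the lower tail.

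One remark in your reduction is wrong. If $\sigma_0=0$, then $x=0$ almost surely and $t\sigma_0=0$ for every $t$. Hence $\mathbb{P}[x\ge t\sigma_0]=\mathbb{P}[0\ge 0]=1$ for all $t\ge 0$, not only for $t=0$. Since $\exp(-t^2/2)<1$ for $t>0$, the degenerate case is actually a counterexample to the stated bound, not a trivial case. You should state explicitly that the lemma assumes $\sigma_0>0$, which is how the paper uses it, instead of claiming the degenerate case is covered.
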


\begin{lemma}[Chi-squared distribution tail bound]\label{lemma: chi-squared}
	A variable $\mathbf{x} \sim \mathcal{N}(0, \sigma_0^2\mathbf{I}_d)$ satisfies
	With probability at least $1-\exp(-td/10)$, we have
	\begin{align}
		\mathbb{P}\left[\left\|\mathbf{x}\right\|_2\ge \sigma_0\sqrt{2(t+1)d}\right] \le \exp(-(t+1)d/10), \forall t \ge 0.
	\end{align}
\end{lemma}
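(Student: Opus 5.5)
We reduce to a standard Gaussian vector and apply a Chernoff bound to its squared norm. Write $\mathbf{x} = \sigma_0 \mathbf{z}$ with $\mathbf{z}\sim\mathcal{N}(0,\mathbf{I}_d)$. Then
\[
\left\{\left\|\mathbf{x}\right\|_2\ge \sigma_0\sqrt{2(t+1)d}\right\} = \left\{\left\|\mathbf{z}\right\|_2^2 \ge 2(t+1)d\right\},
\]
so it suffices to bound the upper tail of the $\chi^2_d$ variable $\left\|\mathbf{z}\right\|_2^2=\sum_{k=1}^d z_k^2$. The leading phrase ``with probability at least $1-\exp(-td/10)$'' in the statement we read as a restatement of the displayed bound; the content to prove is the displayed inequality for all $t\ge 0$.

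The key step uses the moment generating function. For $0<\lambda<1/2$, independence of the coordinates gives $\mathbb{E}\exp(\lambda \|\mathbf{z}\|_2^2) = (1-2\lambda)^{-d/2}$, since each $z_k^2$ contributes a factor $\mathbb{E} e^{\lambda z_k^2} = (1-2\lambda)^{-1/2}$. Markov's inequality applied to $e^{\lambda\|\mathbf{z}\|_2^2}$ then yields
\[
\mathbb{P}\left[\|\mathbf{z}\|_2^2\ge 2(t+1)d\right] \le (1-2\lambda)^{-d/2}\exp\left(-2\lambda(t+1)d\right).
\]
Rather than optimizing $\lambda$ exactly, we fix the convenient value $\lambda = 1/4$. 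This gives the bound
\[
2^{d/2}\exp\left(-\tfrac{(t+1)d}{2}\right) = \exp\left(d\left(\tfrac{\ln 2}{2} - \tfrac{t+1}{2}\right)\right).
\]

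It remains to check that this is at most $\exp(-(t+1)d/10)$. That inequality is equivalent to $\tfrac{\ln 2}{2}\le \tfrac{2}{5}(t+1)$. The right-hand side is at least $0.4$ for $t\ge 0$, while $\tfrac{\ln 2}{2}\approx 0.347$, so the condition holds.

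There is no real obstacle here. The only delicate point is choosing $\lambda$ so that the constant $1/10$ comes out uniformly in $t\ge 0$. The choice $\lambda=1/4$ already leaves slack at $t=0$, and the slack only grows with $t$. Alternatively, one could cite the Laurent--Massart inequality $\mathbb{P}[\|\mathbf{z}\|_2^2 - d\ge 2\sqrt{ds}+2s]\le e^{-s}$ with $s=(t+1)d/10$. One checks that $d+2\sqrt{ds}+2s\le 2(t+1)d$, which holds since $1+2\sqrt{(t+1)/10}+(t+1)/5\le 2(t+1)$ for $t\ge0$. The direct Chernoff route is self-contained, so that is the one we would write out.
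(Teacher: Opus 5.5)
Your proof is correct. The paper states this lemma without any proof and treats it as a standard Gaussian concentration fact, so there is no argument of the paper's to compare against; your Chernoff computation supplies the missing justification. The moment generating function $(1-2\lambda)^{-d/2}$, the choice $\lambda=1/4$, and the final check $\frac{\ln 2}{2}\le\frac{2}{5}(t+1)$ for all $t\ge 0$ are all right. Your Laurent--Massart alternative with $s=(t+1)d/10$ also checks out.

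On the garbled leading phrase: your displayed bound already gives $\|\mathbf{x}\|_2<\sigma_0\sqrt{2(t+1)d}$ with probability at least $1-\exp(-(t+1)d/10)\ge 1-\exp(-td/10)$. So the ``with probability'' reading of the statement is covered too.
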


\begin{lemma}\label{lemma: norm}
    Suppose that $\delta >0$ and $d = \Omega(\log(n/\delta))$. With probability $1-\delta$, we have
    \begin{align}
       \frac{\sigma_p\sqrt{d}}{2} \le\|\boldsymbol{\xi}_i\|_2^2 \le \frac{3\sigma_p\sqrt{d}}{2}.
    \end{align}
\end{lemma}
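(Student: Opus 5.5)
The plan is to read the statement as a two-sided concentration bound for the norms of the $n$ training noise patches $\boldsymbol{\xi}_1,\dots,\boldsymbol{\xi}_n$, and to prove it with a chi-squared tail bound followed by a union bound. The display, read literally, compares a squared norm with $\sigma_p\sqrt{d}$, which is dimensionally inconsistent. I will therefore prove the intended version
$$\tfrac{1}{2}\sigma_p\sqrt{d}\le\|\boldsymbol{\xi}_i\|_2\le\tfrac{3}{2}\sigma_p\sqrt{d}\quad\text{for all } i\in[n].$$
The same argument also gives the squared form $\tfrac12\sigma_p^2 d\le\|\boldsymbol{\xi}_i\|_2^2\le\tfrac32\sigma_p^2 d$. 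This is the form in which the lemma is used later, namely that $\|\boldsymbol{\xi}\|_2^2$ concentrates around $\sigma_p^2 d$, as announced after Condition~\ref{condition}.

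\textbf{Step 1 (reduction to a chi-squared variable).} Since $\boldsymbol{\xi}_i\sim\mathcal{N}(0,\sigma_p^2\mathbf{H})$, and $\mathbf{H}$ is the orthogonal projection onto the complement of the span of the four orthogonal features, $\mathbf{H}$ has rank $k=d-4$. Writing $\boldsymbol{\xi}_i=\sigma_p\mathbf{H}\mathbf{g}$ with $\mathbf{g}\sim\mathcal{N}(0,\mathbf{I}_d)$ and using rotation invariance, $\|\boldsymbol{\xi}_i\|_2^2/\sigma_p^2$ is distributed as $\chi^2_{k}$.

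\textbf{Step 2 (two-sided tail bound).} The upper tail is covered by Lemma~\ref{lemma: chi-squared}. For both tails at once I will use the Laurent--Massart inequality: for every $x>0$, $\mathbb{P}\big[|\chi^2_k-k|\ge 2\sqrt{kx}+2x\big]\le 2e^{-x}$. Setting $x=\log(2n/\delta)$ gives, for a single $i$, a failure probability of at most $\delta/n$.

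\textbf{Step 3 (absorbing the deviation).} On the success event, $\big|\|\boldsymbol{\xi}_i\|_2^2-\sigma_p^2(d-4)\big|\le\sigma_p^2\big(2\sqrt{d\log(2n/\delta)}+2\log(2n/\delta)\big)$. The condition $d\ge c\log(n/\delta)$ with $c$ a large constant (item 1 of Condition~\ref{condition}) makes the right-hand side, together with the shift of $4$ coming from the rank deficiency, at most $\sigma_p^2 d/2$. Hence $\|\boldsymbol{\xi}_i\|_2^2\in[\tfrac12\sigma_p^2 d,\tfrac32\sigma_p^2 d]$. Taking square roots gives $\|\boldsymbol{\xi}_i\|_2\in[\sigma_p\sqrt{d/2},\sigma_p\sqrt{3d/2}]$, which lies inside $[\tfrac12\sigma_p\sqrt d,\tfrac32\sigma_p\sqrt d]$.

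\textbf{Step 4 (union bound).} A union bound over $i\in[n]$ gives total failure probability at most $\delta$, as required.

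There is no real technical obstacle. The only delicate point is bookkeeping: choose the constant $c_2$ in Condition~\ref{condition} large enough that both the $\sqrt{d\log(n/\delta)}$ term and the $\log(n/\delta)$ term are at most a small fraction of $d$. This also requires $d\ge 8$ or so, so that the rank loss $d-4$ is negligible, which is implied by the same condition.
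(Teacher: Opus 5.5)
Your proof is correct and takes essentially the paper's route. The paper likewise bounds $\bigl|\|\boldsymbol{\xi}_i\|_2^2-\sigma_p^2(d-4)\bigr|$ by $\mathcal{O}(\sigma_p^2\sqrt{d\log(2n/\delta)})$ with failure probability $\delta/n$, using Bernstein's inequality (your Laurent--Massart bound is the explicit chi-squared version), and then takes a union bound over the $n$ patches.

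Your reading of the dimensionally inconsistent display as concentration of $\|\boldsymbol{\xi}_i\|_2^2$ around $\sigma_p^2 d$ is exactly what the paper's argument actually establishes. Its proof contains the matching typo $\sigma_p(d-4)$ in place of $\sigma_p^2(d-4)$.
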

\begin{proof}
    By Bernstein’s inequality, with probability at least $1-\delta/n$, we have
    \begin{align}
        |\left\|\boldsymbol{\xi}_i\right\|_2^2-\sigma_p(d-4)| = \mathcal{O}(\sigma_p^2\sqrt{d\log(2n/\delta)}). 
    \end{align}
    Applying a union bound finishes the proof.
\end{proof}

\begin{lemma}[Half-normal distribution concentration bound]\label{lemma: half normal}
	Suppose $x_1,x_2,\cdots,x_n \sim \mathcal{N}(0,\sigma_0^2)$.
	Then, with probability at least $1-\delta$,
    \begin{align}
       \sqrt{\frac{2}{\pi}}\sigma_0 - \frac{\sqrt{2\log(2/\delta)}}{\sqrt{n}}\sigma_0\le \frac{1}{n}\sum_{i=1}^{n}\left|x_i\right| \le \sqrt{\frac{2}{\pi}}\sigma_0 + \frac{\sqrt{2\log(2/\delta)}}{\sqrt{n}}\sigma_0.
    \end{align}
\end{lemma}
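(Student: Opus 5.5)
The plan is to read $\frac{1}{n}\sum_{i=1}^{n}|x_i|$ as a Lipschitz function of a standard Gaussian vector and apply Gaussian concentration for Lipschitz functions (the Tsirelson--Ibragimov--Sudakov inequality). That inequality gives exactly the constant $2$ in the exponent needed to match the stated deviation $\sqrt{2\log(2/\delta)/n}\,\sigma_0$.

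\textbf{Step 1 (the mean).} Write $x_i=\sigma_0 g_i$ with $g_i\sim\mathcal{N}(0,1)$ i.i.d. A direct integral gives
$$\mathbb{E}|x_i|=2\sigma_0\int_0^\infty \frac{t}{\sqrt{2\pi}}e^{-t^2/2}\,dt=\sqrt{\tfrac{2}{\pi}}\,\sigma_0 .$$
Hence, with $f(\mathbf{g})=\frac{\sigma_0}{n}\sum_{i=1}^n|g_i|$, we have $\mathbb{E}f(\mathbf{g})=\sqrt{2/\pi}\,\sigma_0$, which is the center of the claimed interval.

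\textbf{Step 2 (Lipschitz constant).} For $\mathbf{a},\mathbf{b}\in\mathbb{R}^n$, the reverse triangle inequality and Cauchy--Schwarz give
$$|f(\mathbf{a})-f(\mathbf{b})|\le \frac{\sigma_0}{n}\sum_{i=1}^n|a_i-b_i| = \frac{\sigma_0}{n}\left\|\mathbf{a}-\mathbf{b}\right\|_1\le \frac{\sigma_0}{\sqrt{n}}\left\|\mathbf{a}-\mathbf{b}\right\|_2 .$$
So $f$ is $L$-Lipschitz with respect to the Euclidean norm, with $L=\sigma_0/\sqrt{n}$.

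\textbf{Step 3 (concentration and union bound).} Gaussian concentration for an $L$-Lipschitz function of a standard Gaussian vector states that
$$\mathbb{P}\big[f-\mathbb{E}f\ge s\big]\le \exp\!\left(-\frac{s^2}{2L^2}\right),$$
and the same bound holds for the lower tail. Take $s=\sigma_0\sqrt{2\log(2/\delta)/n}$. Then
$$\frac{s^2}{2L^2}=\frac{2\sigma_0^2\log(2/\delta)/n}{2\sigma_0^2/n}=\log(2/\delta),$$
so each tail has probability at most $\delta/2$. A union bound over the two tails gives the two-sided statement with probability at least $1-\delta$.

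\textbf{Main obstacle.} The only delicate point is getting the sharp constant. A cruder route would treat each $|x_i|-\mathbb{E}|x_i|$ as sub-Gaussian and apply Hoeffding or Bernstein. That route typically loses an absolute constant in the variance proxy, leaving an unspecified constant in front of $\sqrt{\log(2/\delta)/n}$.

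Using the dimension-free Gaussian Lipschitz concentration on the whole vector at once avoids this loss. The route above produces exactly the stated deviation.
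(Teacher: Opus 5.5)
Your proof is correct and gives exactly the stated constant. It uses a different key lemma from the paper.

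\textbf{The paper's route.} The paper works coordinate-wise. It asserts that each $|x_i|$ is sub-Gaussian, since the centered variable is bounded below by $-\sqrt{2/\pi}\,\sigma_0$ and has a Gaussian upper tail. It then invokes Hoeffding's inequality for the average of independent sub-Gaussian variables to get a per-tail bound of $\exp(-nt^2/2)$.

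\textbf{What the paper's route buys, and what it leaves open.} It is more generic: it would work for any independent summands with a known variance proxy, not only Gaussian ones. As written, however, it does not justify the sharp constant. Getting exactly $\exp(-nt^2/2)$ requires the centered variable to have variance proxy exactly $\sigma_0^2$, that is,
\[
\mathbb{E}\exp\bigl(\lambda(|x_i|-\sqrt{2/\pi}\,\sigma_0)\bigr)\le \exp(\lambda^2\sigma_0^2/2).
\]
A tail heuristic does not deliver this constant. The classical Hoeffding inequality covers bounded variables, and its general sub-Gaussian version carries unspecified absolute constants, which is the obstacle you flag. The cleanest way to certify this proxy is the MGF form of Gaussian Lipschitz concentration (log-Sobolev plus the Herbst argument), applied to the $1$-Lipschitz map $g\mapsto |g|$. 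Independence and a Chernoff bound then finish.

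\textbf{What your route buys.} You apply the same concentration principle once, to the $\sigma_0/\sqrt{n}$-Lipschitz average, and let its dimension-free nature handle the tensorization. Your argument is therefore essentially the rigorous form of the paper's sketch, at the cost of being specific to Gaussian inputs.

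Both arguments need the $x_i$ to be independent, which the statement leaves implicit.
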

\begin{proof}
	First, half-normal variables $|x_i|,\forall i\in[n]$ are sub-Gaussian as a half-normal variable has a negative tail bounded by $-\sqrt{\frac{2}{\pi}}$ and a Gaussian delay positive tail.
	Then, by Hoeffding's inequality, we have
	\begin{align}
		\mathbb{P}\left[\frac{1}{n}\sum_{i=1}^{n}\left|x_i\right| - \sqrt{\frac{2}{\pi}}\sigma_0 \ge t\sigma_0\right], \mathbb{P}\left[\frac{1}{n}\sum_{i=1}^{n}\left|x_i\right| - \sqrt{\frac{2}{\pi}}\sigma_0 \le -t\sigma_0\right]\le \exp\left(-\frac{nt^2}{2}\right), \forall t\ge 0.
	\end{align}
    Therefore, with probability $1-\delta$, we have
    \begin{align}
       \sqrt{\frac{2}{\pi}}\sigma_0 - \frac{\sqrt{2\log(2/\delta)}}{\sqrt{n}}\sigma_0\le \frac{1}{n}\sum_{i=1}^{n}\left|x_i\right| \le \sqrt{\frac{2}{\pi}}\sigma_0 + \frac{\sqrt{2\log(2/\delta)}}{\sqrt{n}}\sigma_0.
    \end{align}
    This completes the proof.
\end{proof}
\begin{lemma}\label{lemma: num concentration}
	Let $x_1, \cdots, x_m$ be $m$ independent zero-mean Gaussian variables.
	Denote $z_i$ as indicators for signs of $x_i$, i.e., for all $i \in [m]$,
	\begin{align}
		z_i = 
		\begin{cases}
			1, & x_i>0,\\
			0, & x_i\le0.
		\end{cases}
	\end{align}
	Then, we have
	\begin{align}
		\mathbb{P}\left[\sum_{i=1}^{m}z_i \ge \frac{m}{4}\right] \ge 1-\exp\left(-2m\right).
	\end{align}
\end{lemma}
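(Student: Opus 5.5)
The plan is to reduce the claim to a standard concentration bound for a sum of i.i.d.\ Bernoulli variables. First, since each $x_i$ is a zero-mean Gaussian, it is symmetric with $\mathbb{P}[x_i=0]=0$. Hence $\mathbb{P}[x_i>0]=1/2$. Independence of the $x_i$ makes the indicators $z_1,\dots,z_m$ independent $\mathrm{Bernoulli}(1/2)$ variables, so $S=\sum_{i=1}^m z_i\sim\mathrm{Bin}(m,1/2)$ with $\mathbb{E}[S]=m/2$.

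Second, the event $\{S<m/4\}$ is a lower deviation of size $m/4$ below the mean. Each $z_i\in[0,1]$, so Hoeffding's inequality gives
\begin{align}
\mathbb{P}\left[S-\frac{m}{2}\le -\frac{m}{4}\right]\le \exp\left(-\frac{2(m/4)^2}{m}\right)=\exp\left(-\frac{m}{8}\right).
\end{align}
Taking complements yields $\mathbb{P}[S\ge m/4]\ge 1-\exp(-m/8)$. If a sharper constant is wanted, I would apply the Chernoff bound in relative-entropy form, $\mathbb{P}[S\le m/4]\le \exp(-m\,\mathrm{KL}(1/4\,\|\,1/2))$. This improves the exponent only to about $0.131\,m$.

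The main obstacle is the constant in the exponent, and I expect it cannot be fixed. The exponent $2m$ as stated is not attainable: for $m=1$, $\mathbb{P}[z_1\ge 1/4]=1/2<1-e^{-2}$. More generally, $\mathbb{P}[S<m/4]\ge \mathbb{P}[S=0]=2^{-m}$, which exceeds $e^{-2m}$ for every $m\ge1$. So the argument above establishes the lemma with $\exp(-m/8)$, or $\exp(-cm)$ for an absolute constant $c\le \mathrm{KL}(1/4\|1/2)$, in place of $\exp(-2m)$. I would state and use that corrected version. Downstream it only serves to guarantee that a constant fraction of neurons are activated with failure probability exponentially small in $m$, so this change should not affect later steps beyond constants, provided $m=\Omega(\log(1/\delta))$.
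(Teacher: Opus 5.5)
Your proposal is correct and uses the same tool as the paper, Hoeffding's inequality for the $[0,1]$-valued indicators, but you apply it correctly. The paper's display divides $2m^2\epsilon^2$ by $m\cdot\frac{1}{16}$ instead of by $\sum_i (b_i-a_i)^2=m$, and it also drops the minus sign in the exponent. That wrong denominator is exactly what turns the correct $\exp(-m/8)$ into the claimed $\exp(-2m)$.

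Your bound $\mathbb{P}[S<m/4]\ge\mathbb{P}[S=0]=2^{-m}>e^{-2m}$ shows the lemma as stated is false for every $m\ge 1$. So the corrected version with $\exp(-m/8)$, or $\exp(-m\,\mathrm{KL}(1/4\,\|\,1/2))$, is the right statement.

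One refinement to your downstream remark. In the proof of Theorem~\ref{theorem: convergence} the activation count is invoked at every one of the $T$ iterations. The union bound therefore needs $m=\Omega(\log(T/\delta))$ rather than $m=\Omega(\log(1/\delta))$.
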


\begin{proof}
	Because $z_i, i\in[m]$ are bounded in $[0,1]$, $z_i, i\in[m]$ are sub-Gaussian variables.
	By Hoeffding’s inequality, we have
	\begin{align}
		\mathbb{P}\left[m\cdot\left(\frac{1}{m}\sum_{i=1}^{m}z_i\right)\le m\cdot\left(\frac{1}{2}-\epsilon\right)\right] \le \exp\left(\frac{2m^2\epsilon^2}{m(\frac{1}{16})}\right).
	\end{align}
	Let $\epsilon = \frac{1}{4}$, we have
	\begin{align}
		\mathbb{P}\left[\sum_{i=1}^{m}z_i \le \frac{m}{4}\right] \le \exp\left(-2m\right).
	\end{align}
	Therefore, we have
	\begin{align}
		\mathbb{P}\left[\sum_{i=1}^{m}z_i \ge \frac{m}{4}\right] \ge 1 - \exp\left(-2m\right).
	\end{align}
	This completes the proof.
\end{proof}

\begin{lemma}\label{lemma: two_gaussian}
	Let $x_1$ be a Gaussian variable following $\mathcal{N}(0,\sigma_1)$ and $x_2$ be a Gaussian variable following $\mathcal{N}(0,\sigma_2)$.
	Then, with probability at least $1-\exp(\frac{t_1^2}{2})-\exp(\frac{t_2^2}{2})$, we have
	\begin{align}
		\langle x_1,x_2 \rangle \le t_1t_2\sigma_1\sigma_2.
	\end{align}
\end{lemma}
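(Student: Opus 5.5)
The plan is to reduce the claim to two-sided tail bounds on each variable and then combine them with a union bound. I read $\sigma_1,\sigma_2$ as standard deviations, consistent with the convention of Lemma~\ref{lemma: G tail}. I read the probability as $1-\exp(-t_1^2/2)-\exp(-t_2^2/2)$, since the displayed exponents are clearly missing minus signs, and I take $t_1,t_2\ge 0$. The scalar inner product is just the product, and the elementary inequality
\begin{align}
\langle x_1,x_2\rangle = x_1x_2 \le |x_1|\,|x_2|
\end{align}
reduces everything to showing that $|x_1|\le t_1\sigma_1$ and $|x_2|\le t_2\sigma_2$ hold simultaneously with the stated probability. No independence between $x_1$ and $x_2$ is needed, because the union bound does not use it.

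\textbf{Step 1 (two-sided tail).} For $x\sim\mathcal{N}(0,\sigma^2)$ I need $\mathbb{P}[|x|\ge t\sigma]\le \exp(-t^2/2)$, with no factor $2$. Applying Lemma~\ref{lemma: G tail} to each tail separately would give $2\exp(-t^2/2)$, which is too weak for the stated constant. Instead I will use the sharper standard bound $\bar\Phi(t)\le \tfrac12 e^{-t^2/2}$ for all $t\ge 0$, where $\bar\Phi$ is the standard normal tail. To prove it, set $g(t)=\tfrac12 e^{-t^2/2}-\bar\Phi(t)$. Then $g(0)=0$ and $g(t)\to 0$ as $t\to\infty$. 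Its derivative is $g'(t)=e^{-t^2/2}\big(\tfrac{1}{\sqrt{2\pi}}-\tfrac t2\big)$, which is positive and then negative, so $g$ first increases and then decreases toward $0$, and hence $g\ge 0$ everywhere. By symmetry of the Gaussian, $\mathbb{P}[|x|\ge t\sigma]=2\bar\Phi(t)\le e^{-t^2/2}$.

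\textbf{Step 2 (union bound and conclusion).} Apply Step 1 to $x_1$ with $(t_1,\sigma_1)$ and to $x_2$ with $(t_2,\sigma_2)$. By the union bound, with probability at least $1-\exp(-t_1^2/2)-\exp(-t_2^2/2)$ both $|x_1|\le t_1\sigma_1$ and $|x_2|\le t_2\sigma_2$ hold. On this event, the reduction above gives $x_1x_2\le t_1t_2\sigma_1\sigma_2$, which is the claim.

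The only nontrivial point is the constant in Step 1, namely avoiding the factor $2$. If matching the constant exactly were not needed, one could apply Lemma~\ref{lemma: G tail} to each tail and accept the failure probability $2\exp(-t_1^2/2)+2\exp(-t_2^2/2)$. That weaker constant would be harmless in every later application, because those applications only need high-probability bounds up to constants.
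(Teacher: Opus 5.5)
Your proof is correct under the readings you adopt (minus signs in the exponents, $\sigma_1,\sigma_2$ as standard deviations as in Lemma~\ref{lemma: G tail}, scalar variables, $t_1,t_2\ge 0$). Some such repair is unavoidable, since the printed bound with $\exp(+t_i^2/2)$ is vacuous.

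The paper's own argument is only the remark that the lemma follows from Lemma~\ref{lemma: G tail}, which is a one-sided tail bound. Combining that bound with $x_1x_2\le|x_1||x_2|$ and a union bound over both tails of both variables gives failure probability $2e^{-t_1^2/2}+2e^{-t_2^2/2}$, not the stated constant. You spotted this factor-of-$2$ gap and closed it by proving the sharper estimate $\bar\Phi(t)\le\frac12 e^{-t^2/2}$; your monotonicity argument for $g$ is sound. This yields exactly the stated constant and needs no independence between $x_1$ and $x_2$.

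If you are willing to assume independence (presumably the intended setting), there is an alternative that uses only Lemma~\ref{lemma: G tail}. The event $\{x_1x_2>t_1t_2\sigma_1\sigma_2\}$ forces $x_1,x_2$ to have equal signs and forces $|x_1|>t_1\sigma_1$ or $|x_2|>t_2\sigma_2$. So it lies in the union of $\{x_1>t_1\sigma_1,\,x_2>0\}$, $\{x_1<-t_1\sigma_1,\,x_2<0\}$ and the two analogous events with the roles of the variables swapped. By independence and sign symmetry, each of these has probability at most $\frac12 e^{-t_i^2/2}$, and summing gives the same constant.

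Your route buys independence-freeness at the price of a slightly sharper Gaussian tail estimate. The alternative stays within the paper's stated lemma but needs independence. As you note, the constant is immaterial for high-probability uses anyway.
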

Lemma \ref{lemma: two_gaussian} can be proved by using Lemma \ref{lemma: G tail}.

\begin{lemma}\label{lemma: sgd}
	For $N$ \ac{iid} random variables $x_1, \cdots, x_N \in[0,1]$ with expectation $\mu$, with probability $1-\delta$, we have
	\begin{align}
		\frac{1}{N}\sum_{i=1}^{N} x_i - \mu \le \sqrt{\frac{\log(1/\delta)}{2N}}.
	\end{align}
	with $t>0$.
\end{lemma}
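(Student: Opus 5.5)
This is the standard one-sided Hoeffding inequality for bounded i.i.d. variables. The plan is to prove it directly with the Chernoff (exponential moment) method, since no earlier result in the paper covers it.

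\textbf{Step 1 (exponential Markov).} Let $\bar{x}=\frac{1}{N}\sum_{i=1}^N x_i$. For any $\lambda>0$ and $\epsilon>0$, Markov's inequality applied to $e^{\lambda N(\bar{x}-\mu)}$ gives
\begin{align}
\mathbb{P}\left[\bar{x}-\mu\ge\epsilon\right]\le e^{-\lambda N\epsilon}\,\mathbb{E}\left[e^{\lambda\sum_i (x_i-\mu)}\right]=e^{-\lambda N\epsilon}\prod_{i=1}^N\mathbb{E}\left[e^{\lambda(x_i-\mu)}\right].
\end{align}
The factorization of the expectation into a product uses independence.

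\textbf{Step 2 (Hoeffding's lemma).} Each $x_i-\mu$ has mean zero and lies in an interval of length $1$. Hoeffding's lemma therefore gives $\mathbb{E}[e^{\lambda(x_i-\mu)}]\le e^{\lambda^2/8}$. This is the only step with real content. To prove it, bound $e^{\lambda x}$ on $[0,1]$ by its chord, $e^{\lambda x}\le (1-x)+x e^{\lambda}$, which holds by convexity. Taking expectations and writing the result as $e^{g(\lambda)}$ with $g(\lambda)=-\lambda\mu+\log(1-\mu+\mu e^{\lambda})$, one checks that $g(0)=g'(0)=0$ and $g''\le 1/4$. Taylor's theorem then gives $g(\lambda)\le\lambda^2/8$. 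I treat this as standard and cite it rather than redo the computation.

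\textbf{Step 3 (optimize and invert).} Combining the two steps gives $\mathbb{P}[\bar{x}-\mu\ge\epsilon]\le\exp(-\lambda N\epsilon+N\lambda^2/8)$. Choosing $\lambda=4\epsilon$ yields the bound $\exp(-2N\epsilon^2)$. Setting $\exp(-2N\epsilon^2)=\delta$ gives $\epsilon=\sqrt{\log(1/\delta)/(2N)}$. Hence with probability at least $1-\delta$ we have $\bar{x}-\mu\le\sqrt{\log(1/\delta)/(2N)}$, as claimed. The trailing phrase ``with $t>0$'' in the statement is vacuous and plays no role.
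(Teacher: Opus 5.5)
Your proposal is correct and follows the same route as the paper, which invokes Hoeffding's inequality as $\mathbb{P}[\frac{1}{N}\sum_{i=1}^N x_i-\mu\le\sqrt{t/N}]\ge 1-\exp(-2t)$ and then sets $\exp(-2t)=\delta$. The only difference is that you derive the tail bound $\exp(-2N\epsilon^2)$ yourself via the Chernoff method and Hoeffding's lemma, whereas the paper simply cites it; your bound $g''\le 1/4$ and the choice $\lambda=4\epsilon$ are both right.
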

\begin{proof}
    From Hoeffding's inequality, we have 
	\begin{align}
		\mathbb{P}\left[\frac{1}{N}\sum_{i=1}^{N} x_i - \mu \le \sqrt{\frac{t}{N}}\right] \ge 1 - \exp\left(-2t\right)
	\end{align}
    Letting the right hand side be $\delta$, we have that with probability $1-\delta$,
    \begin{align}
        \frac{1}{N}\sum_{i=1}^{N} x_i - \mu \le \sqrt{\frac{\log(1/\delta)}{2N}}.
    \end{align}
    This completes the proof.
\end{proof}

\begin{lemma}\label{lemma: Gamma}
	For any constant $t\in (0,1]$ and $x \in [-a,b], a,b >0$, we have 
	\begin{align}
		\log(1+t\cdot(\exp(x)-1)) \le \Gamma(x) x,
	\end{align}
	where $\Gamma(x) = \mathbb{I}(x\ge0) + \left[\frac{\log(1+t\cdot(\exp(-a)-1))}{-a}\right]\cdot\mathbb{I}(x<0)$.
\end{lemma}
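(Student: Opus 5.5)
The plan is to split on the sign of $x$ and, on the negative side, use convexity of the left-hand side. Throughout, write $g(x) = \log(1+t(\exp(x)-1))$. Since $t\in(0,1]$, we have $1+t(e^x-1) = (1-t) + t e^x > 0$ for every real $x$, so $g$ is well defined and smooth on $[-a,b]$. Also $g(0)=0$. The upper endpoint $b$ plays no role beyond restricting the domain.

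\textbf{Case $x\ge 0$.} Here $\Gamma(x)=1$, so we need $g(x)\le x$, which is equivalent to $1+t(e^x-1)\le e^x$. This holds because $e^x-1\ge 0$ and $t\le 1$ give $t(e^x-1)\le e^x-1$. Taking logarithms, which is monotone, finishes this case.

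\textbf{Case $x\in[-a,0)$.} Here $\Gamma(x) = g(-a)/(-a)$, so the claim reads
\begin{align}
g(x) \le \frac{x}{-a}\, g(-a) = \Big(1-\frac{x+a}{a}\Big)g(-a) + \frac{x+a}{a}\, g(0).
\end{align}
The right-hand side is the chord of $g$ between $(-a,g(-a))$ and $(0,0)$. It therefore suffices to show that $g$ is convex. Differentiating gives
\begin{align}
g'(x) = \frac{t e^x}{1-t+te^x}, \qquad g''(x) = \frac{t(1-t)e^x}{(1-t+te^x)^2} \ge 0,
\end{align}
using $0<t\le 1$. A convex function lies below its chords on the segment between the endpoints. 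Writing $x = \lambda(-a) + (1-\lambda)\cdot 0$ with $\lambda = -x/a\in(0,1]$ then yields $g(x)\le \lambda g(-a) = \Gamma(x)\,x$.

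The only step requiring any care is the second-derivative computation and checking that its sign uses exactly the hypothesis $t\in(0,1]$. I do not expect a real obstacle. I would also remark that $\Gamma(x)\in(0,1]$ on the negative side, again by convexity together with $g'(0)=t\le 1$. This is presumably how the lemma feeds into the piecewise-linear bound of Lemma~\ref{lemma: non-linear bound}.
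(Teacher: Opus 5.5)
Your proof is correct and matches the paper's argument. For $x\ge 0$ the paper shows $\log(1+t(e^x-1))$ is nondecreasing in $t$, so it is at most its value $x$ at $t=1$; this is equivalent to your direct comparison $1+t(e^x-1)\le e^x$. For $x<0$ the paper uses the same second-derivative computation $g''(x)=t(1-t)e^x/(1-t+te^x)^2\ge 0$ to get the chord bound through $(-a,g(-a))$ and $(0,0)$, and you spell out that chord interpolation more explicitly than the paper does.
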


\begin{proof}
	First, considering $x\ge0$, we have
	\begin{align}
		\frac{\partial\log(1+t\cdot(\exp(x)-1))}{\partial t} =& \frac{\exp(x)-1}{1+t\cdot(\exp(x)-1)}\ge0.
	\end{align}
	Thus, $\log\left(1+t\cdot(\exp(x)-1)\right) \le x, \forall x>0$.
	Second, considering $x<0$, we have
	\begin{align}
		\frac{\partial^2 \log(1+tv(\exp(x)-1))}{\partial x^2} = \frac{(1-t)t\exp(x)}{[1+t(\exp(x)-1)]^2}\ge 0.
	\end{align}
	So $\log(1+t(\exp(x)-1))$ is a convex function of $x$. 
	We can conclude that
	\begin{align}
		\log(1+t(\exp(x)-1)) \le \frac{\log(1+t(\exp(-a)-1))}{-a} x, \forall x<0.
	\end{align}
	This completes the proof.
\end{proof}
\begin{lemma}\label{lemma: -logx and 1-x}
	For $x\in [x_0, 1]$ and $x_0>0$, we have
	\begin{align}
		1-x \ge \frac{1-x_0}{-\log(x_0)}\cdot\left(-\log(x)\right).
	\end{align}
\end{lemma}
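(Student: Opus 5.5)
The claim is that $f(x) = -\log x$ and $g(x) = 1-x$ satisfy $g(x) \ge \kappa f(x)$ on $[x_0,1]$, where $\kappa = \frac{1-x_0}{-\log(x_0)}$. I would prove this by convexity. The constant $\kappa$ is exactly the slope that makes the two sides agree at the endpoint $x=x_0$. They also agree at $x=1$, where both vanish. Assume $x_0<1$; at $x_0=1$ the interval is the single point $x=1$, where both sides are zero.

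Set $h(x) = 1 - x + \kappa\log x$, so the claim becomes $h(x)\ge 0$ on $[x_0,1]$. There are three steps.

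\textbf{Endpoints.} First check $h(1) = 0$. Then
\begin{align}
h(x_0) = 1-x_0 + \frac{1-x_0}{-\log x_0}\log x_0 = 0 .
\end{align}

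\textbf{Concavity.} On $(0,\infty)$ we have $h''(x) = -\kappa/x^2$. Since $x_0\in(0,1)$, both $1-x_0>0$ and $-\log x_0>0$, so $\kappa>0$ and $h''(x)<0$. Thus $h$ is concave.

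\textbf{Conclusion.} A concave function on $[x_0,1]$ lies above the chord joining its endpoint values. That chord is identically zero here, so $h\ge 0$ on the whole interval. Rearranging $h\ge 0$ gives the stated inequality.

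An equivalent way to say this: $-\log x$ is convex and $1-x$ is linear, and they coincide at the two endpoints after scaling by $\kappa$. On $[x_0,1]$ the convex curve $\kappa(-\log x)$ therefore stays below the secant through those endpoints, and that secant is $1-x$ itself.

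There is no real obstacle. The only point needing care is the sign of $\kappa$, which fixes the direction of concavity, together with the degenerate case $x_0=1$ handled above. In the main argument this lemma would be applied with $x=\mathrm{prob}_y$ and $x_0 = e^{-s'}$ for a suitable bound coming from Assumption~\ref{assumption: non-perfect}. That application yields $1-\mathrm{prob}_y = \Theta(1)\cdot\mathcal{L}$, which is the relation used in the sketch of Theorem~\ref{theorem: convergence}.
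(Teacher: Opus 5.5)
Your proof is correct and follows the paper's approach: the paper only says the lemma follows from the convexity of $-\log(x)$, and your argument is the complete version of that. The curves $\kappa(-\log x)$ and $1-x$ agree at $x_0$ and at $1$, so the convex one stays below the chord on $[x_0,1]$. One caution, which affects only your closing aside and not the proof: Assumption~\ref{assumption: non-perfect} gives $\mathcal{L}\ge s$, i.e.\ the upper bound $\mathrm{prob}_y\le e^{-s}$, whereas applying the lemma needs the lower bound $\mathrm{prob}_y\ge x_0$ (an upper bound on the loss), so that $x_0$ cannot be taken from the assumption.
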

Lemma \ref{lemma: -logx and 1-x} can be proved by applying the convexity of $-\log(x)$.

\begin{lemma}\label{lemma: geometric}
	For a geometric sequence defined as $z_{t+1} = \beta z_t$ for a constant $\beta < 1$, we have
	\begin{align}
		\sum_{t=1}^{T} z_t = \frac{1-\beta^T}{1-\beta}\cdot z_1.
	\end{align}
\end{lemma}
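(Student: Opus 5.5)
Since $z_t=\beta^{t-1}z_1$ by iterating the recursion, the sum is a finite geometric series. I would first establish
\[
\sum_{t=1}^{T} z_t = z_1\sum_{k=0}^{T-1}\beta^{k}.
\]
This follows by induction on $t$: $z_1=\beta^0 z_1$, and if $z_t=\beta^{t-1}z_1$ then $z_{t+1}=\beta z_t=\beta^{t}z_1$.

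Next I would apply the standard telescoping trick. Set $S_T=\sum_{k=0}^{T-1}\beta^k$. Multiplying by $\beta$ gives $\beta S_T=\sum_{k=1}^{T}\beta^k$, so subtracting yields
\[
(1-\beta)S_T = 1-\beta^{T}.
\]
The hypothesis $\beta<1$ guarantees $1-\beta\neq 0$, so we may divide to get $S_T=(1-\beta^T)/(1-\beta)$. Multiplying by $z_1$ gives the claimed identity.

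There is no real obstacle. The only point needing care is the division by $1-\beta$, and that is exactly what the assumption $\beta<1$ provides; no sign condition on $\beta$ or $z_1$ is needed. In later use, I expect the lemma to be combined with $0<\beta<1$ to bound the accumulated error $\phi$ in the recursion \eqref{equ: loss_re} by $\phi/(1-\beta)$, since $1-\beta^T\le 1$. For that step one additionally needs $\beta\ge 0$, which holds there because $\beta=1-\Omega(\eta\gamma_{i,j}\Lambda_{i,j}\|\mathbf{u}_{i,j}\|_2^2/\sqrt m)$ and $\eta$ is small enough under Condition~\ref{condition}.
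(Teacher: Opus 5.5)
Your proposal is correct and takes essentially the same approach as the paper. The paper just cites the standard closed form for a finite geometric series, and you derive it explicitly via the induction $z_t=\beta^{t-1}z_1$ and the telescoping identity $(1-\beta)S_T=1-\beta^T$, using $\beta<1$ only to ensure $1-\beta\neq 0$.
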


Lemma \ref{lemma: geometric} is obtained from the property of Geometric sequences.

\begin{lemma}\label{lemma: exp}
	For $x\le x_0$, we have
	\begin{align}
		\frac{1}{\exp(x)+1} \ge \frac{1}{\exp(\bar{x}_0)+1} -\frac{\exp(\bar{x}_0)}{(\exp(\bar{x}_0)+1)^2}\cdot(x-\bar{x}_0),
	\end{align}
	where $\bar{x}_0 = |x_0|$.
\end{lemma}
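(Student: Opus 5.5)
The plan is to read the right-hand side as the tangent line of $g(x)=1/(e^x+1)$ at $\bar{x}_0=|x_0|\ge 0$, namely
\[
\ell(x)=g(\bar{x}_0)+g'(\bar{x}_0)(x-\bar{x}_0),\qquad g'(x)=-\frac{e^x}{(e^x+1)^2}.
\]
The claim is then $h(x):=g(x)-\ell(x)\ge 0$ for $x\le x_0$. The whole argument rests on the sign of the second derivative,
\[
g''(x)=\frac{e^x(e^x-1)}{(e^x+1)^3},
\]
so $g$ is convex on $[0,\infty)$ and concave on $(-\infty,0]$.

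\textbf{Step 1 (nonnegative side).} For $0\le x\le x_0\le \bar{x}_0$, use convexity of $g$ on $[0,\infty)$. A convex function lies above each of its tangent lines, and the tangent point $\bar{x}_0$ lies in that region, so $h(x)\ge 0$ there. In particular $h(0)\ge 0$.

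\textbf{Step 2 (negative side, the main obstacle).} On $(-\infty,0]$, $g$ is concave and $\ell$ is affine, so $h$ is concave there with $h(0)\ge 0$. Concavity gives only a one-sided conclusion: the set $\{x\le 0: h(x)\ge 0\}$ is an interval $[x^\ast,0]$. It does not give $h\ge 0$ on all of $(-\infty,0]$.

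In fact the slope $g'(\bar{x}_0)$ is strictly negative, so $\ell(x)\to+\infty$ as $x\to-\infty$, while $g<1$. Hence $h(x)\to-\infty$, and the inequality cannot hold for all $x\le x_0$. A concrete counterexample: take $x_0=0$, so $\ell(x)=\tfrac12-\tfrac{x}{4}$. At $x=-4$ this gives $\ell(-4)=\tfrac32>1>g(-4)$, so the stated inequality fails.

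\textbf{What I would actually prove.} The plan is therefore to prove the corrected statement: the inequality holds for all $x\in[x^\ast,x_0]$, where $x^\ast\le 0$ is the unique negative root of $h$. An explicit sufficient range is $x\ge -\bar{x}_0$, obtained as follows:
\begin{itemize}
\item By the symmetry $g(-x)=1-g(x)$ and monotonicity of $g$, for $-\bar{x}_0\le x\le 0$ we have $g(x)\ge g(0)=\tfrac12\ge 1-g(\bar{x}_0)=g(-\bar{x}_0)$.
\item At the left endpoint,
\[
\ell(-\bar{x}_0)=g(\bar{x}_0)+2\bar{x}_0\,\frac{e^{\bar{x}_0}}{(e^{\bar{x}_0}+1)^2},
\]
and one checks $\ell(-\bar{x}_0)\le g(-\bar{x}_0)$, which is an elementary one-variable inequality in $\bar{x}_0$.
\item Concavity of $h$ on $[-\bar{x}_0,0]$, together with $h(0)\ge 0$ and $h(-\bar{x}_0)\ge 0$, then gives $h\ge 0$ on the whole interval.
\end{itemize}

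The elementary inequality $\ell(-\bar{x}_0)\le g(-\bar{x}_0)$ is the one routine check I have not fully verified and would work through first. If it fails for large $\bar{x}_0$, I would fall back on stating the lemma for $x\in[x^\ast,x_0]$ with $x^\ast$ defined implicitly by $h(x^\ast)=0$.

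In the applications, $x$ is a bounded output gap, since the non-perfect-model assumption and the bounded updates keep the logits in a compact range. It therefore suffices to verify that this range lies inside $[x^\ast,x_0]$.
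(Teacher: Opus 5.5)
Your counterexample is correct, so the lemma as stated is false. The right-hand side is the tangent line of $g(x)=1/(e^x+1)$ at $\bar{x}_0$. Its slope $-e^{\bar{x}_0}/(e^{\bar{x}_0}+1)^2$ is strictly negative, so the line exceeds $1>g(x)$ once $x$ is negative enough. At $x_0=0$ the failure is even sharper: $h=g-\ell$ is concave on $(-\infty,0]$ with $h(0)=h'(0)=0$, so $h<0$ for \emph{every} $x<0$.

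The paper justifies the lemma in one line, by ``monotonicity and convexity of $f$ with $x>0$.'' That is exactly your Step 1. It gives the tangent-line bound on $[0,\infty)$, where $g''\ge 0$, and says nothing about $x<0$, where $g$ is concave and the claim breaks. Your route therefore differs from the paper's in the right way: you restrict the range instead of asserting the bound for all $x\le x_0$. This lemma is also never invoked in the paper's proofs of the test-loss theorems. So the error does not propagate, and your closing remark about checking ranges in applications is moot here.

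The check you left open holds for every $\bar{x}_0\ge 0$. Write $a=\bar{x}_0$. The inequality $\ell(-a)\le g(-a)$ reads $\frac{1}{e^a+1}+\frac{2ae^a}{(e^a+1)^2}\le\frac{e^a}{e^a+1}$. Multiplying by $(e^a+1)^2$ turns it into $1+2ae^a\le e^{2a}$, i.e.\ $a\le\sinh a$, which is true. Your third bullet then gives $h\ge 0$ on $[-a,0]$: $h$ is concave there with $h(-a)\ge 0$ and $h(0)\ge 0$. Together with Step 1, the inequality holds for all $x\ge-\bar{x}_0$, so $x^\ast\le-\bar{x}_0$ and no fallback is needed.

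One slip: your first bullet has the inequality reversed. Since $g$ is decreasing, $g(-\bar{x}_0)=1-g(\bar{x}_0)\ge\tfrac12$, so the chain $g(x)\ge\tfrac12\ge g(-\bar{x}_0)$ is wrong. That bullet is not used by the concavity argument, though, so you can simply delete it.
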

Lemma \ref{lemma: exp} can be prove by the monotonicity and convexity of function $f(x) = \frac{1}{\exp(x)+1}$ with $x>0$.
\begin{lemma}\label{lemma: gradient norm}
	In each iteration $t$, with probability at least $1-\delta$, for any $(\mathbf{x},y) \in \mathcal{D}_{i,j}$, we have
	\begin{align}
		\left\|\nabla_{\mathbf{W}^{(t)}} \mathcal{L}(\mathbf{W}^{(t)},\mathbf{x},y)\right\|_2 \le \mathcal{O}\left(\left\|\mathbf{u}_{i,j}\right\|_2+\sigma_p\sqrt{d}\right).
	\end{align}
\end{lemma}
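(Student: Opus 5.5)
We bound the gradient by computing it explicitly and then controlling the norm of each input patch. For a single example $(\mathbf{x},y)$ with $\mathbf{x}=(\mathbf{x}^{(1)},\mathbf{x}^{(2)})$, the chain rule through the softmax cross-entropy gives, for each $k\in[2]$ and $r\in[m]$,
\begin{align*}
\nabla_{\mathbf{w}_{k,r}}\mathcal{L}(\mathbf{W},\mathbf{x},y)=\frac{1}{m}\bigl(\text{prob}_k(\mathbf{W},\mathbf{x})-\mathbb{I}(k=y)\bigr)\sum_{j=1}^{2}\sigma'\bigl(\langle\mathbf{w}_{k,r},\mathbf{x}^{(j)}\rangle\bigr)\mathbf{x}^{(j)}.
\end{align*}

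\textbf{Bounding the scalar factors.} The softmax residual $|\text{prob}_k-\mathbb{I}(k=y)|$ lies in $[0,1]$. The ReLU derivative $\sigma'(\cdot)$ lies in $\{0,1\}$. These bounds hold deterministically, whatever the current weights $\mathbf{W}^{(t)}$ are, so the DP noise in $\mathbf{W}^{(t)}$ plays no role. Consequently
\begin{align*}
\bigl\|\nabla_{\mathbf{w}_{k,r}}\mathcal{L}\bigr\|_2\le\frac{1}{m}\bigl(\|\mathbf{x}^{(1)}\|_2+\|\mathbf{x}^{(2)}\|_2\bigr).
\end{align*}

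\textbf{Summing over neurons.} There are $2m$ neurons, so squaring and summing gives
\begin{align*}
\|\nabla_{\mathbf{W}}\mathcal{L}\|_2\le\sqrt{2m}\cdot\frac{1}{m}\bigl(\|\mathbf{x}^{(1)}\|_2+\|\mathbf{x}^{(2)}\|_2\bigr)\le\sqrt{2}\bigl(\|\mathbf{x}^{(1)}\|_2+\|\mathbf{x}^{(2)}\|_2\bigr).
\end{align*}
Since $m\ge1$ and the prefactor $\sqrt{2/m}$ is at most $\sqrt{2}$, it is absorbed into the $\mathcal{O}(\cdot)$.

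\textbf{Bounding the patch norms.} For $(\mathbf{x},y)\sim\mathcal{D}_{i,j}$, one patch equals $\mathbf{u}_{i,j}$ exactly, so its norm is $\|\mathbf{u}_{i,j}\|_2$. The other patch is $\boldsymbol{\xi}\sim\mathcal{N}(0,\sigma_p^2\mathbf{H})$, where $\mathbf{H}$ is a projection of rank $d-4$. By the chi-squared tail bound (Lemma~\ref{lemma: chi-squared}), or equivalently the Bernstein argument used in Lemma~\ref{lemma: norm}, $\|\boldsymbol{\xi}\|_2=\mathcal{O}(\sigma_p\sqrt{d})$ with probability at least $1-\delta/n$ when $d\ge c_2\log(n/\delta)$ (Condition~\ref{condition}). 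For the training examples used at iteration $t$, a union bound over $n$ samples keeps the total failure probability at most $\delta$. For a fresh test point, the same tail bound applied with the condition on $d$ gives probability $1-\delta$ directly.

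\textbf{Combining and the main subtlety.} Putting the pieces together yields $\|\nabla_{\mathbf{W}^{(t)}}\mathcal{L}\|_2\le\mathcal{O}(\|\mathbf{u}_{i,j}\|_2+\sigma_p\sqrt{d})$. The only step needing care is the quantifier "for any $(\mathbf{x},y)\in\mathcal{D}_{i,j}$". A Gaussian noise patch has unbounded support, so the claim can only hold for all examples that actually enter the analysis, namely the $n$ training points and test points handled individually. It cannot hold uniformly over the whole distribution, and I would state the lemma in that restricted form. Every other step is deterministic.
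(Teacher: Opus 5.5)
Your proof is correct and takes the same route as the paper. The paper's whole proof is the remark that the lemma follows from Lemma~\ref{lemma: norm}, i.e.\ the Bernstein-plus-union-bound control of the $n$ training noise patches. You supply the steps it omits: the explicit per-neuron gradient, the deterministic bounds $|\text{prob}_k-\mathbb{I}(k=y)|\le 1$ and $\sigma'\in\{0,1\}$, and the $\sqrt{2/m}\le\sqrt{2}$ factor from summing over the $2m$ neurons.

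Your caveat about the quantifier is also right. The bound can only hold simultaneously for the finitely many training points (and for individually fixed test points), which is exactly how it is used later to lower-bound the clipping multiplier $h(C,\mathbf{x}_k,y_k)$. Because the bound is deterministic once the patch norms are controlled, it holds for all iterations $t$ at once.
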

Lemma \ref{lemma: gradient norm} follows from Lemma \ref{lemma: norm}.

\begin{lemma}\label{lemma: log lower bound}
	For a variable $x\in[a,b] (a<0, b>0)$, the function $f(x) = \log(1+x)$ satisfies
	\begin{align}
		f(x)\ge \frac{\log(1+b)}{b}x \cdot \mathbb{I}(x\ge0) + \frac{\log(1+a)}{-a}x\cdot\mathbb{I}(x<0).
	\end{align}
\end{lemma}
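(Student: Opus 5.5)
We will argue by concavity of $f(x)=\log(1+x)$ on its domain $(-1,\infty)$. We implicitly need $a>-1$ so that $f$ is defined on $[a,b]$. The idea is that on each of the two subintervals $[0,b]$ and $[a,0]$ the graph of a concave function lies above its chord. Both chords pass through the origin because $f(0)=0$. This gives a piecewise linear minorant with slope $\log(1+b)/b$ on the right and the chord slope through $(a,\log(1+a))$ on the left.

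\textbf{Concavity.} First we record $f''(x)=-(1+x)^{-2}<0$ on $(-1,\infty)$, so $f$ is concave there.

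\textbf{Case $x\in[0,b]$.} Write $x=\lambda b+(1-\lambda)\cdot 0$ with $\lambda=x/b\in[0,1]$. Concavity together with $f(0)=0$ gives
\begin{align}
f(x)\ge \lambda f(b)+(1-\lambda)f(0)=\frac{\log(1+b)}{b}\,x .
\end{align}

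\textbf{Case $x\in[a,0)$.} Write $x=\lambda a$ with $\lambda=x/a\in(0,1]$. The same argument yields $f(x)\ge \lambda f(a)=\frac{\log(1+a)}{a}\,x$, which is the chord from $(a,\log(1+a))$ to the origin.

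\textbf{Sign of the left slope.} This is the one point needing care. Since $a<0$ and $\log(1+a)<0$, the chord slope $\log(1+a)/a$ is positive, so the minorant is negative for $x<0$, as it must be. Read literally with denominator $-a$, the stated coefficient $\log(1+a)/(-a)$ is negative, and the right-hand side would be positive for $x<0$, where $\log(1+x)<0$. The inequality as printed would therefore fail. We will prove it with the chord slope $\log(1+a)/a$ for $x<0$, i.e.\ with the sign convention matching Lemma~\ref{lemma: Gamma}, and state this reading explicitly. Combining the two cases with the indicators $\mathbb{I}(x\ge 0)$ and $\mathbb{I}(x<0)$ then gives the claimed piecewise bound.

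Beyond fixing this sign, there is no real obstacle; the proof is a two-line chord argument.
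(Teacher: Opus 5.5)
Your proof is correct and is essentially the paper's own argument. The paper only remarks that the lemma follows from monotonicity and concavity of $\log$, and your chord-through-the-origin concavity argument is exactly that, with the implicit requirement $a>-1$ rightly made explicit.

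Your sign correction is also right. As printed, $\log(1+a)/(-a)<0$, so the right-hand side is positive for $x<0$, where $\log(1+x)<0$, and the bound fails. The slope $\log(1+a)/a>0$ that you prove is the one the paper actually needs when it asserts $c_0^{(t)}>0$ in the proof of Theorem~\ref{theorem: lower bound}.
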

Lemma \ref{lemma: log lower bound} can be proved by the monotonicity and concavity of the $\log(\cdot)$ function.
\begin{lemma}\label{lemma: squared_norm}
    For any $(\mathbf{x},y)\sim\mathcal{D}$, With probability at least $1-1/d$, we have
    \begin{align}
      \frac{\sigma_p^2d}{2}  \le \left\|\boldsymbol{\xi}\right\|_2 \le \frac{3\sigma_p^2d}{2}.
    \end{align}
\end{lemma}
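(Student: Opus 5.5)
The plan is to reduce the claim to a chi-squared concentration statement and then apply a standard two-sided tail bound. The displayed inequality is meant for the squared norm, $\frac{\sigma_p^2 d}{2}\le \|\boldsymbol{\xi}\|_2^2\le \frac{3\sigma_p^2 d}{2}$, since that is the only dimensionally consistent reading; I will prove that version.

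\textbf{Step 1 (distributional reduction).} Since $\boldsymbol{\xi}\sim\mathcal{N}(0,\sigma_p^2\mathbf{H})$, I first identify $\mathbf{H}$. Because the four feature vectors $\mathbf{u}_{i,j}$ are mutually orthogonal, $\mathbf{H}=\mathbf{I}-\sum_{i,j}\mathbf{u}_{i,j}\mathbf{u}_{i,j}^\top\|\mathbf{u}_{i,j}\|_2^{-2}$ is the orthogonal projection onto the orthogonal complement of $\mathrm{span}\{\mathbf{u}_{i,j}\}$. That complement has dimension $k:=d-4$. Choosing an orthonormal basis $\mathbf{v}_1,\dots,\mathbf{v}_k$ of the range of $\mathbf{H}$, we can write $\boldsymbol{\xi}=\sigma_p\sum_{l=1}^{k} g_l\mathbf{v}_l$ with $g_l$ i.i.d.\ $\mathcal{N}(0,1)$. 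Hence $\|\boldsymbol{\xi}\|_2^2=\sigma_p^2 Z$ with $Z\sim\chi^2_k$ and $\mathbb{E}\|\boldsymbol{\xi}\|_2^2=\sigma_p^2(d-4)$.

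\textbf{Step 2 (two-sided tail bound).} I would apply the Laurent--Massart inequalities rather than the cruder Lemma~\ref{lemma: chi-squared}, because the lower tail is also needed. For any $x>0$,
\begin{align}
\mathbb{P}\left[Z-k\ge 2\sqrt{kx}+2x\right]\le e^{-x},\qquad \mathbb{P}\left[k-Z\ge 2\sqrt{kx}\right]\le e^{-x}.
\end{align}
Setting $x=\log(2d)$ and taking a union bound over the two events, with probability at least $1-1/d$ we get
\begin{align}
k-2\sqrt{k\log(2d)}\le Z\le k+2\sqrt{k\log(2d)}+2\log(2d).
\end{align}

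\textbf{Step 3 (absorbing the deviations).} It remains to show $k-2\sqrt{k\log(2d)}\ge d/2$ and $k+2\sqrt{k\log(2d)}+2\log(2d)\le 3d/2$ with $k=d-4$. Both reduce to $4+2\sqrt{d\log(2d)}+2\log(2d)\le d/2$, which holds once $d\ge C\log d$ for a sufficiently large absolute constant $C$.

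The main (mild) obstacle is this last requirement: the statement places no explicit assumption on $d$. I would invoke Condition~\ref{condition}(1), $d\ge c_2\log(n/\delta)$ with $c_2$ sufficiently large, together with the implicit requirement that $d$ exceed a large absolute constant. If that is not acceptable, the fallback is to weaken the probability to $1-2e^{-cd}$ by taking $x=cd$ for a small constant $c$. In that case Step 3 needs only $4+2\sqrt{c}\,d+2cd\le d/2$, i.e.\ $c$ small and $d\ge 16$, which holds unconditionally. Since $2e^{-cd}\le 1/d$ for $d$ beyond a constant, this fallback also recovers the stated $1-1/d$.
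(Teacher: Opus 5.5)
Your proposal is correct and follows essentially the paper's route. The paper applies Bernstein's inequality to get $\bigl|\|\boldsymbol{\xi}\|_2^2-\sigma_p^2(d-2)\bigr|=\mathcal{O}(\sigma_p^2\sqrt{d\log(2d)})$ with probability $1-1/d$, then absorbs the deviation by assuming $d\ge 50$; your Laurent--Massart step is an explicit-constant version of the same chi-squared concentration, and your mean $\sigma_p^2(d-4)$ is the correct one, since $\mathbf{H}$ removes four directions. Like the paper, you need a lower bound on $d$ by an absolute constant that the lemma does not state, and with your explicit constants that threshold is of order $10^2$ rather than the paper's asserted $50$, which does not affect how the lemma is used.
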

\begin{proof}
    By Bernstein's inequality, with probability $1-1/d$, we have
    \begin{align}
        |\left\|\boldsymbol{\xi}\right\|_2-\sigma_p^2(d-2)| = \mathcal{O}(\sigma_p^2\sqrt{d\log(2d)}).
    \end{align}
    As $d\ge 50$, we have
    \begin{align}
        \frac{\sigma_p^2d}{2}\le\left\|\boldsymbol{\xi}\right\|_2\le\frac{3\sigma_p^2d}{2},
    \end{align}
    with probability $1-1/d$.
\end{proof}
\begin{lemma}\label{lemma: inner_prod}
    For any $(\mathbf{x}_1,y_1),(\mathbf{x}_2,y_2) \sim\mathcal{D}$, with probability at least $1-\delta$, we have
    \begin{align}
      |\langle\boldsymbol{\xi}_1, \boldsymbol{\xi}_2\rangle| \le 2\sigma_p^2\sqrt{d\log(2/\delta)}.
    \end{align}
\end{lemma}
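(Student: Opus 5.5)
We need to prove Lemma \ref{lemma: inner_prod}: for two independent noise patches $\boldsymbol{\xi}_1,\boldsymbol{\xi}_2\sim\mathcal{N}(0,\sigma_p^2\mathbf{H})$, with probability at least $1-\delta$ we have $|\langle\boldsymbol{\xi}_1,\boldsymbol{\xi}_2\rangle|\le 2\sigma_p^2\sqrt{d\log(2/\delta)}$. The plan is to condition on $\boldsymbol{\xi}_2$ and exploit the fact that a Gaussian vector has Gaussian inner products with any fixed vector.

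\textbf{Step 1 (conditional Gaussianity).} Since $\mathbf{H}$ is an orthogonal projection with $\mathbf{H}^2=\mathbf{H}$ and $\boldsymbol{\xi}_2\in\mathrm{range}(\mathbf{H})$ almost surely, conditionally on $\boldsymbol{\xi}_2$ we have
\[
\langle\boldsymbol{\xi}_1,\boldsymbol{\xi}_2\rangle\sim\mathcal{N}\bigl(0,\sigma_p^2\boldsymbol{\xi}_2^\top\mathbf{H}\boldsymbol{\xi}_2\bigr)=\mathcal{N}\bigl(0,\sigma_p^2\|\boldsymbol{\xi}_2\|_2^2\bigr).
\]
Applying Lemma \ref{lemma: G tail} to both tails with $t=\sqrt{2\log(4/\delta)}$ then gives, with conditional probability at least $1-\delta/2$,
\[
|\langle\boldsymbol{\xi}_1,\boldsymbol{\xi}_2\rangle|\le\sigma_p\|\boldsymbol{\xi}_2\|_2\sqrt{2\log(4/\delta)}.
\]

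\textbf{Step 2 (norm bound).} Next we bound $\|\boldsymbol{\xi}_2\|_2^2$. Note that $\mathrm{tr}(\mathbf{H})=d-4\le d$. By the chi-squared/Bernstein concentration used in Lemma \ref{lemma: norm} and Lemma \ref{lemma: squared_norm}, with probability $1-\delta/2$,
\[
\|\boldsymbol{\xi}_2\|_2^2\le\sigma_p^2 d+\mathcal{O}\Bigl(\sigma_p^2\sqrt{d\log(2/\delta)}+\sigma_p^2\log(2/\delta)\Bigr)\le\tfrac{3}{2}\sigma_p^2 d,
\]
where the last inequality uses $d\ge c_2\log(n/\delta)$ from Condition \ref{condition}. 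A union bound then gives
\[
|\langle\boldsymbol{\xi}_1,\boldsymbol{\xi}_2\rangle|\le\sigma_p^2\sqrt{3d\log(4/\delta)}.
\]

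\textbf{Step 3 (matching the constants).} It remains to turn this into the stated constant 2 with $\log(2/\delta)$. Since $\sqrt{3\log(4/\delta)}\le 2\sqrt{\log(2/\delta)}$ whenever $\log 2\le\log(2/\delta)$, i.e.\ $\delta\le1$, the claimed bound follows. This step is the only delicate point: the stated constant leaves little slack, and the $\delta$-split between the two events and the factor-$3/2$ norm bound must be tuned so that they fit. If the constants do not close, I would instead use the cruder $\mathcal{O}(\cdot)$ form, which is all that downstream uses of this lemma require.

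An alternative route avoids conditioning: write $\langle\boldsymbol{\xi}_1,\boldsymbol{\xi}_2\rangle=\sum_k a_k b_k$ in an eigenbasis of $\mathbf{H}$, a sum of $d-4$ i.i.d.\ sub-exponential products, and apply Bernstein's inequality directly. In the regime $\log(2/\delta)\lesssim d$ this yields the same $\sigma_p^2\sqrt{d\log(2/\delta)}$ rate.
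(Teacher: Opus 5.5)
Your argument is correct in substance but takes a different main route from the paper. The paper's entire proof is a one-line appeal to Bernstein's inequality, which is essentially your ``alternative route'': write $\langle\boldsymbol{\xi}_1,\boldsymbol{\xi}_2\rangle$ as a sum of $d-4$ i.i.d.\ sub-exponential products and read off the sub-Gaussian branch of the Bernstein tail. Your main route instead conditions on $\boldsymbol{\xi}_2$. This turns the problem into an exact Gaussian tail for $\langle\boldsymbol{\xi}_1,\boldsymbol{\xi}_2\rangle$ plus a one-sided $\chi^2$ bound on $\|\boldsymbol{\xi}_2\|_2^2$.

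What your route buys is explicit constants. A generic Bernstein inequality only gives $c\,\sigma_p^2\sqrt{d\log(2/\delta)}$ with an unspecified absolute constant, so the paper's citation does not by itself pin down the factor $2$; your decomposition can.

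Both routes need the two samples to be independent and need $d\gtrsim\log(1/\delta)$. Your appeal to item 1 of Condition~\ref{condition} is therefore necessary, not just convenient. For fixed $d$ and $\delta\to0$, the tail of the product sum decays only like $e^{-t/\sigma_p^2}$ up to polynomial factors, so the stated $\sqrt{\log(2/\delta)}$ bound fails without a dimension condition.

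One step is wrong as written. In Step 3, $\sqrt{3\log(4/\delta)}\le2\sqrt{\log(2/\delta)}$ is equivalent to $3\log2+3\log(2/\delta)\le4\log(2/\delta)$. That means $\log(2/\delta)\ge3\log2$, i.e.\ $\delta\le1/4$, not $\delta\le1$. For example, at $\delta=1/2$ the left side is about $2.50$ and the right side about $2.35$.

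The fix is the re-tuning you anticipated:
\begin{itemize}
\item Allot $4\delta/5$ to the Gaussian tail by taking $t=\sqrt{2\log(5/(2\delta))}$.
\item Allot $\delta/5$ to the event $\|\boldsymbol{\xi}_2\|_2^2\le\frac32\sigma_p^2 d$, which is still guaranteed by item 1 of Condition~\ref{condition} once $c_2$ is large.
\end{itemize}
This gives $|\langle\boldsymbol{\xi}_1,\boldsymbol{\xi}_2\rangle|\le\sigma_p^2\sqrt{3d\log(5/(2\delta))}$. The inequality $3\log(5/(2\delta))\le4\log(2/\delta)$ reduces to $\log(2/\delta)\ge3\log(5/4)$, which holds for every $\delta\le1$ because $\log2>3\log(5/4)$.
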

This conclusion holds by Bernstein's inequality. 

\subsection{Proof of Theorem \ref{theorem: convergence}}\label{appendix: theorem convergence proof}
In this subsection, we will prove Theorem \ref{theorem: convergence}. 
For convenience, we first define the clipping multiplier of data $(\mathbf{x},y)$ as
\begin{align}\label{equ: clipping factor}
	h(C,\mathbf{x},y) = \frac{1}{\max\left\{1,\frac{\left\|\nabla\mathcal{L}\left(\mathbf{W}^{(t)},\mathbf{x},y\right)\right\|_2}{C}\right\}}.
\end{align}

Then, we compute the gradient of the neural networks and prove a bound for it.
\subsubsection{Network Gradient}
The stochastic gradient on $\mathbf{w}_{q,r}, q\in\{1,2\}$ at iteration $t$ is 
\begin{equation}
	\begin{aligned}
		&\nabla_{\mathbf{w}_{q,r}^{(t)}} \mathcal{L}_{\mathcal{S}}(\mathbf{W}^{(t)})\\
		=& -\frac{1}{mB}\cdot\sum_{(\mathbf{x},y)\in \mathcal{S}^{(t)}}\left[\mathbb{I}\left(y=q\right)\cdot\left(1-\text{prob}_q(\mathbf{W}^{(t)},\mathbf{x})\right)\cdot\sum_{j=1}^{2}\sigma'\left(\left\langle \mathbf{w}_{q,r}^{(t)}, \mathbf{x}^{(j)}\right\rangle\right)\cdot\mathbf{x}^{(j)}\right]\\
		&+ \frac{1}{mB}\cdot\sum_{(\mathbf{x},y)\in \mathcal{S}^{(t)}}\left[\mathbb{I}\left(y\neq q\right)\cdot\text{prob}_q(\mathbf{W}^{(t)},\mathbf{x})\cdot\sum_{j=1}^{2}\sigma'\left(\left\langle \mathbf{w}_{q,r}^{(t)}, \mathbf{x}^{(j)}\right\rangle\right)\cdot\mathbf{x}^{(j)}\right].
	\end{aligned}
\end{equation}
We assume that the gradient of the ReLU activation function at 0 to be $\sigma'(0) = 1$
without losing generality.

\subsubsection{Bound of the Clipping Multiplier $h(C,\mathbf{x},y)$}
By definition (\ref{equ: clipping factor}), we know that
\begin{align}
	h(C,\mathbf{x},y) \le 1.
\end{align}
In addition, from Lemma \ref{lemma: gradient norm}, we know that with probability at least $1-\delta$,
\begin{align}
	h(C,\mathbf{x},y) \ge \Omega\left(\frac{C}{\left\|\mathbf{u}_{i,j}\right\|_2+\sigma_p\sqrt{d}}\right).
\end{align}

\subsubsection{Loss Increment}
For any data $(\mathbf{x},y) \sim \mathcal{D}_{i,j}, i\in \{1,2\}, j\in\{\text{Maj},\text{Min}\}$, with some rearrangement, we can express the increment of the loss as
\begin{equation}\label{equ: loss}
	\begin{aligned}
		&\mathcal{L}(\mathbf{W}^{(t+1)},\mathbf{x},y) - \mathcal{L}(\mathbf{W}^{(t)},\mathbf{x},y)=-\log\left(\text{prob}_y\left(\mathbf{W}^{(t+1)},\mathbf{x}\right)\right) + \log\left(\text{prob}_y\left(\mathbf{W}^{(t)},\mathbf{x}\right)\right)\\
		=& \log\left(1+ \left(1-\text{prob}_y\left(\mathbf{W}^{(t)},\mathbf{x}\right)\right)\left(\exp\left(\Delta_{3-y}^{(t)}\left(\mathbf{x}\right)-\Delta_y^{(t)}\left(\mathbf{x}\right)\right)-1\right)\right),
	\end{aligned}
\end{equation}
where $\Delta_y^{(t)}\left(\mathbf{x}\right) = F_y^{(t+1)}\left(\mathbf{x}\right) - F_y^{(t)}\left(\mathbf{x}\right), \Delta_{3-y}^{(t)}\left(\mathbf{x}\right) = F_{3-y}^{(t+1)}\left(\mathbf{x}\right) - F_{3-y}^{(t)}\left(\mathbf{x}\right)$ represent the model output increments at iteration $t$.
As we can see in (\ref{equ: loss}), one key factor that control the loss increment is $\Delta_{3-y}^{(t)}\left(\mathbf{x}\right)-\Delta_y^{(t)}\left(\mathbf{x}\right)$.
We then bound the term it as follows.
We first decompose $\Delta_{3-y}^{(t)}\left(\mathbf{x}\right)-\Delta_y^{(t)}\left(\mathbf{x}\right)$ as following.
\begin{equation}\label{equ: delta}
	\begin{aligned}
		&\Delta_{3-y}^{(t)}\left(\mathbf{x}\right)-\Delta_y^{(t)}\left(\mathbf{x}\right)\\
		=& \underbrace{\frac{1}{m}\sum_{r=1}^{m}\left[\sigma\!\left(
			\!\left\langle \mathbf{w}^{(t+1)}_{3-y,r},\mathbf{u}_{i,j}\right\rangle\!\right) \!-\! \sigma\!\left(\!
			\left\langle \mathbf{w}^{(t)}_{3-y,r},\mathbf{u}_{i,j}\right\rangle\!\right)\right]}_{A_1} \!+\! \underbrace{\frac{1}{m}\sum_{r=1}^{m}\left[\sigma\!\left(\!
			\left\langle \mathbf{w}^{(t+1)}_{3-y,r},\boldsymbol{\xi}\right\rangle\! \right) \!-\! \sigma\!\left(\!
			\left\langle \mathbf{w}^{(t)}_{3-y,r},\boldsymbol{\xi}\right\rangle \!\right)\right]}_{A_2}\\
		&-\! \underbrace{\frac{1}{m}\sum_{r=1}^{m}\left[\sigma\!\left(
			\left\langle \mathbf{w}^{(t+1)}_{y,r},\mathbf{u}_{i,j}\right\rangle\right) \!-\! \sigma\!\left(
			\left\langle \mathbf{w}^{(t)}_{y,r},\mathbf{u}_{i,j}\right\rangle\right)\right]}_{A_3} \!-\! \underbrace{\frac{1}{m}\sum_{r=1}^{m}\left[\sigma\left(
			\left\langle \mathbf{w}^{(t+1)}_{y,r},\boldsymbol{\xi}\right\rangle \right) \!-\! \sigma\!\left(
			\left\langle \mathbf{w}^{(t)}_{y,r},\boldsymbol{\xi}\right\rangle \right)\right]}_{A_4},
	\end{aligned}
\end{equation}
where $\boldsymbol{\xi}$ is the noise patch of a data sample $(\mathbf{x},y)$ generated from $\mathcal{D}_{i,j}$.
We then upper bound $A_1$, $A_2$ and lower bound $A_3,A_4$ to find the upper bound of $\Delta_{3-y}^{(t)}\left(\mathbf{x}\right)-\Delta_y^{(t)}\left(\mathbf{x}\right)$.

Here, we prove that $\Delta_{3-y}^{(t)}\left(\mathbf{x}\right)-\Delta_y^{(t)}\left(\mathbf{x}\right)$ is bounded.
\begin{lemma}\label{lemma: update_bounded}
For any $(\mathbf{x},y)\sim \mathcal{D}$, with probability at least $1-\delta$, we have
    \begin{align}
        |\Delta_{3-y}^{(t)}\left(\mathbf{x}\right)-\Delta_y^{(t)}\left(\mathbf{x}\right)| \le \mathcal{O}\left(\eta (C+\sqrt{d}\sigma_n)(\max_{i,j}\left\|\mathbf{u}_{i,j}\right\|_2 + \sqrt{d}\sigma_p)\right).
    \end{align}
\end{lemma}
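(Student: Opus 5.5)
The plan is to use the Lipschitz property of ReLU to reduce everything to bounding the size of a single parameter update. Since $|\sigma(a)-\sigma(b)|\le|a-b|$, each of the four terms $A_1,\dots,A_4$ in the decomposition \eqref{equ: delta} satisfies
\begin{align*}
|A_1|\le \frac{1}{m}\sum_{r=1}^m \left|\left\langle \mathbf{w}^{(t+1)}_{3-y,r}-\mathbf{w}^{(t)}_{3-y,r},\mathbf{u}_{i,j}\right\rangle\right|
\le \max_{r}\left\|\mathbf{w}^{(t+1)}_{3-y,r}-\mathbf{w}^{(t)}_{3-y,r}\right\|_2\left\|\mathbf{u}_{i,j}\right\|_2 ,
\end{align*}
and the same bound holds for $A_2,A_4$ with $\|\boldsymbol{\xi}\|_2$ in place of $\|\mathbf{u}_{i,j}\|_2$, and for $A_3$ with the class index $y$. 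Hence $|\Delta_{3-y}^{(t)}(\mathbf{x})-\Delta_y^{(t)}(\mathbf{x})|\le 2\max_{k,r}\|\mathbf{w}^{(t+1)}_{k,r}-\mathbf{w}^{(t)}_{k,r}\|_2\,(\|\mathbf{u}_{i,j}\|_2+\|\boldsymbol{\xi}\|_2)$. This removes all the nonlinearity at once. No sign patterns or activation indicators need to be tracked, because only a crude magnitude bound is claimed.

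The second factor is controlled by the data. By Lemma~\ref{lemma: squared_norm} (or Lemma~\ref{lemma: norm}, together with the union bound already built into $d\ge c_2\log(n/\delta)$), the noise patch satisfies $\|\boldsymbol{\xi}\|_2=\mathcal{O}(\sigma_p\sqrt d)$ with the required probability. Since $\|\mathbf{u}_{i,j}\|_2\le\max_{i,j}\|\mathbf{u}_{i,j}\|_2$, the second factor is $\mathcal{O}(\max_{i,j}\|\mathbf{u}_{i,j}\|_2+\sqrt d\sigma_p)$.

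The first factor is controlled by the update rule \eqref{equ: dp update}. For each neuron, the update $\mathbf{w}^{(t+1)}_{k,r}-\mathbf{w}^{(t)}_{k,r}$ splits into two pieces.
\begin{itemize}
\item The clipped-gradient piece is at most $\frac{\eta}{B}\sum_{(\mathbf{x},y)\in\mathcal{S}^{(t)}}\|\text{clip}_C(\cdot)\|_2\le\eta C$. This holds because each clipped per-sample gradient has norm at most $C$, and the restriction to one neuron block only shrinks the norm.
\item The noise piece is $\eta\,\mathbf{n}^{(t)}_{k,r}$, where $\mathbf{n}^{(t)}_{k,r}\sim\mathcal{N}(0,\sigma_n^2\mathbf{I}_d)$. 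By Lemma~\ref{lemma: chi-squared}, $\|\mathbf{n}^{(t)}_{k,r}\|_2=\mathcal{O}(\sigma_n\sqrt d)$ with probability $1-\exp(-\Omega(d))$.
\end{itemize}
A union bound over the $2m$ neurons keeps the failure probability below $\delta$, provided $d\gtrsim\log(m/\delta)$. Note that it is the per-neuron norm that matters here, not the full $2md$-dimensional noise norm, so no factor of $\sqrt m$ enters. The first factor is therefore $\mathcal{O}(\eta(C+\sqrt d\sigma_n))$.

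Multiplying the two factors gives the claimed bound $\mathcal{O}\bigl(\eta(C+\sqrt d\sigma_n)(\max_{i,j}\|\mathbf{u}_{i,j}\|_2+\sqrt d\sigma_p)\bigr)$.

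The only delicate point is the probability bookkeeping. We need the high-probability norm bounds on $\mathbf{n}^{(t)}_{k,r}$ (over all $2m$ neurons) and on $\boldsymbol{\xi}$ (for the test point) to hold simultaneously. We also need to confirm that the dimension condition in Condition~\ref{condition} is strong enough to absorb the $\log m$ term, or else fold $m$ into $\delta$.
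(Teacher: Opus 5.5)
Your proposal is correct and follows essentially the same route as the paper. The paper's one-line proof uses the $1$-Lipschitzness of ReLU and Cauchy--Schwarz to get $|\Delta_{3-y}^{(t)}(\mathbf{x})-\Delta_y^{(t)}(\mathbf{x})|\le 2\eta(C+\|\mathbf{n}^{(t)}\|_2)(\max_{i,j}\|\mathbf{u}_{i,j}\|_2+\|\boldsymbol{\xi}\|_2)$, then applies the chi-squared tail bound (Lemma~\ref{lemma: chi-squared}). Your per-neuron handling of the noise is more careful than the paper's, which leaves implicit that the full $2md$-dimensional noise norm must not appear; the $\log m$ union-bound issue you flag can also be avoided by bounding the neuron average instead of the maximum, namely $\frac{1}{m}\sum_{r}\|\mathbf{n}^{(t)}_{k,r}\|_2\le \frac{1}{\sqrt{m}}\bigl(\sum_{r}\|\mathbf{n}^{(t)}_{k,r}\|_2^2\bigr)^{1/2}=\mathcal{O}(\sigma_n\sqrt{d})$, which holds with probability $1-\exp(-\Omega(md))$.
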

\begin{proof}
With Lemma \ref{lemma: chi-squared}, we have
\begin{equation}\label{equ: update_bounded}
\begin{aligned}
	|\Delta_{3-y}^{(t)}\left(\mathbf{x}\right)-\Delta_y^{(t)}\left(\mathbf{x}\right)|\le& 2\eta\left(C+ \left\|\mathbf{n}^{(t)}\right\|_2\right)\left(\max_{i,j}\left\|\mathbf{u}_{i,j}\right\|_2 + \left\|\boldsymbol{\xi}\right\|_2 \right)\\
	\le& \mathcal{O}\left(\eta (C+\sqrt{d}\sigma_n)(\max_{i,j}\left\|\mathbf{u}_{i,j}\right\|_2 + \sqrt{d}\sigma_p)\right),
\end{aligned}
\end{equation}
with probability at least $1-\delta$.
By the learning rate condition in Condition \ref{condition}, we can conclude that $|\Delta_{3-y}^{(t)}\left(\mathbf{x}\right)-\Delta_y^{(t)}\left(\mathbf{x}\right)|$ is upper bounded by a constant.
\end{proof}

For the term $A_1$, with probability at least $1-\delta/T$, we have the following inequality,
\begin{equation}
	\begin{aligned}
		A_1\! =& \frac{1}{m}\!\sum_{r=1}^{m}\!\left[\sigma\!\!\left(\!\!\left\langle\!\mathbf{w}_{3-y,r}^{(t)}\!-\!\frac{\eta}{mB}\cdot\!\!\!\!\sum_{(\mathbf{x}_k,y_k)\in\mathcal{S}^{(t)}_{i,j}}\!\!\sigma'\!\left(\!\left\langle\!\mathbf{w}_{3-y,r}^{(t)},\mathbf{u}_{i,j}\!\right\rangle\!\right)\!\cdot h(C,\mathbf{x}_k,y_k)\cdot\text{prob}_{3-y}\!\left(\!\mathbf{W}^{(t)},\mathbf{x}_k\!\right)\!\cdot\mathbf{u}_{i,j}\right.\right.\right.\\
		&+\left.\left.\left.\eta\cdot\mathbf{n}^{(t)}_{3-y,r},\mathbf{u}_{i,j}\right\rangle\right)\right]
		-\frac{1}{m}\sum_{r=1}^{m}\left[\sigma\left(\left\langle\mathbf{w}_{3-y,r}^{(t)},\mathbf{u}_{i,j}\right\rangle\right)\right]\\
		\overset{(a)}{\le}& \frac{1}{m}\sum_{r=1}^{m}\left[\sigma\left(\left\langle\mathbf{w}_{3-y,r}^{(t)}+\eta\mathbf{n}^{(t)}_{3-y,r},\mathbf{u}_{i,j}\right\rangle\right)\right]-\frac{1}{m}\sum_{r=1}^{m}\left[\sigma\left(\left\langle\mathbf{w}_{3-y,r}^{(t)},\mathbf{u}_{i,j}\right\rangle\right)\right]\\
		\overset{(b)}{\le}&  \frac{1}{m}\sum_{r=1}^{m}\left[\left|\left\langle\eta \mathbf{n}^{(t)},\mathbf{u}_{i,j}\right\rangle\right|\right]\\
		\overset{(c)}{\le}& \tilde{\mathcal{O}}(\eta\sigma_n\left\|\mathbf{u}_{i,j}\right\|_2),
	\end{aligned}
\end{equation}
where $(a)$ is obtained by the monotonicity of ReLU activation function; $(b)$ is because ReLU function is $1$-Lipschitz continuous; $(c)$ is due to Lemma \ref{lemma: half normal}.

For the term $A_2$, we have that with probability at least $1-\delta/T$, 
\begin{equation}
	\begin{aligned}
		A_2 \!=& \frac{1}{m}\!\sum_{r=1}^{m}\!\!\left[\!\sigma\!\left(\!\left\langle\!\mathbf{w}_{3-y,r}^{(t)}\!-\!\frac{\eta}{mB}\cdot\!\!\sum_{(\mathbf{x}_k,y_k)\in\mathcal{S}_{y}^{(t)}}\!\!\sigma'\left(\!\left\langle\!\mathbf{w}_{3-y,r}^{(t)},\boldsymbol{\xi}_k\!\right\rangle\!\right)\!\cdot h(C,\mathbf{x}_k,y_k) \cdot\text{prob}_{3-y}\!\left(\!\mathbf{W}^{(t)},\mathbf{x}_k\!\right)\!\cdot\!\boldsymbol{\xi}_k\right.\right.\right.\\
		&\!+\!\!\!\left.\left.\left.\frac{\eta}{mB}\cdot\!\!\!\sum_{(\mathbf{x}_k,y_k)\in\mathcal{S}_{3-y}^{(t)}}\!\!\!\sigma'\!\left(\!\left\langle\!\mathbf{w}_{3-y,r}^{(t)},\boldsymbol{\xi}_k\right\rangle\!\right)\!\cdot h(C,\mathbf{x},y)\!\cdot\!\left(\!1\!-\!\text{prob}_{3-y}\!\left(\!\mathbf{W}^{(t)},\mathbf{x}_k\!\right)\!\right)\cdot\boldsymbol{\xi}_k\!+\!\eta\mathbf{n}^{(t)}_{3-y,r},\boldsymbol{\xi}\!\right\rangle\!\!\right)\!\!\right]\\
		&-\!\frac{1}{m}\sum_{r=1}^{m}\left[\sigma\left(\left\langle\mathbf{w}_{3-y,r}^{(t)},\boldsymbol{\xi}\right\rangle\right)\right]\\
		\overset{(a)}{\le}& \frac{1}{m}\sum_{r=1}^{m}\left[\left|\left\langle\frac{\eta}{mB}\cdot\sum_{(\mathbf{x}_k,y_k)\in\mathcal{S}^{(t)}}\boldsymbol{\xi}_k,\boldsymbol{\xi}\right\rangle\right|+\left|\left\langle\eta\mathbf{n}_{3-y,r}^{(t)},\boldsymbol{\xi}\right\rangle\right|\right]\\
		\overset{(b)}{\le}& \tilde{\mathcal{O}}\left(\frac{\eta}{m\sqrt{B}}\sqrt{d}\sigma_p^2 +\eta \sqrt{d}\sigma_n\sigma_p\right),
	\end{aligned}
\end{equation}
where $(a)$ is because $\sigma'(\cdot)\ge 0, \text{prob}_y,\text{prob}_{3-y}\in[0,1]$ and ReLU function is $1$-Lipschitz continuous; $(b)$ is because of Lemma \ref{lemma: half normal}. 

For the term $A_3$, we have
\begin{equation}
	\begin{aligned}
		A_3 \!=& \frac{1}{m}\!\sum_{r=1}^{m}\!\sigma\!\!\left(\!\!
		\left\langle\! \mathbf{w}^{(t)}_{y,r} \!+\! \frac{\eta}{mB}\cdot\!\! \sum_{(\mathbf{x}_k,y_k)\in \mathcal{S}_{i,j}^{(t)}}\!\sigma'\!\left(\!\left\langle\mathbf{w}^{(t)}_{y,r}, \mathbf{u}_{i,j}\right\rangle\!\right)\cdot h(C,\mathbf{x}_k,y_k)\!\cdot\!\left(\!1\!-\!\text{prob}_y\!\left(\mathbf{W}^{(t)}, \mathbf{x}_k\right)\!\right)\!\cdot\!\mathbf{u}_{i,j}\right.\right.\\
		& \left.\left. +\eta\cdot\mathbf{n}^{(t)}_{y,r},\mathbf{u}_{i,j}\right\rangle\right)\!-\! \frac{1}{m}\sum_{r=1}^{m}\sigma\left(
		\left\langle \mathbf{w}^{(t)}_{y,r},\mathbf{u}_{i,j}\right\rangle\right)
	\end{aligned}
\end{equation}

Based on Lemma \ref{lemma: num concentration}, we can conclude that with probability at least $1-\exp(-2m)$, the number of activated neurons at iteration $t$ are at least $\frac{m}{4}$.
Then, with probability at least $1-\delta/T$, we have
\begin{equation}
	\begin{aligned}
		A_3 \!\ge& \frac{1}{m}\!\sum_{r=1}^{m}\!\sigma\!\left(\!
		\left\langle\! \mathbf{w}^{(t)}_{y,r} 
		\!+\! \frac{\eta}{mB} \!\!\sum_{(\mathbf{x}_k,y_k)\in \mathcal{S}_{i,j}^{(t)}}\!\sigma'\!\left(\left\langle\mathbf{w}^{(t)}_{y,r}, \mathbf{u}_{i,j}\right\rangle\right)\cdot h(C,\mathbf{x}_k,y_k)\cdot\left(1\!-\!\text{prob}_y\!\left(\mathbf{W}^{(t)}, \mathbf{x}_k\right)\!\right)\right.\right.\\
        &\left.\left.\mathbf{u}_{i,j},\mathbf{u}_{i,j}\right\rangle\right) - \frac{1}{m}\sum_{r=1}^{m}\left|\left\langle\eta\cdot \mathbf{n}_{y,r}^{(t)},\mathbf{u}_{i,j}\right\rangle\right|-\! \frac{1}{m}\sum_{r=1}^{m}\sigma\left(
		\left\langle \mathbf{w}^{(t)}_{y,r},\mathbf{u}_{i,j}\right\rangle\right)\\
		\ge& \Omega\!\left(\!\frac{\eta C}{Bm(\left\|\mathbf{u}_{i,j}\right\|_2+\sigma_p\sqrt{d})}\right)\!\!\sum_{(\mathbf{x}_k,y_k)\in \mathcal{S}_{i,j}^{(t)}}\!\left(1\!-\!\text{prob}_y\!\left(\mathbf{W}^{(t)}, \mathbf{x}_k\right)\!\right)\!\left\|\mathbf{u}_{i,j}\right\|_2^2 \!-\! \tilde{\mathcal{O}}\!\left(\eta\sigma_n\left\|\mathbf{u}_{i,j}\right\|_2\right),
	\end{aligned}
\end{equation}
The second inequality is by using the bound of the clipping multiplier.

Therefore, by Lemmas \ref{lemma: half normal} and \ref{lemma: sgd}, with probability at least $1 - 2\delta/T$, we have
\begin{equation}
\begin{aligned}
	A_3 \ge& \Omega\left(\frac{\eta\gamma_{i,j} C\left\|\mathbf{u}_{i,j}\right\|_2^2}{m(\left\|\mathbf{u}_{i,j}\right\|_2+\sigma_p\sqrt{d})}\right)\mathbb{E}_{(\mathbf{x}_k,y_k)\sim \mathcal{D}_{i,j}}\left[1-\text{prob}_{y_k}\left(\mathbf{W}^{(t)},\mathbf{x}_k\right)\right] - \mathcal{O}\left(\frac{\eta C}{m(\|\mathbf{u}_{i,j}\|_2+\sigma_p\sqrt{d})}\sqrt{\frac{1}{n}}\left\|\mathbf{u}_{i,j}\right\|_2^2\right)\\
	&- \tilde{\mathcal{O}}
	\left(\eta\sigma_n\left\|\mathbf{u}_{i,j}\right\|_2\right).
\end{aligned}
\end{equation}
In the following, we prove the bound of $A_4$.
Similar to the proof of the bound of $A_2$, we have that with probability at least $1-2\delta/T$, we have
\begin{equation}
	\begin{aligned}
		A_4 =& \frac{1}{m}\sum_{r=1}^{m}\left[\sigma\left(\left\langle\mathbf{w}_{y,r}^{(t)}-\frac{\eta}{mB}\sum_{(\mathbf{x}_k,y_k)\in\mathcal{S}_{3-y}^{(t)}}\sigma'\left(\left\langle\mathbf{w}_{y,r}^{(t)},\boldsymbol{\xi}_k\right\rangle\right)h(C,\mathbf{x}_k,y_k)\text{prob}_y\left(\mathbf{W}^{(t)},\mathbf{x}_k\right)\boldsymbol{\xi}_k\right.\right.\right.\\
		&+\left.\left.\left.\frac{\eta}{mB}\sum_{(\mathbf{x}_k,y_k)\in\mathcal{S}_{y}^{(t)}}\sigma'\left(\left\langle\mathbf{w}_{y,r}^{(t)},\boldsymbol{\xi}_k\right\rangle\right)h(C,\mathbf{x}_k,y_k)\left(1-\text{prob}_y\left(\mathbf{W}^{(t)},\mathbf{x}_k\right)\right)\boldsymbol{\xi}_k+\mathbf{n}^{(t)}_{y,r},\boldsymbol{\xi}\right\rangle\right)\right]\\
		&- \frac{1}{m}\sum_{r=1}^{m}\sigma\left(\left\langle\mathbf{w}_{y,r}^{(t)},\boldsymbol{\xi}\right\rangle\right)\\
		\ge& -\frac{1}{m}\sum_{r=1}^{m}\left[\left|\left\langle\frac{\eta}{mB}\sum_{(\mathbf{x}_k,y_k)\in\mathcal{S}^{(t)}}\boldsymbol{\xi}_k,\boldsymbol{\xi}\right\rangle\right|+\left|\left\langle\eta\mathbf{n}_{y,r}^{(t)},\boldsymbol{\xi}\right\rangle\right|\right]\\
		\ge& -\tilde{\mathcal{O}}\left(\frac{\eta}{m\sqrt{n}}\sqrt{d}\sigma_p^2 +\eta \sqrt{d}\sigma_n\sigma_p\right).
	\end{aligned}
\end{equation}

Substituting bounds of $A_1, A_2, A_3, A_4$ to (\ref{equ: delta}), we obtain the upper bound of $\Delta_{3-y}^{(t)}\left(\mathbf{x}\right) - \Delta_y^{(t)}\left(\mathbf{x}\right)$.
With probability at least $1-4\delta/T$,
\begin{equation}\label{equ: bound_Delta}
	\begin{aligned}
		&\Delta_{3-y}^{(t)}\left(\mathbf{x}\right) - \Delta_y^{(t)}\left(\mathbf{x}\right) \\
		\le&
		-\underbrace{\Omega\left(\frac{\eta\gamma_{i,j} C\left\|\mathbf{u}_{i,j}\right\|_2^2}{m(\left\|\mathbf{u}_{i,j}\right\|_2+\sigma_p\sqrt{d})}\right)\cdot\left[\mathbb{E}_{(\mathbf{x}_k,y_k)\sim \mathcal{D}_{i,j}}\left[1-\text{prob}_{y_k}\left(\mathbf{W}^{(t)},\mathbf{x}_k\right)\right]\right]}_{\Phi_1}\\  &+ \underbrace{\tilde{\mathcal{O}}\left(\frac{\eta \Lambda_{i,j}}{m}\sqrt{\frac{1}{n}}\left\|\mathbf{u}_{i,j}\right\|_2^2\right)
		+ \mathcal{O}\left(\eta\sigma_n\left\|\mathbf{u}_{i,j}\right\|_2\right)
		+\tilde{\mathcal{O}}\left(\frac{\eta}{m\sqrt{n}}\sqrt{d}\sigma_p^2 +\eta \sqrt{d}\sigma_n\sigma_p\right)}_{\Phi_2}.
	\end{aligned}
\end{equation}
Armed with the loss increment bound (\ref{equ: bound_Delta}), we prove the test loss bound of each data $(\mathbf{x},y)\sim \mathcal{D}$ in the next subsection.
\subsubsection{Test Loss Bound}
Under Assumption \ref{assumption: non-perfect}, for any $(\mathbf{x},y)\sim \mathcal{D}$, we have
\begin{align}
	1 - \text{prob}_y\left(\mathbf{W}^{(t)},\mathbf{x}\right) \ge 1-\exp(-s).
\end{align}

By (\ref{equ: loss}) and 
Lemma \ref{lemma: Gamma}, with probability at least $1-4\delta/T$, we can upper bound the loss increment by a piece-wise linear function,
\begin{equation}
	\begin{aligned}
		&\mathcal{L}_{\mathcal{D}_{i,j}}\left(\mathbf{W}^{(t+1)}\right) - \mathcal{L}_{\mathcal{D}_{i,j}}\left(\mathbf{W}^{(t)}\right)\\ =&\mathbb{E}_{(\mathbf{x},y)\sim \mathcal{D}_{i,j}}\left[ \log\left(1+\left(1-\text{prob}_{y}\left(\mathbf{W}^{(t)},\mathbf{x}\right)\right)\cdot\left(\exp\left(\Delta_{3-y}^{(t)}\left(\mathbf{x}\right)-\Delta_y^{(t)}\left(\mathbf{x}\right)\right)-1\right) \right)\right]\\
		\overset{(a)}{\le}&- \mathbb{E}_{(\mathbf{x},y)\sim \mathcal{D}_{i,j}}\left[\Gamma\left(\Delta_{3-y}^{(t)}\left(\mathbf{x}\right)-\Delta_y^{(t)}\left(\mathbf{x}\right)\right)\cdot \Phi_1\right]+\mathbb{E}_{(\mathbf{x},y)\sim \mathcal{D}_{i,j}}\left[\Gamma\left(\Delta_{3-y}^{(t)}\left(\mathbf{x}\right)-\Delta_y^{(t)}\left(\mathbf{x}\right)\right)\cdot\Phi 2\right]\\
	\end{aligned}
\end{equation}
where $(a)$ is by Lemma \ref{lemma: Gamma} and Lemma  $(\ref{equ: bound_Delta})$.
Then, substituting (\ref{equ: bound_Delta}) to the above inequality yields
\begin{equation}
	\begin{aligned}
		\mathcal{L}_{\mathcal{D}_{i,j}}\left(\mathbf{W}^{(t+1)}\right) \!-\! \mathcal{L}_{\mathcal{D}_{i,j}}\left(\mathbf{W}^{(t)}\right)\!	\overset{(a)}{\le}& \!-\!\Omega\left(\frac{\eta\gamma_{i,j} \Lambda_{i,j}\left\|\mathbf{u}_{i,j}\right\|_2^2}{m}\right)\cdot \mathcal{L}_{\mathcal{D}_{i,j}}\left(\mathbf{W}^{(t)}\right)\!+\!\tilde{\mathcal{O}}\left(\frac{\eta \Lambda_{i,j}}{m}\sqrt{\frac{1}{n}}\left\|\mathbf{u}_{i,j}\right\|_2^2\right)\\
		&+ \tilde{\mathcal{O}}\left(\eta\sigma_n\left\|\mathbf{u}_{i,j}\right\|_2\right)+\tilde{\mathcal{O}}\left(\frac{\eta}{m\sqrt{n}}\sqrt{d}\sigma_p^2 +\eta \sqrt{d}\sigma_n\sigma_p\right),
	\end{aligned}
\end{equation}
where $(a)$ is obtain by Lemma \ref{lemma: -logx and 1-x}.
$\underline{\Gamma} = \frac{-\log(1+(1-\exp(-s))\cdot(\exp(-a)-1))}{a}, \gamma = -\frac{\exp(-s)}{\ln(1-\exp(-s))}$ and $a$ is the lower bound of $\Delta_{3-y}^{(t)}-\Delta_y^{(t)}$ (By Lemma \ref{lemma: update_bounded}, $\Delta_{3-y}^{(t)}-\Delta_y^{(t)}$ is lower bounded by a constant).
Therefore, we have
\begin{equation}
	\begin{aligned}
		\mathcal{L}_{\mathcal{D}_{i,j}}\left(\mathbf{W}^{(t+1)}\right) \le& \left(1-\Omega\left(\frac{\eta\gamma_{i,j}\Lambda_{i,j}\left\|\mathbf{u}_{i,j}\right\|_2^2}{m}\right)\right)\cdot\mathcal{L}_{\mathcal{D}_{i,j}}\left(\mathbf{W}^{(t)}\right) + \tilde{\mathcal{O}}\left(\frac{\eta \Lambda_{i,j}}{m}\sqrt{\frac{1}{n}}\left\|\mathbf{u}_{i,j}\right\|_2^2\right)\\
		&+ \tilde{\mathcal{O}}\left(\eta\sigma_n\left\|\mathbf{u}_{i,j}\right\|_2\right)+\tilde{\mathcal{O}}\left(\frac{\eta}{m\sqrt{n}}\sqrt{d}\sigma_p^2 +\eta \sqrt{d}\sigma_n\sigma_p\right).
	\end{aligned}
\end{equation}
Then, combining all $T$ iterations and using Lemma \ref{lemma: geometric}, we have
\begin{equation}
	\begin{aligned}
		&\mathcal{L}_{\mathcal{D}_{i,j}}\left(\mathbf{W}^{(T)}\right)\\
		\le& \left(1-\Omega\left(\frac{\eta\gamma_{i,j} \Lambda_{i,j}\left\|\mathbf{u}_{i,j}\right\|_2^2}{m}\right)\right)^T\!\mathcal{L}_{\mathcal{D}_{i,j}}\left(\mathbf{W}^{(0)}\right)\!+\! \left(\tilde{\mathcal{O}}\!\left(\frac{\eta\Lambda_{i,j}}{m}\sqrt{\frac{1}{n}}\left\|\mathbf{u}_{i,j}\right\|_2^2\right)\!+\! \tilde{\mathcal{O}}\left(\eta\sigma_n\left\|\mathbf{u}_{i,j}\right\|_2\right)\right.\\
		+&\left.\tilde{\mathcal{O}}\left(\frac{\eta}{m\sqrt{n}}\sqrt{d}\sigma_p^2 +\eta \sqrt{d}\sigma_n\sigma_p\right)\right)\cdot\mathcal{O}\left(\frac{m}{\eta \gamma_{i,j}\Lambda_{i,j}\left\|\mathbf{u}_{i,j}\right\|_2^2}\right)\\
		\le& \exp\left(-\Omega\left(\frac{T\eta\gamma_{i,j}\Lambda_{i,j}}{m}\left\|\mathbf{u}_{i,j}\right\|_2^2\right)\right)\mathcal{L}_{\mathcal{D}_{i,j}}\left(\mathbf{W}^{(0)}\right)+\tilde{\mathcal{O}}\left(\sqrt{\frac{1}{n}}\frac{1}{\gamma_{i,j}}\right)\\
		+& \tilde{\mathcal{O}}\!\left(\!\frac{m\sigma_n}{\gamma_{i,j}\Lambda_{i,j}\left\|\mathbf{u}_{i,j}\right\|_2}\!+\!\frac{1}{\sqrt{n}}\frac{\sqrt{d}\sigma_p^2}{\gamma_{i,j}\Lambda_{i,j}\left\|\mathbf{u}_{i,j}\right\|_2^2}\!+\!\frac{ m\sqrt{d}\sigma_n\sigma_p}{\gamma_{i,j}\Lambda_{i,j}\left\|\mathbf{u}_{i,j}\right\|_2^2}\!\right).
	\end{aligned}
\end{equation}
Setting the parameters with Condition \ref{condition} yields the conclusion.
This completes the proof.

\subsection{Proof of Theorem \ref{theorem: lower bound}}
\begin{proof}
Recall that in (\ref{equ: loss}), we have
\begin{equation}\label{equ: loss_increment_lower}
	\begin{aligned}
		&\mathcal{L}(\mathbf{W}^{(t+1)},\mathbf{x},y) - \mathcal{L}(\mathbf{W}^{(t)},\mathbf{x},y)\\
		=	&\log\left(1+ \left(1-\text{prob}_y\left(\mathbf{W}^{(t)},\mathbf{x}\right)\right)\left(\exp\left(\Delta_{3-y}^{(t)}\left(\mathbf{x}\right)-\Delta_y^{(t)}\left(\mathbf{x}\right)\right)-1\right)\right)\\
		\overset{(a)}{\ge}& c_0^{(t)} \cdot \left(1-\text{prob}_y\left(\mathbf{W}^{(t)},\mathbf{x}\right)\right)\left(\exp\left(\Delta_{3-y}^{(t)}\left(\mathbf{x}\right)-\Delta_y^{(t)}\left(\mathbf{x}\right)\right)-1\right)\\
        \overset{(b)}{=}& \Omega\left(\exp\left(\Delta_{3-y}^{(t)}\left(\mathbf{x}\right)-\Delta_y^{(t)}\left(\mathbf{x}\right)\right)-1\right),
	\end{aligned}
\end{equation}
where $c_0^{(t)}>0$ for any $t\in[0,T-1]$ are constants.
Here $(a)$ is obtained from Lemma \ref{lemma: log lower bound}
Then, we bound $\Delta_{3-y}^{(t)}\left(\mathbf{x}\right)-\Delta_y^{(t)}\left(\mathbf{x}\right)$; $(b)$ is by $1-\exp(-s) \le 1-\text{prob}_y(\mathbf{W},\mathbf{x}) \le 1$ with Assumption \ref{assumption: non-perfect}.
Next, we will prove a lower bound of $\Delta_{3-y}^{(t)}\left(\mathbf{x}\right)-\Delta_y^{(t)}\left(\mathbf{x}\right)$.
Recall that in (\ref{equ: delta}), we have
\begin{equation}
	\begin{aligned}
		&\Delta_{3-y}^{(t)}\left(\mathbf{x}\right)-\Delta_y^{(t)}\left(\mathbf{x}\right)\\
		=& \underbrace{\frac{1}{m}\sum_{r=1}^{m}\left[\sigma\!\left(
			\!\left\langle \mathbf{w}^{(t+1)}_{3-y,r},\mathbf{u}_{i,j}\right\rangle\!\right) \!-\! \sigma\!\left(\!
			\left\langle \mathbf{w}^{(t)}_{3-y,r},\mathbf{u}_{i,j}\right\rangle\!\right)\right]}_{A_1} \!+\! \underbrace{\frac{1}{m}\sum_{r=1}^{m}\left[\sigma\!\left(\!
			\left\langle \mathbf{w}^{(t+1)}_{3-y,r},\boldsymbol{\xi}\right\rangle\! \right) \!-\! \sigma\!\left(\!
			\left\langle \mathbf{w}^{(t)}_{3-y,r},\boldsymbol{\xi}\right\rangle \!\right)\right]}_{A_2}\\
		&-\! \underbrace{\frac{1}{m}\sum_{r=1}^{m}\left[\sigma\!\left(
			\left\langle \mathbf{w}^{(t+1)}_{y,r},\mathbf{u}_{i,j}\right\rangle\right) \!-\! \sigma\!\left(
			\left\langle \mathbf{w}^{(t)}_{y,r},\mathbf{u}_{i,j}\right\rangle\right)\right]}_{A_3} \!-\! \underbrace{\frac{1}{m}\sum_{r=1}^{m}\left[\sigma\left(
			\left\langle \mathbf{w}^{(t+1)}_{y,r},\boldsymbol{\xi}\right\rangle \right) \!-\! \sigma\!\left(
			\left\langle \mathbf{w}^{(t)}_{y,r},\boldsymbol{\xi}\right\rangle \right)\right]}_{A_4},
	\end{aligned}
\end{equation}
We then find the lower bounds of $A_1, A_2$ and upper bounds of $A_3, A_4$ to obtain the lower bound of $\Delta_{3-y}^{(t)}\left(\mathbf{x}\right)-\Delta_y^{(t)}\left(\mathbf{x}\right)$.

For the term $A_1$, with probability at least $1-\delta/T$, we have
\begin{equation}
	\begin{aligned}
		A_1\! =& \frac{1}{m}\!\sum_{r=1}^{m}\!\left[\!\sigma\!\left(\!\left\langle\!\mathbf{w}_{3-y,r}^{(t)}\!\!-\!\frac{\eta}{mB}\cdot\!\!\!\sum_{(\mathbf{x}_k,y_k)\in\mathcal{S}^{(t)}_{i,j}}\!\!\sigma'\!\left(\!\left\langle\!\mathbf{w}_{3-y,r}^{(t)},\mathbf{u}_{i,j}\!\right\rangle\!\right)\!\cdot\!h(C,\mathbf{x}_k,y_k)\!\cdot\!\text{prob}_{3-y}\!\left(\!\mathbf{W}^{(t)},\mathbf{x}_k\!\right)\!\cdot\!\mathbf{u}_{i,j}\right.\right.\right.\\
		&+\left.\left.\left.\eta\cdot\mathbf{n}^{(t)}_{3-y,r},\mathbf{u}_{i,j}\right\rangle\right)\right]
		-\frac{1}{m}\sum_{r=1}^{m}\left[\sigma\left(\left\langle\mathbf{w}_{3-y,r}^{(t)},\mathbf{u}_{i,j}\right\rangle\right)\right]\\
		\overset{(a)}{\ge}& -\frac{\eta}{mB}\cdot\!\!\!\sum_{(\mathbf{x}_k,y_k)\in\mathcal{S}^{(t)}_{i,j}}\!\!\sigma'\!\left(\!\left\langle\!\mathbf{w}_{3-y,r}^{(t)},\mathbf{u}_{i,j}\!\right\rangle\!\right)\!\cdot h(C,\mathbf{x}_k,y_k)\cdot\text{prob}_{3-y}\!\left(\!\mathbf{W}^{(t)},\mathbf{x}_k\!\right)\!\cdot\left\|\mathbf{u}_{i,j}\right\|_2^2\\
        &+ \frac{1}{m}\sum_{r=1}^m\langle\eta\cdot\mathbf{n}^{(t)}_{3-y,r}, \mathbf{u}_{i,j}\rangle,\\
		\overset{(b)}{\ge}& -\frac{\eta}{mB}\sum_{(\mathbf{x}_k,y_k)\in\mathcal{S}^{(t)}_{i,j}} \left(1-\text{prob}_{y}\!\left(\!\mathbf{W}^{(t)},\mathbf{x}_k\!\right)\right)\!\cdot\left\|\mathbf{u}_{i,j}\right\|_2^2 + \langle\eta\cdot\mathbf{n}^{(t)}_{3-y,r}, \mathbf{u}_{i,j}\rangle\\
		\overset{(c)}{\ge}& -\frac{\eta\gamma_{i,j}}{m}\mathbb{E}_{(\mathbf{x}_k,y_k)\sim\mathcal{D}_{i,j}}\! \left[\!1\!-\!\text{prob}_{y}\!\left(\!\mathbf{W}^{(t)},\mathbf{x}_k\!\right)\!\right]\!\cdot\!\left\|\mathbf{u}_{i,j}\right\|_2^2 \!-\! \tilde{\mathcal{O}}\left(\frac{\eta}{m}\sqrt{\frac{1}{B}}\left\|\mathbf{u}_{i,j}\right\|_2^2\right)\!+\! \frac{\eta}{m}\sum_{r=1}^{m}\langle\mathbf{n}^{(t)}_{3-y,r}, \mathbf{u}_{i,j}\rangle,
	\end{aligned}
\end{equation}
where $(a)$ is due to the Condition \ref{condition} that $\left\|\mathbf{u}_{i,j}\right\|_2 = \Omega(\sigma_n)$.
$(b)$ is by the fact that $\sigma'(\cdot), h(C,\mathbf{x}_k,y_k) \le1$; $(c)$ is by Lemma \ref{lemma: sgd}.

For the term $A_2$, with probability at least $1-\delta/T$, we have
\begin{equation}
	\begin{aligned}
		A_2 =& \frac{1}{m}\!\sum_{r=1}^{m}\!\left[\!\sigma\!\left(\!\left\langle\!\mathbf{w}_{3-y,r}^{(t)}\!-\!\frac{\eta}{mB}\cdot\!\!\sum_{(\mathbf{x}_k,y_k)\in\mathcal{S}_{y}^{(t)}}\!\!\sigma'\left(\!\left\langle\!\mathbf{w}_{3-y,r}^{(t)},\boldsymbol{\xi}_k\!\right\rangle\!\right)\!\cdot\!h(C,\mathbf{x}_k,y_k) \cdot\text{prob}_{3-y}\!\left(\!\mathbf{W}^{(t)},\mathbf{x}_k\!\right)\!\cdot\!\boldsymbol{\xi}_k\right.\right.\right.\\
		&\!+\!\!\!\left.\left.\left.\frac{\eta}{mB}\cdot\!\!\!\sum_{(\mathbf{x}_k,y_k)\in\mathcal{S}_{3-y}^{(t)}}\!\!\!\sigma'\!\left(\!\left\langle\!\mathbf{w}_{3-y,r}^{(t)},\boldsymbol{\xi}_k\right\rangle\!\right)\!\cdot\!h(C,\mathbf{x},y)\cdot\left(1\!-\!\text{prob}_{3-y}\!\left(\!\mathbf{W}^{(t)},\mathbf{x}_k\!\right)\!\right)\cdot\!\boldsymbol{\xi}_k\!+\!\eta\mathbf{n}^{(t)}_{3-y,r},\boldsymbol{\xi}\!\right\rangle\!\!\right)\!\right]\\
		&-\!\frac{1}{m}\sum_{r=1}^{m}\left[\sigma\left(\left\langle\mathbf{w}_{3-y,r}^{(t)},\boldsymbol{\xi}\right\rangle\right)\right]\\
		\ge& \!-\!\frac{\eta}{mB} \!\sum_{(\mathbf{x}_k,y_k)\in\mathcal{S}_{y}^{(t)}}\!\!\sigma'\left(\!\left\langle\!\mathbf{w}_{3-y,r}^{(t)},\boldsymbol{\xi}_k\!\right\rangle\!\right)\!\cdot\!h(C,\mathbf{x}_k,y_k) \cdot\text{prob}_{3-y}\!\left(\!\mathbf{W}^{(t)},\mathbf{x}_k\!\right)\!\cdot\!|\langle\boldsymbol{\xi}_k, \boldsymbol{\xi}\rangle| \!+\!\langle \eta\mathbf{n}^{(t)}_{3-y,r}, \boldsymbol{\xi}\rangle\\
		\ge& -\tilde{\mathcal{O}}\left(\frac{\eta}{m\sqrt{B}}\sqrt{d}\sigma_p^2\right) +  \frac{1}{m}\sum_{r=1}^{m}\eta\langle\mathbf{n}^{(t)}_{3-y,r}, \boldsymbol{\xi}\rangle ,
	\end{aligned}
\end{equation}

For the term $A_3$, with probability at least $1-\delta/T$, we have
\begin{equation}
	\begin{aligned}
		A_3 =& \frac{1}{m}\sum_{r=1}^{m}\!\sigma\!\left(\!
		\left\langle\! \mathbf{w}^{(t)}_{y,r} \!+\! \frac{\eta}{mB}\cdot\!\! \sum_{(\mathbf{x}_k,y_k)\in \mathcal{S}_{i,j}^{(t)}}\!\!\sigma'\!\left(\!\left\langle\mathbf{w}^{(t)}_{y,r}, \mathbf{u}_{i,j}\right\rangle\!\right)\!\cdot\! h(C,\mathbf{x}_k,y_k)\!\cdot\!\left(\!1\!-\!\text{prob}_y\!\left(\mathbf{W}^{(t)}, \mathbf{x}_k\right)\!\right)\!\cdot\!\mathbf{u}_{i,j}\right.\right.\\
		& \left.\left. +\eta\cdot\mathbf{n}^{(t)}_{y,r},\mathbf{u}_{i,j}\right\rangle\right)\!-\! \frac{1}{m}\sum_{r=1}^{m}\sigma\left(
		\left\langle \mathbf{w}^{(t)}_{y,r},\mathbf{u}_{i,j}\right\rangle\!\right)\\
        \overset{(a)}{\le}& \frac{\eta}{mB}\cdot\!\!\!\sum_{(\mathbf{x}_k,y_k)\in\mathcal{S}^{(t)}_{i,j}}\!\!\sigma'\!\left(\!\left\langle\!\mathbf{w}_{3-y,r}^{(t)},\mathbf{u}_{i,j}\!\right\rangle\!\right)\!\cdot h(C,\mathbf{x}_k,y_k)\cdot\text{prob}_{3-y}\!\left(\!\mathbf{W}^{(t)},\mathbf{x}_k\!\right)\!\cdot\left\|\mathbf{u}_{i,j}\right\|_2^2\\
        &+ \frac{1}{m}\sum_{r=1}^m\langle\eta\cdot\mathbf{n}^{(t)}_{3-y,r}, \mathbf{u}_{i,j}\rangle,\\
		\overset{(b)}{\le}& \frac{\eta}{mB}\sum_{(\mathbf{x}_k,y_k)\in\mathcal{S}^{(t)}_{i,j}} \left(1-\text{prob}_{y}\!\left(\!\mathbf{W}^{(t)},\mathbf{x}_k\!\right)\right)\!\cdot\left\|\mathbf{u}_{i,j}\right\|_2^2 + \langle\eta\cdot\mathbf{n}^{(t)}_{3-y,r}, \mathbf{u}_{i,j}\rangle\\
		\overset{(c)}{\le}& \frac{\eta\gamma_{i,j}}{m}\mathbb{E}_{(\mathbf{x}_k,y_k)\sim\mathcal{D}_{i,j}}\! \left[\!1\!-\!\text{prob}_{y}\!\left(\!\mathbf{W}^{(t)},\mathbf{x}_k\!\right)\!\right]\!\cdot\!\left\|\mathbf{u}_{i,j}\right\|_2^2 \!-\! \tilde{\mathcal{O}}\left(\frac{\eta}{m}\sqrt{\frac{1}{B}}\left\|\mathbf{u}_{i,j}\right\|_2^2\right)\!+\! \frac{\eta}{m}\sum_{r=1}^{m}\langle\mathbf{n}^{(t)}_{3-y,r}, \mathbf{u}_{i,j}\rangle,
	\end{aligned}
\end{equation}
where the inequality is by Condition \ref{condition}, which implies $\left\|\mathbf{u}_{i,j}\right\|_2 = \Omega(\sigma_n)$.

For the term $A_4$, with probability at least $1-\delta/T$, we have
\begin{equation}
	\begin{aligned}
		A_4 =& \frac{1}{m}\sum_{r=1}^{m}\left[\sigma\left(\left\langle\mathbf{w}_{y,r}^{(t)}-\frac{\eta}{mB}\sum_{(\mathbf{x}_k,y_k)\in\mathcal{S}_{3-y}^{(t)}}\sigma'\left(\left\langle\mathbf{w}_{y,r}^{(t)},\boldsymbol{\xi}_k\right\rangle\right)h(C,\mathbf{x}_k,y_k)\text{prob}_y\left(\mathbf{W}^{(t)},\mathbf{x}_k\right)\boldsymbol{\xi}_k\right.\right.\right.\\
		&+\!\left.\left.\left.\frac{\eta}{mB}\sum_{(\mathbf{x}_k,y_k)\in\mathcal{S}_{y}^{(t)}}\sigma'\left(\left\langle\mathbf{w}_{y,r}^{(t)},\boldsymbol{\xi}_k\right\rangle\right)h(C,\mathbf{x}_k,y_k)\left(1-\text{prob}_y\left(\mathbf{W}^{(t)},\mathbf{x}_k\right)\right)\boldsymbol{\xi}_k+\mathbf{n}^{(t)}_{y,r},\boldsymbol{\xi}\right\rangle\!\right)\!\right]\\
		&- \frac{1}{m}\sum_{r=1}^{m}\sigma\left(\left\langle\mathbf{w}_{y,r}^{(t)},\boldsymbol{\xi}\right\rangle\right)\\
		\le& \tilde{\mathcal{O}}\left(\frac{\eta}{m\sqrt{n}}\sqrt{d}\sigma_p^2\right) + \frac{1}{m}\sum_{r=1}^{m}\eta \langle\mathbf{n}^{(t)}_{y,r},\boldsymbol{\xi} \rangle,
	\end{aligned}
\end{equation}
where the inequality is by Condition \ref{condition} that $\sigma_n = \mathcal{O}(\sigma_p)$, $h(C,\mathbf{x}_k,y_k)<1$.

Combining the bounds together, we have
\begin{equation}\label{equ: Delta_lower}
	\begin{aligned}
		&\Delta_{3-y}^{(t)}\left(\mathbf{x}\right)-\Delta_y^{(t)}\left(\mathbf{x}\right) \ge -\frac{\eta\gamma_{i,j}}{m}\mathbb{E}_{(\mathbf{x}_k,y_k)\sim\mathcal{D}_{i,j}} \left[1-\text{prob}_{y}\!\left(\!\mathbf{W}^{(t)},\mathbf{x}_k\!\right)\right]\!\cdot\left\|\mathbf{u}_{i,j}\right\|_2^2\\
        &+ \frac{1}{m}\sum_{r=1}^{m}\langle\eta\cdot\mathbf{n}^{(t)}_{3-y,r}, \mathbf{u}_{i,j}\rangle + \frac{1}{m}\sum_{r=1}^{m}\langle \eta\mathbf{n}^{(t)}_{3-y,r}, \boldsymbol{\xi}\rangle-\tilde{\mathcal{O}}\left(\frac{\eta}{m\sqrt{n}}\sqrt{d}\sigma_p^2\right)\\
        &- \frac{1}{m}\sum_{r=1}^{m}\eta \langle\mathbf{n}^{(t)}_{y,t},\boldsymbol{\xi}\rangle -\tilde{\mathcal{O}}\left(\frac{\eta}{m}\sqrt{\frac{1}{n}}\left\|\mathbf{u}_{i,j}\right\|_2^2\right),
	\end{aligned}
\end{equation}
with probability at least $1-4\delta/T$.
Substituting (\ref{equ: Delta_lower}) into (\ref{equ: loss_increment_lower}), we have
\begin{equation}
	\begin{aligned}
		&\mathbb{E}[\mathcal{L}(\mathbf{W}^{(t+1)},\mathbf{x},y)] - \mathcal{L}(\mathbf{W}^{(t)},\mathbf{x},y)\\
		\ge& \mathbb{E}\left[\Omega\left(\exp\left(\Delta_{3-y}^{(t)}\left(\mathbf{x}\right)-\Delta_y^{(t)}\left(\mathbf{x}\right)\right)-1\right)\right]\\
		\ge& \Omega\left(\mathbb{E}\left[ \exp\left(-\frac{\eta\gamma_{i,j}}{m}\mathbb{E}_{(\mathbf{x}_k,y_k)\sim\mathcal{D}_{i,j}} \left[1-\text{prob}_{y}\!\left(\!\mathbf{W}^{(t)},\mathbf{x}_k\!\right)\right]\!\cdot\left\|\mathbf{u}_{i,j}\right\|_2^2 + \frac{1}{m}\sum_{r=1}^{m}\langle \eta\mathbf{n}^{(t)}_{3-y,r}, \boldsymbol{\xi}\rangle \right.\right.\right.\\
		&\left.\left.\left.\!\!\! + \frac{1}{m}\sum_{r=1}^{m}\langle\eta\cdot\mathbf{n}^{(t)}_{3-y,r}, \mathbf{u}_{i,j}\rangle - \frac{1}{m}\sum_{r=1}^{m}\eta \langle\mathbf{n}^{(t)}_{y,t},\boldsymbol{\xi}\rangle -  \tilde{\mathcal{O}}\left(\frac{\eta}{m\sqrt{n}}\sqrt{d}\sigma_p^2+\frac{\eta}{m}\sqrt{\frac{1}{n}}\left\|\mathbf{u}_{i,j}\right\|_2^2\right)\right)-1\right]\right).
		\end{aligned}
\end{equation}
Here with a probability at least $1-\delta/T$, we have
\begin{equation}
\begin{aligned}
	&\mathbb{E}\left[\exp\left(\frac{1}{m}\sum_{r=1}^{m}\langle\eta\mathbf{n}^{(t)}_{3-y,r},\mathbf{u}_{i,j}\rangle + \frac{1}{m}\sum_{r=1}^{m}\langle \eta\mathbf{n}^{(t)}_{3-y,r}, \boldsymbol{\xi}\rangle - \frac{1}{m}\sum_{r=1}^{m}\eta \langle\mathbf{n}^{(t)}_{y,t},\boldsymbol{\xi}\rangle\right)\right]\\
	=& \exp\left(\eta^2\frac{\left\|\mathbf{u}_{i,j}\right\|_2^2\sigma_n^2 + 2\left\|\boldsymbol{\xi}\right\|_2^2\sigma_n^2}{2m}\right)\\
 =& \exp \left(\tilde{\Theta}\left(\eta^2 \frac{\left\|\mathbf{u}_{i,j}\right\|_2^2\sigma_n^2 + \sigma_p^2d\sigma_n^2}{2m}\right)\right),
\end{aligned}
\end{equation}
where the least equality is by Lemma \ref{lemma: squared_norm}.
With a probability at least $1- \delta/T$, we have
\begin{equation}
	\begin{aligned}		
		&\mathbb{E}[\mathcal{L}(\mathbf{W}^{(t+1)},\mathbf{x},y)] - \mathcal{L}(\mathbf{W}^{(t)},\mathbf{x},y)\\
		\ge&\Omega\left(-\frac{\eta\gamma_{i,j}}{m}\mathbb{E}_{(\mathbf{x}_k,y_k)\sim\mathcal{D}_{i,j}}\left[1-\text{prob}_{y}\!\left(\!\mathbf{W}^{(t)},\mathbf{x}_k\!\right)\right]\!\cdot\left\|\mathbf{u}_{i,j}\right\|_2^2\right)\\
		&+\tilde{\Omega}\left(\frac{\eta^2\sigma_n^2\left\|\mathbf{u}_{i,j}\right\|^2_2}{2m}\right) + \tilde{\Omega}\left(\frac{\eta^2d\sigma_n^2\sigma_p^2}{2m}\right) - \tilde{\mathcal{O}}\left(\frac{\eta}{m\sqrt{n}}\sqrt{d}\sigma_p^2\right)-\tilde{\mathcal{O}}\left(\frac{\eta}{m}\sqrt{\frac{1}{n}}\left\|\mathbf{u}_{i,j}\right\|_2^2\right)\\
		\ge& -\mathcal{O}\left( \frac{\eta\gamma_{i,j}}{m}\left\|\mathbf{u}_{i,j}\right\|^2_2\right)\mathcal{L}_{\mathcal{D}_{i,j}}\left(\mathbf{W}^{(t)}\right)-\tilde{\mathcal{O}}\left(\frac{\eta}{m}\sqrt{\frac{1}{n}}\left\|\mathbf{u}_{i,j}\right\|_2^2\right)\\
		&+ \tilde{\Omega}\left(\frac{\eta^2\sigma_n^2\left\|\mathbf{u}_{i,j}\right\|^2_2}{2m}\right) +  \tilde{\Omega}\left(\frac{\eta^2d\sigma_n^2\sigma_p^2}{2m}\right)- \tilde{\mathcal{O}}\left(\frac{\eta}{m\sqrt{n}}\sqrt{d}\sigma_p^2\right).
	\end{aligned},
\end{equation}
where the second equality is by Lemma \ref{lemma: -logx and 1-x} (the $(1-\text{prob}_y(\mathbf{W}^{(t)},\mathbf{x}))$ is almost surely lower bounded).
Then, with a probability at least $1-5\delta/T$, we have
\begin{equation}
	\begin{aligned}
		\mathbb{E}_{\mathbf{n}^{(t)}}[\mathcal{L}_{\mathcal{D}_{i,j}}(\mathbf{W}^{(t+1)})]\ge& \left(1-\mathcal{O}\left(\frac{\eta\gamma_{i,j}\left\|\mathbf{u}_{i,j}\right\|^2_2}{m}\right)\right) \mathcal{L}_{\mathcal{D}_{i,j}}(\mathbf{W}^{(t)}) -\tilde{\mathcal{O}}\left(\frac{\eta}{m}\sqrt{\frac{1}{n}}\left\|\mathbf{u}_{i,j}\right\|_2^2\right)\\
		&+ \tilde{\Omega}\left(\frac{\eta^2\sigma_n^2\left\|\mathbf{u}_{i,j}\right\|^2_2}{2m}\right) +  \tilde{\Omega}\left(\frac{\eta^2d\sigma_n^2\sigma_p^2}{2m}\right)- \tilde{\mathcal{O}}\left(\frac{\eta}{m\sqrt{n}}\sqrt{d}\sigma_p^2\right).
	\end{aligned}
\end{equation}
Combining all the iterations, with a probability at least $1-5\delta$, we have
\begin{equation}
	\begin{aligned}
		\mathbb{E}_{\mathbf{n}^{(0)},\cdots,\mathbf{n}^{(T-1)}}[\mathcal{L}_{\mathcal{D}_{i,j}}(\mathbf{W}^{(T)})]
		\ge&\left(1-\mathcal{O}\left(\frac{\eta\gamma_{i,j}\left\|\mathbf{u}_{i,j}\right\|_2^2}{m}\right)\right)^T\mathcal{L}_{\mathcal{D}_{i,j}}(\mathbf{W}^{(0)}) + \frac{1-\left(1-\mathcal{O}\left(\frac{\eta\gamma_{i,j}\left\|\mathbf{u}_{i,j}\right\|_2^2}{m}\right)\right)^T}{\mathcal{O}\left(\frac{\eta\gamma_{i,j}\left\|\mathbf{u}_{i,j}\right\|_2^2}{m}\right)}\\
		&  \left[\tilde{\Omega}\left(\frac{\eta^2\sigma_n^2\left\|\mathbf{u}_{i,j}\right\|^2_2}{m}\right) + \tilde{\Omega}\left(\frac{\eta^2d\sigma_n^2\sigma_p^2}{m}\right)-\tilde{\mathcal{O}}\left(\frac{\eta}{m\sqrt{n}}\sqrt{d}\sigma_p^2 +\frac{\eta}{m}\sqrt{\frac{1}{n}}\left\|\mathbf{u}_{i,j}\right\|_2^2\right)\right].
	\end{aligned}
\end{equation}
With the number of iterations $T\ge \Omega\left(-\frac{1}{\log\left(1-\Omega\left(\frac{\eta\gamma_{i,j}\left\|\mathbf{u}_{i,j}\right\|_2^2}{m}\right)\right)}\right)$ and a probability at least $1-\delta$, we have
\begin{equation}
	\begin{aligned}
		\mathbb{E}_{\mathbf{n}^{(0)},\cdots,\mathbf{n}^{(T-1)}}[\mathcal{L}_{\mathcal{D}_{i,j}}(\mathbf{W}^{(T)})]
		\ge&\left(1-\mathcal{O}\left(\frac{\eta\gamma_{i,j}\left\|\mathbf{u}_{i,j}\right\|_2^2}{m}\right)\right)^T\mathcal{L}_{\mathcal{D}_{i,j}}(\mathbf{W}^{(0)}) + \tilde{\Omega}\left(\frac{m}{\eta\gamma_{i,j}\left\|\mathbf{u}_{i,j}\right\|_2^2}\right)\\
		& \left[\tilde{\Omega}\left(\frac{\eta^2\sigma_n^2\left\|\mathbf{u}_{i,j}\right\|^2_2}{m}\right) + \tilde{\Omega}\left(\frac{\eta^2d\sigma_n^2\sigma_p^2}{m}\right)-\tilde{\mathcal{O}}\left(\frac{\eta}{m\sqrt{n}}\sqrt{d}\sigma_p^2 +\frac{\eta}{m}\sqrt{\frac{1}{n}}\left\|\mathbf{u}_{i,j}\right\|_2^2\right)\right]\\
  	\ge&\left(1-\mathcal{O}\left(\frac{\eta\gamma_{i,j}\left\|\mathbf{u}_{i,j}\right\|_2^2}{m}\right)\right)^T\mathcal{L}_{\mathcal{D}_{i,j}}(\mathbf{W}^{(0)}) \\
		& +\left[\tilde{\Omega}\left(\frac{\eta\sigma_n^2}{\gamma_{i,j}}\right) + \tilde{\Omega}\left(\frac{\eta d\sigma_n^2\sigma_p^2}{\gamma_{i,j}\left\|\mathbf{u}_{i,j}\right\|_2^2}\right)-\tilde{\mathcal{O}}\left(\frac{\sqrt{d}\sigma_p^2}{\sqrt{n}\gamma_{i,j}\left\|\mathbf{u}_{i,j}\right\|_2^2} +\frac{1}{\gamma_{i,j}}\sqrt{\frac{1}{n}}\right)\right]\\
		\ge& \left(1-\mathcal{O}\left(\frac{\eta\gamma_{i,j}\left\|\mathbf{u}_{i,j}\right\|_2^2}{m}\right)\right)^T\mathcal{L}_{\mathcal{D}_{i,j}}(\mathbf{W}^{(0)})+ \Omega\left(\frac{\eta d\sigma_n^2\sigma_p^2}{\gamma_{i,j}\left\|\mathbf{u}_{i,j}\right\|_2^2}\right) -\mathcal{O}\left(\frac{1}{\gamma_{i,j}}\sqrt{\frac{1}{n}}\right)
        %&+\mathcal{O}\left(\frac{\sqrt{d}\sigma_p^2}{\sqrt{n}\gamma_{i,j}\left\|\mathbf{u}_{i,j}\right\|_2^2}\right).
	\end{aligned}
\end{equation}
This completes the proof.
\end{proof}
\subsection{Proof of Theorem \ref{theorem: convergence_adv}}\label{appendix: theorem convergence_adv proof}
\begin{proof}
	Based on (\ref{equ: loss}), for any $(\mathbf{x},y) \sim \mathcal{D}_{i,j}$, we have
	\begin{equation}
		\begin{aligned}
			&\mathcal{L}\left(\mathbf{W}^{(t+1)}, \mathbf{x}+\boldsymbol{\zeta}^{(t+1)}\left(\mathbf{x}\right),y\right) - \mathcal{L}\left(\mathbf{W}^{(t+1)},\mathbf{x},y\right)\\
			=& \log\left(1+ \left(1-\text{prob}_y\left(\mathbf{W}^{(t+1)},\mathbf{x}\right)\right)\cdot\left(\exp\left(\tilde{\Delta}_{3-y}^{(t+1)}\left(\mathbf{x}\right)-\tilde{\Delta}_y^{(t+1)}\left(\mathbf{x}\right)\right)-1\right)\right),
		\end{aligned}
	\end{equation}
	where 
	\begin{align}
		\boldsymbol{\zeta}^{(t+1)}\left(\mathbf{x}\right) = \arg\max_{\left\|\boldsymbol{\zeta}\right\|_p\le \bar{\zeta}} \mathcal{L}\left(\mathbf{W}^{(t+1)},\mathbf{x}+\boldsymbol{\zeta},y\right),
	\end{align}
	and
	\begin{equation}
		\begin{aligned}
			&\tilde{\Delta}_{3-y}^{(t+1)}\left(\mathbf{x}\right) - \tilde{\Delta}_y^{(t+1)}	\left(\mathbf{x}\right)\\
			=&\underbrace{\frac{1}{m}\sum_{r=1}^{m} \sum_{j=1}^{2}\left[\sigma\left(
				\left\langle \mathbf{w}^{(t+1)}_{3-y,r},\mathbf{x}^{(j)}+\boldsymbol{\zeta}^{(t+1)}\left(\mathbf{x}\right)^{(j)} \right\rangle\right) - \sigma\left(\left\langle\mathbf{w}^{(t+1)}_{3-y,r},\mathbf{x}^{(j)} \right\rangle\right)\right]}_{A_5}\\
			&- \underbrace{\frac{1}{m}\sum_{r=1}^{m}\sum_{j=1}^{2} \left[\sigma\left(
				\left\langle \mathbf{w}^{(t+1)}_{y,r},\mathbf{x}^{(j)} + \boldsymbol{\zeta}^{(t+1)}\left(\mathbf{x}\right)^{(j)} \right\rangle\right) - \sigma\left(\left\langle\mathbf{w}^{(t+1)}_{y,r},\mathbf{x}^{(j)} \right\rangle\right)\right]}_{A_6}.
		\end{aligned}
	\end{equation}
 Then, we bound $A_5$ and $A_6$.

For the term $A_5$, with probability at least $1-\exp\left(-\Omega(d)\right)$, we have
	\begin{equation}
		\begin{aligned}
			A_5 
			\overset{(a)}{\le}& \frac{1}{m} \sum_{r=1}^{m}\sum_{j=1}^{2} \left|\left\langle\mathbf{w}_{3-y,r}^{(t+1)} ,\boldsymbol{\zeta}^{(t+1)}\left(\mathbf{x}\right)^{(j)}\right\rangle\right|\\
			\overset{(b)}{\le}& \frac{1}{m} \sum_{r=1}^{m}\sum_{j=1}^{2}  \left\|\mathbf{w}_{3-y,r}^{(t+1)}\right\|_2\left\|\boldsymbol{\zeta}^{(t+1)}\left(\mathbf{x}\right)^{(j)}\right\|_2\\
			\overset{(c)}{\le}& \frac{1}{m} \sum_{r=1}^{m} \sum_{j=1}^{2} \left\|\mathbf{w}_{3-y,r}^{(t+1)}\right\|_2\left\|\boldsymbol{\zeta}^{(t+1)}\left(\mathbf{x}\right)^{(j)}\right\|_p d^{1-\frac{1}{p}}\\
			\le& \frac{2}{m} \sum_{r=1}^{m}\left(\sum_{t'=0}^{t}  \left\|\mathbf{w}_{3-y,r}^{(t'+1)} - \mathbf{w}_{3-y,r}^{(t')}\right\|_2 + \left\|\mathbf{w}_{3-y,r}^{(0)}\right\|_2\right)\bar{\zeta} d^{1-\frac{1}{p}}\\
			\overset{(d)}{\le}&\mathcal{O}\left(\left[t\frac{\eta}{m} C+\frac{\eta}{m}\sqrt{td}\sigma_n+\sqrt{d}\sigma_0\right]\bar{\zeta} d^{1-\frac{1}{p}}\right),
		\end{aligned}
	\end{equation}
	where $(a)$ is because ReLU$(\cdot)$ is $1$-Lipschitz continuous; 
	$(b)$ is due to the Cauchy-Schwarz inequality;
	$(c)$ is because of the H\"older's inequality;
	$(d)$ is due to Lemma \ref{lemma: chi-squared}.
 
		Similarly, for the term $A_6$, with probability at least $1-\exp\left(-\Omega(d)\right)$, we have we have
	\begin{equation}
		\begin{aligned}
			A_6 \ge& -\frac{1}{m} \sum_{r=1}^{m}\sum_{j=1}^{2}  \left|\left\langle \mathbf{w}_{y,r}^{(t+1)} ,\boldsymbol{\zeta}^{(t+1)}\left(\mathbf{x}\right)^{(j)}\right\rangle\right|\\
			\ge& -\frac{1}{m} \sum_{r=1}^{m}\sum_{j=1}^{2} \left\|\mathbf{w}_{y,r}^{(t+1)} \right\|_2\left\|\boldsymbol{\zeta}^{(t+1)}\left(\mathbf{x}\right)^{(j)}\right\|_2\\
			\ge& -\frac{2}{m} \sum_{r=1}^{m}\left(\sum_{t'=0}^{t}  \left\|\mathbf{w}_{y,r}^{(t'+1)} - \mathbf{w}_{y,r}^{(t')}\right\|_2 + \left\|\mathbf{w}_{y,r}^{(0)}\right\|_2\right)\bar{\zeta} d^{1-\frac{1}{p}}\\
			\ge& -\Omega\left(\left[t\frac{\eta}{m} C+\frac{\eta}{m}\sqrt{td}\sigma_n+\sqrt{d}\sigma_0\right]\bar{\zeta} d^{1-\frac{1}{p}}\right).
		\end{aligned}
	\end{equation}
	
	Combing with the bounds of $A_5,A_6$, with probability at least $1-\exp(-\tilde{\Omega}(d))$, we have
	\begin{equation}
		\begin{aligned}
			&\tilde{\Delta}_{3-y}^{(t+1)}\left(\mathbf{x}\right) - \tilde{\Delta}_y^{(t+1)}	\left(\mathbf{x}\right)\le\mathcal{O}\left(\left[t\frac{\eta}{m}C+\frac{\eta}{m}\sqrt{td}\sigma_n+\sqrt{d}\sigma_0\right]\bar{\zeta} d^{1-\frac{1}{p}}\right).
		\end{aligned}
	\end{equation}
	Then, with probability at least $1-\exp(-\tilde{\Omega}(d))$, we have
	\begin{equation}
		\begin{aligned}
			&\mathcal{L}^\text{adv}_{\mathcal{D}_{i,j}}\left(\mathbf{W}^{(t+1)}\right) - \mathcal{L}_{\mathcal{D}_{i,j}}\left(\mathbf{W}^{(t+1)}\right)\\
			\le& \mathbb{E}_{\left(\mathbf{x},y\right)\sim \mathcal{D}_{i,j}}\left[\Gamma\left(\tilde{\Delta}_{3-y}^{(t+1)}\left(\mathbf{x}\right)-\tilde{\Delta}_y^{(t+1)}\left(\mathbf{x}\right)\right)\left(\tilde{\Delta}_{3-y}^{(t+1)}\left(\mathbf{x}\right)-\tilde{\Delta}_y^{(t+1)}\left(\mathbf{x}\right)\right)\right]\\
			\le&\mathcal{O}\left(\left[t\frac{\eta}{m}C+\frac{\eta}{m}\sqrt{td}\sigma_n+\sqrt{d}\sigma_0\right]\bar{\zeta} d^{1-\frac{1}{p}}\right).
		\end{aligned}
	\end{equation}
	Combining with Theorem \ref{theorem: convergence} and setting parameters with Condition \ref{condition}, with probability at least $1-\exp(-\tilde{\Omega}(d))$, we have
	\begin{equation}
		\begin{aligned}
			\mathcal{L}^\text{adv}_{\mathcal{D}_{i,j}}\left(\mathbf{W}^{(T)}\right)
			\le& \bar{L}_{i,j} + \mathcal{O}\left(\left[\frac{T}{m} C+\frac{\sqrt{Td}}{m}\sigma_n+\sqrt{d}\sigma_0\right]\bar{\zeta} d^{1-\frac{1}{p}}\right).
		\end{aligned}
	\end{equation}
	This completes the proof.
\end{proof}

\section{Proof of Proposition \ref{proposition: pt ft}}
\begin{proof}
	We have
	\begin{align}
		&\sigma(\langle\tilde{\mathbf{w}}_{1,r},\mathbf{u}'_{1}\rangle) = C_1 \cos\theta \left\|\mathbf{u}_{1}'\right\|_2\left\|\mathbf{u}_{1}\right\|_2,\\
		&0 \le\sigma(\langle\tilde{\mathbf{w}}_{1,r},\boldsymbol{\xi}\rangle) \le C_3 \sigma_p^2,\\
		& \sigma(\langle\tilde{\mathbf{w}}_{2,r},\mathbf{u}'_{1}\rangle) = C_1\sin\theta \left\|\mathbf{u}'_{1}\right\|_2\left\|\mathbf{u}_{2}\right\|_2,\\
		&0 \le\sigma(\langle\tilde{\mathbf{w}}_{2,r},\boldsymbol{\xi}\rangle) \le C_3 \sigma_p^2,\\
		&\sigma(\langle\tilde{\mathbf{w}}_{1,r},\mathbf{u}'_{2}\rangle) = 0,\\
		& \sigma(\langle\tilde{\mathbf{w}}_{2,r},\mathbf{u}'_{2}\rangle) = C_1\cos\theta \left\|\mathbf{u}_{2}\right\|_2\left\|\mathbf{u}'_{2}\right\|_2.
	\end{align}
	
	Using the above inequalities (equalities), we have
	\begin{equation}
		\begin{aligned}
			\mathcal{L}_{\mathcal{D}_2}(\tilde{\mathbf{W}}) \le& -\frac{1}{2}\ln\left(\frac{\exp(C_1 \cos\theta \left\|\mathbf{u}'_{1}\right\|_2\left\|\mathbf{u}_{1}\right\|_2)}{\exp(C_1 \cos\theta \left\|\mathbf{u}'_{1}\right\|_2\left\|\mathbf{u}_{1}\right\|_2)+\exp(C_1\sin\theta \left\|\mathbf{u}'_{1}\right\|_2\left\|\mathbf{u}_{2}\right\|_2+ C_3\sigma_p^2)}\right)\\
			&-\frac{1}{2}\ln\left(\frac{\exp(C_1 \cos\theta \left\|\mathbf{u}'_{2}\right\|_2\left\|\mathbf{u}_{2}\right\|_2)}{\exp(C_1 \cos\theta \left\|\mathbf{u}'_{2}\right\|_2\left\|\mathbf{u}_{2}\right\|_2)+\exp( C_3\sigma_p^2)}\right).
		\end{aligned}
	\end{equation}
	Based on Theorem \ref{theorem: convergence} and $\left\|\mathbf{u}_1\right\|_2=\left\|\mathbf{u}_1\right\|_2=\left\|\mathbf{u}_1'\right\|_2 =\left\|\mathbf{u}_2'\right\|_2$, we have
	\begin{equation}
		\begin{aligned}
			&\mathcal{L}_{\mathcal{D}_\textnormal{ft}}(\mathbf{W}^{(T)})\le 
			\exp\!\left(\!-\Omega\left(\!\frac{\Lambda_i\left\|\mathbf{u}_{i}\right\|_2^2}{m}T\right)\!\right)\!\cdot\tilde{L}+\mathcal{O}\left(\frac{\sqrt{d}}{\sqrt{n}\Lambda_{i}}\right)+\mathcal{O}\left(\frac{m\sqrt{d}\sigma_n}{\Lambda_i\left\|\mathbf{u}_i\right\|_2}\right),
		\end{aligned}
	\end{equation}
	This completes the proof. 
\end{proof}

\end{document}